\numberwithin{equation}{section}
\definecolor{darkblue}{rgb}{0.0,0.0,0.65}
\definecolor{darkred}{rgb}{0.68,0.05,0.0}
\definecolor{darkgreen}{rgb}{0.0,0.29,0.29}
\definecolor{darkpurple}{rgb}{0.47,0.09,0.29}
\newcommand{\BVI}{\textsc{MOMA}}
\newcommand{\VIH}{\textsc{VI-Hoeffding}}
\newcommand{\VIB}{\textsc{VI-Bernstein}}
\newcommand{\plan}{\textsc{Planning}}
\newcommand{\DU}{\textsc{Dual-Update}}
\newcommand{\PDU}{\textsc{Projection-based-Dual-Update}}
\newcommand{\ODU}{\textsc{Projection-free-Dual-Update}}
\newcommand{\DODU}{\textsc{Double-Dual-Update}}
\DeclareMathOperator*{\argmax}{\arg\max}
\DeclareMathOperator*{\argmin}{\arg\min}
\newcommand{\dist}{\mathrm{dist}}
\newcommand{\vtheta}{\bm{\theta}}
\newcommand{\vx}{\mathbf{x}}
\newcommand{\vr}{\mathbf{r}}
\newcommand{\vV}{\mathbf{V}}
\newcommand{\vQ}{\mathbf{Q}}
\newcommand{\vW}{\mathbf{W}}
\newcommand{\paren}[1]{{\left( #1 \right)}}
\newcommand{\set}[1]{{\left\{ #1 \right\}}}
\newcommand{\mat}[1]{\ensuremath{\mathbf{#1}}}
\newcommand{\trans}{^{\top}}
\newcommand{\abs}[1]{|{#1}|}
\newcommand{\E}{\mathbb{E}}
\newcommand{\W}{\mathbb{W}}
\newcommand{\D}{\mathbb{D}}
\renewcommand{\P}{\mathbb{P}}
\newcommand{\Var}{\text{Var}}
\newcommand{\Phat}{\widehat{\P}}
\newcommand{\eps}{\epsilon}
\newcommand{\cO}{\mathcal{O}}
\newcommand{\tlO}{\mathcal{\tilde{O}}}
\newcommand{\R}{\mathbb{R}}
\newcommand{\B}{\mathbb{B}}
\newcommand{\V}{\mathbb{V}}
\newcommand{\X}{\mathbb{X}}
\newcommand{\Y}{\mat{Y}}
\newcommand{\cA}{\mathcal{A}}
\newcommand{\cB}{\mathcal{B}}
\newcommand{\cF}{\mathcal{F}}
\newcommand{\cL}{\mathcal{L}}
\newcommand{\cS}{\mathcal{S}}
\newcommand{\cV}{\mathcal{V}}
\newcommand{\cW}{\mathcal{W}}
\newcommand{\setto}{\leftarrow}
\newcommand{\up}[1]{\overline{#1}}
\newcommand{\low}[1]{\underline{#1}}
\newcommand{\NASH}{\textsc{Nash}}
\newcommand{\mg}{\mathrm{MG}}
\newcommand{\nlsum}{\sum\nolimits}
\newtheorem{theorem}{Theorem}
\newtheorem{lemma}[theorem]{Lemma}
\newtheorem{assumption}{Assumption}
\begin{document}

\title{Provably Efficient Algorithms for 
Multi-Objective Competitive RL}

\author{\name Tiancheng Yu \email{yutc@mit.edu}\\
   \addr{Massachusetts Institute of Technology}\\
   \name Yi Tian \email{yitian@mit.edu}\\
   \addr{Massachusetts Institute of Technology}\\
   \name Jingzhao Zhang \email{jzhzhang@mit.edu}\\
   \addr{Massachusetts Institute of Technology}\\
      \name Suvrit Sra \email{suvrit@mit.edu}\\
\addr{Massachusetts Institute of Technology}\\
}

\maketitle

\begin{abstract}
    We study multi-objective reinforcement learning (RL) where an agent's reward is represented as a vector. In settings where an  agent competes against opponents, its performance is measured by the distance of its average return vector to a target set. We develop statistically and computationally efficient algorithms to approach the associated target set. Our results extend Blackwell's approachability theorem~\citep{blackwell1956analog} to tabular RL, where strategic exploration becomes essential. The algorithms presented are adaptive; their guarantees hold even without Blackwell's approachability condition. If the opponents use fixed policies, we give an improved rate of approaching the target set while also tackling the more ambitious goal of simultaneously minimizing a scalar cost function. We discuss our analysis for this special case by relating our results to previous works on constrained RL. To our knowledge, this work provides the first provably efficient algorithms for vector-valued Markov games and our theoretical guarantees are near-optimal.
\end{abstract}

\section{Introduction}
What can a player expect to achieve in competitive games when pursuing multiple objectives? If the player has a single objective, the answer is clear from von Neumann's minimax theorem~\citep{neumann1928theorie}: the player can follow a fixed strategy to ensure that its cost is no worse than a certain threshold, the \emph{minimax value} of the game, no matter how the opponents play. But if the player has multiple objectives, the answer is less clear and it must define some tradeoffs. One important way to capture tradeoffs is to define a certain \emph{target set} of vectors, and then to ensure that player's vector of returns lies in this set. The player's performance can then be measured via the distance of its reward vector from the target set. In 1956, Blackwell showed that in a repeated game, the player of interest can make the distance of its average return to a target set small as long as this set satisfies a condition called \emph{approachability}~\citep{blackwell1956analog}. 

The approachability theorem applies to multi-objective games with a decision horizon of a \emph{single} time step. However, in many practical domains such as robotics, self-driving, video games, and recommendation systems, the decision horizons span \emph{multiple} time steps. For example, in a robot control task, we may hope the robot arm reaches a certain region in a 3D space; while, in self-driving, we may hope the car takes care of speed, safety and comfort simultaneously. In these problems, the state of the decision process transitions based on both the actions taken by the players and the unknown dynamics. Though a generalization (Assumption~\ref{ass:Approachability}) of Blackwell's approachability condition~\citet{blackwell1956analog} is relatively direct, efficient exploration and the need to learn the unknown transitions is what poses a challenge in the multiple time step setting.

This challenge motivates us to ask: \emph{How can a player approach a target set that satisfies a generalized notion of approachability?} We answer this question by modeling multi-objective competitive reinforcement learning (RL) as an online learning problem in a vector-valued Markov game (MG), for which we provide efficient algorithms as instances of a generic meta-algorithm that we propose. 

Going one step further, we can ask a more ambitious question: \emph{Can we minimize a scalar cost function while also satisfying approachability?} Our answer is affirmative if the opponents play fixed policies; equivalently, if the agent interacts with a fixed environment (without opponents), in which case the model reduces to a vector-valued Markov decision process (MDP). In this setting, the target set can be viewed as a set of constraints, and our results improve on the rich literature on constrained MDP in multiple aspects.

In Table~\ref{tab:placement} we give a comparison of different multi-objective RL settings. Our work can be seen as a generalization of both~\citep{blackwell1956analog} and~\citep{agrawal2014bandits} to cases with an $H$ step horizon. 

\begin{table}[t]
  \centering
  \caption{The settings of this work with reference to the literature} \label{tab:placement}
  \begin{tabular}{ccc}
    \toprule 
    & w/o opponents & w/ adversarial opponents \\ 
    \midrule
    single-state single-horizon & \makecell{constrained bandits \\ (e.g.,~\citep{agrawal2014bandits})} & \makecell{vector-valued games \\ (e.g.,~\citep{blackwell1956analog})} \\
    \midrule 
    multi-state $H$-horizon & \makecell{constrained MDPs \\ (e.g.,~\citep{brantley2020constrained}; \textbf{this work})} & \makecell{vector-valued Markov games \\ (\textbf{this work})} \\
    \bottomrule
  \end{tabular}
\end{table}

\noindent\textbf{Summary of our contributions.}
\begin{list}{{\tiny$\blacktriangleright$}}{\leftmargin=1em}
  \setlength{\itemsep}{-1pt}

  \item For online learning in vector-valued Markov games, we propose two provably efficient algorithms to approach a target set under a generic framework. Strategic exploration is essential to obtain statistical efficiency (Theorems~\ref{thm:basic-approachable} and~\ref{thm:OCO-delta-approachable}) for both algorithms. The second algorithm has the merit of being more computationally efficient.
  \item When the chosen target set is not approachable, both our algorithms adapt automatically. Concretely, we describe the guarantees (Theorems~\ref{thm:basic-delta-approachable} and~\ref{thm:OCO-delta-approachable}) of the algorithms using a notion of $\delta$-approachability (Assumption~\ref{ass:delta-Approachability}).
  \item For vector-valued MDPs, via a more dedicated design of the exploration bonus, we obtain a near-optimal rate of making the average reward vector approach (Theorem~\ref{thm:bernstein}) the target set. Moreover, under a mild assumption, we present a modified algorithm that can simultaneously minimize a convex cost function (Theorem~\ref{thm:satisfiable}). Comparing with existing results in constrained MDP, our bounds on regret and constraint violation are the sharpest with respect to their dependence on the parameters $S$, $A$, and $K$, where $S$ is the number of states, $A$ is the number of actions and $K$ is the number of episodes.
\end{list}

\subsection{Related Work}
\noindent\textbf{Blackwell's approachability.}
\citet{blackwell1956analog} initiated the study of multi-objective learning in repeated matrix games by introducing the notion of approachability and an algorithm to approach a given set. Using a dual formulation of the distance from a point to a convex cone, \citet{abernethy2011blackwell} show the equivalence of approachability problems and online linear optimization. \citet{shimkin2016online} further extends the equivalence to online convex optimization (OCO) via a dual formulation of the distance from a point to a convex set. Our primal and dual algorithms generalize respectively Blackwell's algorithm~\citep{blackwell1956analog} and the OCO-based algorithm~\citep{shimkin2016online} to Markov games.

\textbf{Learning in Markov games.}
Markov games, also known as stochastic games~\citep{shapley1953stochastic, littman1994markov}, are a general model for multi-agent reinforcement learning. In recent years, much attention has been given to learning in scalar-valued Markov games with unknown transitions. In the self-play setting~\citep{bai2020provable, xie2020learning, bai2020near, liu2020sharp}, the goal is to learn a Nash equlibrium with sample complexity guarantees. \citet{bai2020provable, xie2020learning, bai2020near} consider zero-sum Markov games while \citet{liu2020sharp} provide results for general-sum Markov games.
In the online setting~\citep{brafman2002r, xie2020learning, tian2020provably}, the goal is to achieve low regret in presence of an adversarial opponent. We also study the online setting, but in contrast, we consider vector-valued returns and the goal is to make the average return approach a given set.

\textbf{Online learning with constraints.}
Multi-objective RL is closely related to RL with constraints since satisfying the constraints is tantamount to having extra objectives. 
\citet{badanidiyuru2013bandits} study bandits with knapsacks, and \citet{agrawal2014bandits} study the more general setting with concave rewards and convex constraints that the method needs to approach. Beyond bandits, \citet{jenatton2016adaptive,yuan2018online} study online convex optimization with constraints given by convex functions.

\textbf{Constrained MDPs.}
For MDPs with linear constraints, \citet{efroni2020exploration, ding2020provably, qiu2020upper, brantley2020constrained} provide algorithms with both regret and total constraint violation guarantees. As a generalization of~\citep{agrawal2014bandits}, \citet{brantley2020constrained} also consider MDPs with convex constraints and concave rewards and discuss as a special case MDPs with knapsacks on \emph{all} episodes. \citet{chen2020efficient} formulate MDPs with knapsacks on \emph{each} episode as factored MDPs, to which the regret bounds of factored MDPs~\citep{osband2014near, tian2020towards, chen2020efficient} apply. See the discussion at the end of Section~\ref{sec:satisfiable} for a detailed comparison.

\textbf{Multi-objective RL with preference.}
More recently, \citet{wu2020accommodating} study single-agent multi-objective RL to accommodate potentially adversarial preference vectors. 
In contrast, we assume a potentially adversarial opponent that influences both the transition and the return vector.  Their goal also differs from ours in that they aim to maximize the cumulative rewards defined by the observed preference vectors in each episode. The preference vector in their setting is similar to the dual variable in our algorithm. Nonetheless, our dual variable is learned by an update procedure.

All of the aforementioned works on MGs or MDPs focus on the episodic setting. See, e.g.,~\citep{cheung2019non, singh2020learning}, for the studies of multi-objective or constrained RL in the nonepisodic setting.

\section{Background and Problem Setup}

In this section, we formulate the problem of two-player zero-sum Markov Games. We  control one of the players, whom we call the \emph{agent}. The other player is referred to as the \emph{adversary}. We use the two-player zero-sum condition for simplicity. We can handle multi-player general-sum games by considering the product of all the opponents' actions as an augmented action (an idea also recently exploited in~\citep{tian2020provably}). Now we are ready to explain how players interact and learn in the Markov game setup.

\subsection{Vector-valued Markov Games}

\textbf{Model.} Let $[N]:=\{1,2,\ldots,N\}$, and let $\Delta(\X)$ be the set of probability distribution on set $\X$. Then, an episodic two-player zero-sum vector-valued MG can be denoted by the tuple $\mg(\cS, \cA, \cB, \P, \vr, H)$, where 
\begin{list}{–}{\leftmargin=1.5em}
  \setlength\itemsep{-1pt}

\item $H$ is the number of steps in each episode,
\item $\cS$ is the state space, 
\item $\cA$ and $\cB$ are the action spaces of both players,
\item $\P$ is a collection of \emph{unknown} transition kernels $\{\P_h: \cS \times \cA \times \cB \to \Delta(\cS)\}_{h\in [H]}$, and
\item $\vr$ is a collection of \emph{known} $d$-dimensional return functions $\{\vr_h: \cS \times \cA \times \cB \to [0, 1]^d\}_{h\in [H]}$, where $d\ge 2$ is the \emph{dimensionality} of the MG. We assume known $\vr$ only for simplicity; learning $\vr$ poses no real difficulty--see e.g.,~\cite{azar2017minimax, jin2018q}.
\end{list}
Let $|\cdot|$ denote set cardinality. Then, we define the three key cardinalities $S := |\cS|$, $A := |\cA|$, and $B :=  |\cB|$.

\paragraph{Interaction protocol.}
Without loss of generality, in each episode the MG starts at a \emph{fixed} initial state $s_1 \in \cS_1$. At each step $h\in [H]$, the two players observe the state $s_h\in \cS$ and simultaneously take actions $a_h\in \cA$, $b_h\in \cB$. This decision is specified by the players' policies $\mu_h(s_h) \in \Delta(\cA)$ and $\nu_h(s_h) \in \Delta(\cB)$. Then the environment transitions to the next state $s_{h+1} \sim \P_{h}(\cdot\vert s_h, a_h, b_h)$ and outputs the return $\vr_{h}(s_h, a_h, b_h)$. Let $\cF_h^k$ be the filtration generated by all these random variables until the $k$-th episode and $i$-th step.

\paragraph{Value functions.}
Analogous to usual MDPs, for a policy pair $(\mu, \nu)$, step $h\in [H]$, state $s\in \cS$ and actions $a\in \cA, b\in \cB$, we define the State- and Q-value functions as:
\begin{align*}
    &\vV_{h}^{\mu, \nu}(s) := \E_{\mu, \nu}\Bigl[\nlsum_{l=h}^{H} \vr_{l}(s_{l}, a_{l}, b_{l}) \vert s_h = s\Bigr], \\
    &\vQ_{h}^{\mu, \nu}(s, a, b) :=\E_{\mu, \nu}\Bigl[ \nlsum_{l=h}^{H} \vr_{l}(s_{l}, a_{l}, b_{l}) \vert s_h = s, a_h = a, b_h = b \Bigr].
\end{align*}
For compactness of notation, for any $\vV \in [0,H]^{dS}$ and $\vQ \in [0, H]^{dSAB}$ we introduce the operators $\P$ and $\D$ by
\begin{align*}
    \P_h [\vV](s, a, b) := \E_{s'\sim \P_h(\cdot\vert s, a, b)}[\vV(s')], \,\,\,\, \D_{\mu, \nu}[\vQ](s) := \E_{a\sim \mu(\cdot\vert s), b\sim \nu(\cdot\vert s)}[\vQ(s, a, b)].
\end{align*}
With this notation we obtain the Bellman equations:
\begin{align*}
   \vV_{h}^{\mu, \nu}(s) = \D_{\mu_h, \nu_h} [\vQ_{h}^{\mu, \nu}](s), \,\,\,\, \vQ_{h}^{\mu, \nu}(s, a, b) = (r_h + \P_{h}[\vV_{h+1}^{\mu, \nu}])(s, a, b).
\end{align*}
For convenience define $\vV_{H+1}^{\mu, \nu}(s) = 0$ for any $s\in \cS$.

\paragraph{Satisfiability.} 
Let $\W^\star$ denote a desired target set. Henceforth, we assume that $\W^\star$ is a is closed and convex subset of $[0,H]^d$. Let $\hat{\vV}^k$ be the cumulative return received by the agent in the $k$th episode and $\vW^K:=\frac1K\sum_{k=1}^K \hat{\vV}^k$ be the average for the first $K$ episodes. The goal of the agent is to guarantee that $\vW^K \in \W^{\star}$. This goal is achievable under the following satisfiability assumption.
\begin{assumption}[Satisfiability]
   \label{ass:satisfiability}
    Given a vector-valued MG $\mg(\cS, \cA, \cB, \P, \vr, H)$,  we say a closed and convex target set $\W^{\star}$ is satisfiable, if there exists a policy $\mu$ such that for any policy $\nu$, the vector value $\vV_{1}^{\mu, \nu}(s_1) \in \W^{\star}$. 
\end{assumption}

Informally, satisfiability means that the agent can ensure the cumulative return is contained in the target set, regardless of the opponent's action. A weaker notion is if upon knowing the opponent's policy the agent can satisfy the target set. Thus, we call it \emph{Response-satisfiability}.

\begin{assumption}[Response-satisfiability]
   \label{ass:response-satisfiability}
   Given a vector-valued MG $\mg(\cS, \cA, \cB, \P, \vr, H)$, we say a closed and convex target set $\W^{\star}$ is response-satisfiable, if for any policy $\nu$,there exists a policy $\mu$ such that $\vV_{1}^{\mu, \nu}(s_1) \in \W^{\star}$. 
\end{assumption}

Both notions coincide in a scalar-valued zero-sum game, as a result of von Neumann's minimax theorem. However, for vector-valued games, satisfiability is strictly stronger. Indeed, satisfiability fails even in some simple games while response-satisfiability holds. See the discussion in Section 2.1 of \cite{abernethy2011blackwell} for a concrete example.

Without satisfiability, we cannot expect to reach the target set $\W^{\star}$. Luckily, approaching a response-satisfiable set $\W^{\star}$ on average is still possible. To that end, we can reduce the vector-valued MG to a scalar-valued one, as shown below.
 
\subsection{Scalar Reduction and Minimax Theorem} 
We can convert a vector-valued MG to a scalar-valued one by replacing the return vector $\vr$ by the scalar $\vr \cdot \vtheta$, where $\vtheta \in \mathbb{R}^d$ is a fixed vector. Importantly, we will treat $\vtheta$ as a dual variable in our algorithms. For the resulting MG we can define $V_{h}^{\mu, \nu}(\vtheta,s)$ and $Q_{h}^{\mu, \nu}(\vtheta,s,a,b)$ similarly.

We call the two players the ``min-player'' and the ``max-player''\footnote{To accommodate conventions in Approachability, we make the agent the min-player (usually the max-player in MG literature).}. Let $\nu$ be a policy of the max-player. There exists a \emph{best response} $\mu^{\dagger} $ to $\nu$, such that for any step $h\in [H]$ and state $s\in \cS$ we have $V_{h}^{\mu^{\dagger}, \nu}(s) = V_{h}^{\dagger, \nu}(s) := \min_{\mu} V_{h}^{\mu, \nu}(s)$. A symmetric discussion applies to the best response to a min-player's policy. The following minimax equality holds: for any step $h\in [H]$ and state $s\in \cS$, 
\begin{align*}
    \min_{\mu} \max_{\nu} V_{h}^{\mu, \nu}(\vtheta,s) = \max_{\nu} \min_{\mu} V_{h}^{\mu, \nu}(\vtheta,s).
\end{align*}
A policy pair $(\mu^{\star}, \nu^{\star}) $ that achieves the equality is known as a \emph{Nash equilibrium}. We use $V^{\star}_h(\vtheta,s):=V_h^{\mu^{\star},\nu^{\star}}(\vtheta,s)$ to denote the value at the Nash equilibrium, which is unique for the MG and we call the \emph{minimax value} of the MG. 

\paragraph{Approachability.} Scalarizing a vector-valued MG is equivalent to considering a half-space that contains $\W^{\star}$ instead of $\W^{\star}$ itself. If we can reach $\W^{\star}$, then we can reach any half-space that contains $\W^{\star}$. Therefore, satisfiability of half-spaces that contain $\W^{\star}$ is weaker than satisfiability of $\W^{\star}$ itself. We state this condition formally below.

\begin{assumption}[Approachability]
  \label{ass:Approachability}
  Given a vector-valued MG $\mg(\cS, \cA, \cB, \P, \vr, H)$, we say a closed and convex target set $\W^{\star}$ is approachable, if for any vector $\vtheta$,
  $$
  \underset{\vx\in \W^{\star}}{\max}\ \ \vtheta\cdot \vx \ge V_1^{\star}\left(\vtheta, s_1 \right). 
  $$
\end{assumption}

Assumption~\ref{ass:Approachability} is also known as ``half-space satisfiability'' in the literature~\citep{blackwell1956analog}. Indeed, it is equivalent to response-satisfiability (See Lemma 7 in \cite{abernethy2011blackwell}. The proof therein carries over for MGs directly, since it only depends on the geometric property of $\W^{\star}$.). We will only use this approachability condition in the sequel; it results in no loss of generality, and moreover, it is easier to extend to the non-approachable case.

So far we assumed that the target set $\W^{\star}$ is approachable. In practice, this assumption may or may not hold. In both cases, we can still seek to minimize the Euclidean distance $\dist(\vW^K, \W^{\star})$ of the average return to the target set. This is analogous to the agnostic learning setting for supervised learning. Toward this end, the following condition is useful.
\begin{assumption}[$\delta$-Approachability]
  \label{ass:delta-Approachability}
  Given a vector-valued MG $\mg(\cS, \cA, \cB, \P, \vr, H)$, we say a closed and convex target set $\W^{\star}$ is $\delta$-approachable, if for any vector $\vtheta$,
  $$
  \underset{\vx\in \W^{\star}}{\max}\ \ \vtheta\cdot \vx + \delta \ge V_1^{\star}\left(\vtheta, s_1 \right). 
  $$
\end{assumption}
Equivalently, this means the $\delta$-expansion of $\W^{\star}$ is approachable. So, a  larger $\delta$ means $\W^\star$ is harder to approach. 

\section{Multi-objective Meta-algorithm}
\label{sec:basic}
Equipped with the generalized concepts of approachability for vector-valued MGs, we are ready to present our algorithmic framework. To make the exposition modular, we first present \textbf{M}ulti-\textbf{O}bjective \textbf{M}eta-\textbf{A}lgorithm (\BVI{}), our generic learning algorithm that is displayed as Algorithm~\ref{algorithm:Blackwell-VI}. Subsequently, we explain  its key components.

\begin{algorithm}[h]
  \caption{Multi-objective Meta-algorithm (\BVI{})}
  \label{algorithm:Blackwell-VI}
  \begin{algorithmic}[1]
    \State {\bfseries Initialize:} for any $(s, a, b, h,s')$,
    $Q_{h}(s,a, b)\setto \sqrt{d}H$, $N_{h}(s,a, b)\setto 0$, $N_h(s,a,b,s')\setto 0$, $\vW \setto \mathbf{0}$, $\vtheta \setto$ any unit verctor, $\hat{\P} \setto$ any probability distribution.
    \For{Episode $k=1,\dots,K$}
      \State $\pi \setto \plan{}(\vtheta,\vr,N, \hat{\P})$ \label{line:planning}
      \State $\hat{\vV} \setto \mathbf{0}$. \label{line:model_update_start}
      \For{step $h=1,\dots, H$} 
        \State take action $(a_h, \cdot) \sim  \pi_h(\cdot, \cdot| s_h)$. 
        \State Observe opponent's action $b_h\sim \nu_h(s_h)$ and next state $s_{h+1}$. 
        \State $\hat{\vV} \setto \hat{\vV} + \vr_h(s_h, a_h, b_h)$.
        \State $N_{h}(s_h, a_h, b_h)\setto N_{h}(s_h, a_h, b_h) + 1$.
        \State $N_h(s_h, a_h, b_h, s_{h+1}) \setto N_h(s_h, a_h, b_h, s_{h+1}) + 1$
        \State $\hat{\P}_h(\cdot|s_h, a_h, b_h)\setto \frac{N_h(s_h, a_h, b_h, \cdot)}{N_h(s_h, a_h, b_h)}$.
      \EndFor
      \State $\vW \setto ((k-1)\vW+\hat{\vV})/k$. \label{line:model_update_end}
      \State $\vtheta \setto \DU{}(\vW,\W^{\star},\hat{\vV})$ \label{line:dual_update} 
    \EndFor
  \end{algorithmic}
\end{algorithm}

MOMA is partitioned into into three components:
\begin{list}{–}{\leftmargin=1.5em}
   \setlength\itemsep{0em}
\item \textbf{Planning} (Line 3): In each episode, we convert the vector-valued MG into a scalar-valued one by projecting onto the direction specified by the dual variable $\vtheta$ and by computing the policy $\pi$.
\item \textbf{Model Update} (Line 4 to 13): We accumulate the (vector-valued) return in each episode in $\hat{\vV}$, and $\vW$ is the average cumulative return. Then, we update the empirical estimators of the transition kernel.
\item \textbf{Dual Update} (Line 14): Finally, we need to determine which direction we want to project the vector-valued MG onto in the next episode.
\end{list}

Notice that $\pi$ actually defines policies for both players, but we only execute it for the agent.  Let $\mu_h(\cdot| s_h)$ and $\omega _h(\cdot| s_h)$ be the marginal distributions of $\pi_h(\cdot, \cdot| s_h)$. Then action $a_h$ is indeed sampled from the marginal $\mu_h(\cdot| s_h)$, while $b_h$ is sampled from $\nu_h(\cdot| s_h)$, which is not necessarily equal to $\omega _h(\cdot| s_h)$. Using this notation, we can observe that $\hat{\vV}$ is unbiased in the sense that 
$\E[\mathbf\vtheta \cdot  \hat{\vV}] = V_1^{\mu ,\omega}\left(\vtheta, s_1 \right) $. 

The idea behind Algorithm~\ref{algorithm:Blackwell-VI} is simple: In each episode we fix a direction and try to approach the target set $\W^{\star}$. In this way, we can reduce the problem to a scalar-valued MG and benefit from existing work on scalar-valued MGs~\cite{bai2020provable, xie2020learning, bai2020near,liu2020sharp}. The implementation of model updates is described in Algorithm~\ref{algorithm:Blackwell-VI}. The other two sub-procedures vary slightly in different settings as follows:
\begin{list}{–}{\leftmargin=1.5em}
  \setlength\itemsep{0em}
  \item \textbf{\plan{}}: A planning algorithm to determine the policy $\pi$ based on the current estimated transition kernel $\hat{\P}$. For MGs we will use \VIH{} (Algorithm~\ref{alg:VI-Hoeffding}). For MDPs, we can design a finer \VIB{} (Algorithm~\ref{alg:VI-Bernstein}) to achieve a sharper convergence rate. In Line 11 of \VIH{}, we use \NASH{} to denote computing the minimax policy w.r.t. a \emph{matrix} game, which is standard in model-based method for MGs \cite{bai2020provable,xie2020learning, liu2020sharp}. 
  \item \textbf{\DU{}}: A dual update algorithm to update the variable $\vtheta$, which  describes the direction to approach $\W^{\star}$ in the next episode. We propose two different candidates: (\ref{equ:PDU}) and  (\ref{equ:ODU}) in the following two sections. A variant of \ODU{}, \DODU{} is proposed in Section~\ref{sec:satisfiable} to simutaneously optimize a cost function.
\end{list}

\begin{algorithm}[h]
  \caption{VI-Hoeffding ({\VIH{}})}
  \label{alg:VI-Hoeffding}
  \begin{algorithmic}[1]
    \For{step $h=H,H-1,\dots,1$} \label{line:VI_start}
      \For{$(s, a, b)\in\cS\times\cA\times \cB$}
        \State $t \setto N_{h}(s, a, b)$.
        \If{$t>0$}
          \State $r_h(s,a,b)=\vtheta \cdot \vr_h(s,a,b)$;
          \State $\beta \setto c\sqrt{\min\{d,S\}H^2d\iota/t}$. 
          \State $Q_{h}(s, a, b)\setto \max\{(r_h +
          \widehat{\P}_{h} V_{h+1})(s, a, b) - \beta, -\sqrt{d}H\}$. 
        \EndIf
      \EndFor
      \For{$s \in \cS$}
        \State $\pi_h(\cdot, \cdot|s) \setto \NASH (Q_h(s, \cdot, \cdot))$. 
        \State $V_h(s) \leftarrow (\D_{\pi_h}Q_h)(s)$. 
      \EndFor
    \EndFor
  \end{algorithmic}
\end{algorithm}
\section{Projection-based Dual Update}
\label{sec:projection}
We begin with the most intuitive way to choose the dual variable: follow the direction that minimizes the distance of a candidate vector $\vW$ to the target set $\W^{\star}$:  
\begin{equation}
  \label{equ:PDU}
  \vtheta \setto \begin{cases}
    \frac{\vW-\Pi _{\W^{\star}}\left( \vW \right) }{\left\|  \vW-\Pi _{\W^{\star}}\left( \vW \right) \right\|_2 }, \text{if}\ \vW \notin \W^{\star},\\
    \text{any unit vector}, \,\, \text{otherwise}.
  \end{cases}
  \tag{\PDU{}}
\end{equation}
To find this direction, we need to compute the orthogonal projection onto $\W^{\star}$, thus we call it \PDU{}. 

To give theoretical guarantees, we will prove upper bounds on the Euclidean distance from our average cumulative return in the first $K$ episodes $\vW^K$ to the target set $\W^{\star}$. If $\W^{\star}$ is approachable, $\dist(\vW^K,\W^{\star})$ will converge to zero.

\begin{theorem}
  \label{thm:basic-approachable}
  Following \BVI{} with \VIH{} (Algorithm~\ref{alg:VI-Hoeffding}) for \plan{} and \PDU{} for \DU{}, if $\W^\star$ is approachable, with probability $1-p$,
  \begin{align*}
    \dist(\vW^K,\W^{\star}) \le \cO\bigl( \sqrt{\min\{d,S\}dH^4SAB\iota/K}\bigr),
  \end{align*}
  where $\iota =\log(SABKH/p)$.
\end{theorem}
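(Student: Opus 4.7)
The plan is to combine a Blackwell-style potential recursion with a per-episode regret bound for \VIH{} applied to the scalarized Markov game. I first set $\vx^{k-1} := \Pi_{\W^\star}(\vW^{k-1})$ and $g(\vW) := \dist(\vW,\W^\star)$. When $\vW^{k-1}\notin\W^\star$, \PDU{} picks $\vtheta^k = (\vW^{k-1}-\vx^{k-1})/g(\vW^{k-1})$, and the fact that $\vx^{k-1}\in\W^\star$ gives $g(\vW^k)^2 \le \|\vW^k - \vx^{k-1}\|^2$. Plugging $\vW^k = \tfrac{k-1}{k}\vW^{k-1} + \tfrac{1}{k}\hat{\vV}^k$ into this bound and multiplying by $k^2$ produces the recursion
\[
\phi_k \le \phi_{k-1} + 2\sqrt{\phi_{k-1}}\,z_k + 4dH^2,\qquad \phi_k := k^2\,g(\vW^k)^2,\quad z_k := \vtheta^k\!\cdot(\hat{\vV}^k-\vx^{k-1}),
\]
where $4dH^2$ bounds $\|\hat{\vV}^k - \vx^{k-1}\|^2$ since both vectors lie in $[0,H]^d$.

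Next I would relate $z_k$ to the regret of the scalarized MG. By projection orthogonality, $\vtheta^k\cdot\vy \le \vtheta^k\cdot\vx^{k-1}$ for every $\vy\in\W^\star$, so approachability (Assumption~\ref{ass:Approachability}) gives $V_1^\star(\vtheta^k,s_1)\le\vtheta^k\cdot\vx^{k-1}$. Combined with $\E[\vtheta^k\cdot\hat{\vV}^k\mid\cF_H^{k-1}]=V_1^{\mu^k,\nu^k}(\vtheta^k,s_1)\le\max_\nu V_1^{\mu^k,\nu}(\vtheta^k,s_1)$, this shows the conditional mean of $z_k$ is at most the per-episode regret $R_k:=\max_\nu V_1^{\mu^k,\nu}(\vtheta^k,s_1)-V_1^\star(\vtheta^k,s_1)\ge 0$ of the min-player. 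A standard optimistic analysis of \VIH{}, with bonus $\beta=c\sqrt{\min\{d,S\}H^2 d\iota/t}$ tailored to scalar rewards in $[-\sqrt{d},\sqrt{d}]$, then delivers $\nlsum_{k=1}^K R_k\le\tlO(\sqrt{\min\{d,S\}\,dH^4 SAB K\iota})$ with high probability; the $\min\{d,S\}$ factor comes from a covering of unit directions $\vtheta$, since the effective dimension of the realisable value-function differences is at most $\min\{d,S\}$. Decomposing $z_k=\E[z_k\mid\cF_H^{k-1}]+\xi_k$ with $|\xi_k|\le 2\sqrt{d}H$ and writing $\Phi:=\max_{k\le K}\sqrt{\phi_k}$, iterating the recursion up to any $k^\star\le K$ yields
\[
\Phi^2 \le 4dH^2 K + 2\Phi\nlsum_{k=1}^K R_k + 2\Phi\cdot\tlO(\sqrt{dH^2 K\iota}),
\]
once Azuma's inequality combined with $\sum_k \phi_{k-1}\le K\Phi^2$ bounds $\sum_k\sqrt{\phi_{k-1}}\xi_k$ uniformly in $k^\star$ via a maximal inequality. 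Solving this quadratic in $\Phi$ gives $\Phi\le\tlO(\sqrt{\min\{d,S\}\,dH^4 SAB K\iota})$, and dividing by $K$ produces the claimed bound on $\dist(\vW^K,\W^\star)$.

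The main obstacle is the regret bound of \VIH{} in Step~2: we need a guarantee that is uniform over the adversarially chosen direction $\vtheta^k$ while paying only a $\min\{d,S\}$ factor, which requires a covering argument in the dual variable together with a careful Hoeffding-type concentration for scalar rewards of range $2\sqrt{d}$. A subtler point in Step~3 is the closure of the recursion: because the martingale noise is weighted by $\sqrt{\phi_{k-1}}$, a naive Azuma bound without first passing to the supremum $\Phi=\max_k\sqrt{\phi_k}$ would cost an additional $\sqrt{K}$ factor and yield only a $K^{-1/4}$ rate; the quadratic-inequality closure is what restores the correct $K^{-1/2}$ dependence.
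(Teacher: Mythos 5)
Your overall architecture is the same as the paper's: the potential recursion $\dist(\vW^k,\W^{\star})^2\le\|\vW^k-\Pi_{\W^{\star}}(\vW^{k-1})\|_2^2$ expanded via $\vW^k=\tfrac{k-1}{k}\vW^{k-1}+\tfrac{1}{k}\hat{\vV}^k$, projection orthogonality plus Assumption~\ref{ass:Approachability} to turn the cross term into a scalarized regret, a covering argument over dual directions to obtain the $\min\{d,S\}d$ factor in the bonus, and a closure of the weighted recursion. On the last point the paper closes by induction on $k$ with an explicit rate hypothesis and a union bound over $[K]$; your supremum-plus-quadratic-inequality closure is an equivalent piece of bookkeeping, provided you invoke a self-normalized or predictable-weight martingale inequality (as you note) rather than plain Azuma, since the weights $\sqrt{\phi_{k-1}}$ are random.

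There is, however, one step that would fail as written. You upper-bound the conditional mean of $z_k$ by $R_k:=\max_\nu V_1^{\mu^k,\nu}(\vtheta^k,s_1)-V_1^{\star}(\vtheta^k,s_1)$ and assert that a standard optimistic analysis of \VIH{} gives $\sum_{k=1}^K R_k\le\tlO(\sqrt{\min\{d,S\}dH^4SABK\iota})$. This worst-case duality gap is not controlled by the observed data: the optimistic decomposition bounds $V_1^{\mu^k,\dagger}-V_1^{k}$ by an expectation of bonuses along trajectories generated by $(\mu^k,\nu^{\dagger,k})$, whereas the pigeonhole argument only shrinks bonuses along the trajectories actually generated by $(\mu^k,\nu^k)$. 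If the adversary persistently steers play away from part of the state space, $\sum_k R_k$ can be $\Omega(K)$. The fix is simply to drop the maximum over $\nu$: the conditional mean of $z_k$ is exactly $V_1^{\mu^k,\nu^k}(\vtheta^k,s_1)-\vtheta^k\cdot\vx^{k-1}\le (V_1^{\mu^k,\nu^k}-V_1^{k})(s_1)$ by approachability plus optimism (Lemma~\ref{lem:ULCB}), and this is the quantity the paper decomposes along the observed trajectory (Lemma~\ref{lem:regret-decomposition}) and bounds by $\sum_h(\beta_h^k+\xi_h^k+\zeta_h^k)$. With that substitution your argument goes through and coincides with the paper's.
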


The approachability condition (Assumption~\ref{ass:Approachability}) is standard in the literature \cite{blackwell1956analog}. However in practice, the desired target set $\W^\star$ may rarely also happen to be approachable (since it is chosen to meet the needs of an application, not to meet our demands on approachability). In this case, one may be unable to guarantee $\dist( \vW^K,\W^{\star} )$ converges to zero, but can only minimize the distance. A natural way to model this scenario is to assume $\W^{\star}$ is $\delta$-approachable, whence the following Theorem~\ref{thm:basic-delta-approachable} applies.

\begin{theorem}
  \label{thm:basic-delta-approachable}
  If we use \VIH{} (Algorithm~\ref{alg:VI-Hoeffding}) for \plan{} and (\ref{equ:PDU}) for \DU{} in \BVI{}, and if $W^{*}$ is $\delta$-approachable, then with probability $1-p$,
  \begin{align*}
    \dist \left( \vW^K,W^* \right) \le \delta + \cO\left( \sqrt{\min\{d,S\}dH^4SAB\iota/K}\right)
  \end{align*}
  where $\iota =\log \left( SABKH/p \right) $.
\end{theorem}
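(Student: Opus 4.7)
The plan is to reuse the proof of Theorem~\ref{thm:basic-approachable} essentially verbatim, tracing the one place where approachability enters and replacing it by $\delta$-approachability. Let $\vtheta^k$ denote the dual variable at episode $k$ and let $\Pi^{k-1} := \Pi_{\W^\star}(\vW^{k-1})$; by (\ref{equ:PDU}) and the first-order optimality condition for Euclidean projection onto a closed convex set, $\vtheta^k \cdot \vx \le \vtheta^k \cdot \Pi^{k-1}$ for every $\vx \in \W^\star$, so $\max_{\vx \in \W^\star} \vtheta^k \cdot \vx = \vtheta^k \cdot \Pi^{k-1}$. The only use of Assumption~\ref{ass:Approachability} in the proof of Theorem~\ref{thm:basic-approachable} is to deduce $V_1^\star(\vtheta^k, s_1) \le \vtheta^k \cdot \Pi^{k-1}$; under Assumption~\ref{ass:delta-Approachability} this bound weakens by exactly $\delta$, so everywhere a ``drift'' of zero entered the old analysis, a drift of $\delta$ enters the new one.

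Concretely, I would define $X_k := \vtheta^k \cdot (\hat{\vV}^k - \Pi^{k-1})$. By the unbiasedness of $\hat{\vV}^k$ and the \VIH{} planning guarantee (which against any opponent policy $\nu^k$ gives $V_1^{\mu^k, \nu^k}(\vtheta^k, s_1) \le V_1^\star(\vtheta^k, s_1) + \mathrm{err}_k$ with $\sum_{k=1}^K \mathrm{err}_k = \cO(\sqrt{\min\{d, S\} d H^4 SAB \iota K})$, as in the proof of Theorem~\ref{thm:basic-approachable}), I obtain $\E[X_k \mid \cF_1^{k-1}] \le \delta + \mathrm{err}_k$. Splitting $X_k$ into a bounded zero-mean martingale increment $Y_k$ (with $|Y_k| = \cO(\sqrt{d} H)$) and a predictable drift $Z_k \le \delta + \mathrm{err}_k$, an application of Azuma--Hoeffding yields $\sum_k X_k \le K\delta + \cO(\sqrt{\min\{d,S\} d H^4 SAB \iota K})$ with probability at least $1 - p$.

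Next I would run the Blackwell-style potential argument on $D_k := \dist^2(\vW^k, \W^\star)$. Using $k \vW^k = (k-1)\vW^{k-1} + \hat{\vV}^k$, the identity $\vW^{k-1} - \Pi^{k-1} = \sqrt{D_{k-1}} \, \vtheta^k$, and $\|\hat{\vV}^k - \Pi^{k-1}\|^2 \le dH^2$, one gets the recursion
\begin{align*}
k^2 D_k - (k-1)^2 D_{k-1} \le 2(k-1)\sqrt{D_{k-1}}\, X_k + dH^2.
\end{align*}
Telescoping from $k=1$ to $K$, plugging in the high-probability bound on $\sum_k X_k$, and solving the resulting quadratic inequality in $K\sqrt{D_K}$ (after maximizing $k\sqrt{D_k}$ over $k \le K$) produces the desired $\delta + \cO(\sqrt{\min\{d, S\} d H^4 SAB \iota/K})$ bound.

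The main obstacle is the last step: the $K\delta$ drift enters multiplied by $\max_{k \le K} k\sqrt{D_k}$, which itself may be as large as $K\sqrt{D_K}$, so the quadratic in $K\sqrt{D_K}$ must be inverted with care to keep the coefficient of $\delta$ equal to one. A cleaner alternative is to observe that $\delta$-approachability of $\W^\star$ is equivalent to approachability of the $\delta$-expansion $\W^\star_\delta := \W^\star + \delta B_2$, that \PDU{} on $\W^\star$ produces the same dual directions as \PDU{} on $\W^\star_\delta$ whenever $\vW^{k-1} \notin \W^\star_\delta$, and that $\dist(\vW^K, \W^\star) \le \dist(\vW^K, \W^\star_\delta) + \delta$ by the triangle inequality; this reduction to Theorem~\ref{thm:basic-approachable} yields the coefficient one on $\delta$ by construction.
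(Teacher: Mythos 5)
Your first route is essentially the paper's own proof: the same squared-distance recursion for $D_k=\dist^2(\vW^k,\W^\star)$, the same identification of the single place where approachability enters (lower-bounding $\vtheta^k\cdot\Pi_{\W^\star}(\vW^{k-1})=\max_{\vx\in\W^\star}\vtheta^k\cdot\vx$ by $V_1^\star(\vtheta^k,s_1)-\delta$), and the same optimism-plus-martingale control of $\sum_k X_k$. The one step you flag but do not carry out --- inverting the quadratic so that $\delta$ keeps coefficient exactly one even though the drift $K\delta$ is multiplied by $\tfrac{k-1}{K}\sqrt{D_{k-1}}$ --- is precisely where the paper does its remaining work: it runs an induction with hypothesis $\dist(\vW^k,\W^\star)\le\delta+c_0\sqrt{\min\{d,S\}dH^4SAB\iota/k}$ for $k\le K-1$, expands the new $\delta$-dependent cross terms into $(A)+(B)+(C)$, and verifies that $(K-1)\delta^2+\cO\bigl(\delta\sqrt{\min\{d,S\}dH^4SAB\iota K}\bigr)$ is absorbed by $K\bigl(\delta+c_0\sqrt{\min\{d,S\}dH^4SAB\iota/K}\bigr)^2$; so if you follow this route you should make that induction explicit rather than leaving it at ``with care.'' Your second route is a genuinely different, and cleaner, argument that the paper does not take: since $\Pi_{\W^\star_\delta}(\vW)$ lies on the segment from $\Pi_{\W^\star}(\vW)$ to $\vW$ whenever $\dist(\vW,\W^\star)>\delta$, the dual directions produced by \PDU{} on $\W^\star$ are a valid instantiation of \PDU{} on the approachable expansion $\W^\star_\delta$, so Theorem~\ref{thm:basic-approachable} applies to $\W^\star_\delta$ and the triangle inequality delivers the coefficient one on $\delta$ by construction. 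The caveats are cosmetic: $\W^\star_\delta\not\subseteq[0,H]^d$, so the bound $\|\hat{\vV}^k-\Pi_{\W^\star_\delta}(\vW^{k-1})\|_2^2\le dH^2$ in that proof degrades to $\cO(dH^2+\delta^2)$, harmless since one may assume $\delta\le 2\sqrt{d}H$ without loss of generality; and when $\vW^{k-1}\in\W^\star_\delta\setminus\W^\star$ the algorithm's $\vtheta^k$ falls under the ``any unit vector'' branch of \PDU{} for $\W^\star_\delta$, where the proof requires nothing of $\vtheta^k$. The reduction buys a shorter proof and makes the additive role of $\delta$ transparent; the paper's induction is more self-contained and is the template it reuses verbatim in the proof of Theorem~\ref{thm:bernstein}.
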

\textbf{Remark.} Although we assume $\W^{\star}$ is $\delta$-approachable, the algorithm does not need to know $\delta$. Instead, we just run the same algorithm and the guarantee is adaptive.  
 
\paragraph{Rationale behind the criterion.} When characterizing the performance of our method, we choose to compete with $\delta$, the ``non-approachability gap''. This choice is simple and similar to the notion of regret used in scalar-valued MGs \cite{xie2020learning,tian2020provably}. One may aim to be more ambitious: compete with the best response in hindsight, as in \cite{mannor2014approachability} for the bandit (single-horizon) setting. Unfortunately, such a choice is not computationally feasible for MGs. It is computationally hard even for scalar-valued MGs; see \cite{bai2020near} for an exponential lower bound.

\section{Projection-free Dual Update}
\label{sec:OCO}

The per-iteration computational bottleneck of \PDU{} is to compute the projection onto $\W^{\star}$, which requires solving a quadratic program and can be computationally demanding. However, if we can find $\argmax_{\vx\in \W^{\star}} \vtheta\cdot \vx$ efficiently (e.g., when $\W^{\star}$ is a polytope), then we can develop a computation-friendly dual update based on online convex optimization (OCO) techniques~\citep{abernethy2011blackwell, shimkin2016online}. 

To show the intuition behind \ODU, we proceed via Fenchel duality. Consider a convex, closed, 1-Lipschitz function $f:\left[ 0,H \right] ^d \rightarrow \mathbb{R}$. Its Fenchel conjugate is
$$f^*\left( \vtheta \right) :=\underset{\vx\in X}{\max}\left\{ \vtheta\cdot \vx-f\left( \vx \right) \right\} .
$$

Then $f^*$ is $\sqrt{dH^2}$-Lipschitz by Corollary 13.3.3 in \cite{rockafellar1970convex}. Fenchel duality implies
\begin{equation}
\label{equ:duality}
   f\left( \vx \right) =\underset{\left\| \vtheta \right\| \le 1}{\max}\left\{ \vtheta\cdot \vx-f^*\left(\vtheta \right) \right\} .
\end{equation}

In particular, if $f(\vx) = \dist( \vx,\W^{\star})$, its Fenchel dual is $f^*(\vtheta) = \max_{\vx\in \W^{\star}} \vtheta\cdot \vx$ and its subdifferential is $\partial f^*\left( \vtheta \right) = \argmax_{\vx\in \W^{\star}} \vtheta\cdot \vx$. Therefore, we can use its dual representation to ``linearize'' the distance. That is,
\begin{align*}
  K\dist(\vW^k,\W^{\star}) = \max_{\left\| \vtheta \right\| \le 1}\biggl\{ \vtheta\cdot \sum_{k=1}^K{\mathbf{\hat{V}}^k}- \sum_{k=1}^K \max_{\vx\in \W^{\star}} \vtheta\cdot \vx \biggr\}.  
\end{align*}
Ideally, if we can find the dual variable $\vtheta^{\star}$ that maximizes the right-hand side above, minimizing the distance will be equivalent to minimizing a linear function in $\hat{\vV}^k$, which can be handled as before if we use \VIH{} as the planning algorithm. Although we can not find $\vtheta^{\star}$ directly, we can find a sequence of dual variables $\{\vtheta\}_{k=1}^K$ such that $ \sum_{k=1}^K{\bigl\{ \vtheta^k\cdot \mathbf{\hat{V}}^k - \sum_{k=1}^K \max_{\vx\in \W^{\star}} \vtheta^k\cdot \vx \bigr\}}$ is close to $\max_{\left\| \vtheta \right\| \le 1}\bigl\{ \vtheta\cdot \sum_{k=1}^K{\mathbf{\hat{V}}^k}- \sum_{k=1}^K \max_{\vx\in \W^{\star}} \vtheta\cdot \vx \bigr\}$.

This task is precisely what online convex optimization (OCO) performs. The simplest solution is to use online subgradient method with step size $\eta^k= \sqrt{1/dH^2k}$. We define \ODU{} formally below:
\begin{equation}
  \label{equ:ODU}
  \vtheta^{k+1}:=\Pi_{\B^d}\bigl\{ \vtheta^k+\eta^k \bigl(\mathbf{\hat{V}}^k-\partial f^*\bigl(\vtheta^k \bigr) \bigr) \bigr\},
  \tag{\ODU{}}\vspace{4pt}
\end{equation} 
where $\Pi_{\B^d}$ denotes projection onto the $d$-dimensional unit Euclidean ball and $\partial f^*\bigl(\vtheta^k \bigr)$ is a subgradient vector of $f^*$ at $\vtheta^k$ (not a set).

Similarly, we provide theoretical guarantees for the new dual update rule. The proof is much simpler compared with that of Theorem~\ref{thm:basic-approachable} and Theorem~\ref{thm:basic-delta-approachable}.

\begin{theorem} 
  \label{thm:OCO-delta-approachable}
  Following \BVI{} with \VIH{} (Algorithm~\ref{alg:VI-Hoeffding}) for \plan{} and \ODU{} for \DU{}, if $\W^{\star}$ is $\delta$-approachable, with probability $1-p$,
  \begin{align*}
    \dist \left( \vW^K,\W^{\star} \right) \le \delta+ \cO\left( \sqrt{\min\{d,S\}dH^4SAB\iota/K}\right),   
  \end{align*}
  where $\iota =\log \left( SABKH/p \right) $. 
\end{theorem}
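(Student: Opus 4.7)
The plan is to decompose $K\cdot\dist(\vW^K,\W^\star)$ into three telescoping pieces via Fenchel duality, bound each one independently, and glue them together with the $\delta$-approachability condition. Starting from the identity $K\cdot\dist(\vW^K,\W^\star)=\max_{\|\vtheta\|\le 1}\{\vtheta\cdot\sum_k \hat{\vV}^k-\sum_k f^*(\vtheta)\}$ displayed in equation~(\ref{equ:duality}), I would insert and subtract the per-episode terms $\vtheta^k\cdot\hat{\vV}^k-f^*(\vtheta^k)$, $V_1^{\mu^k,\nu^k}(\vtheta^k,s_1)$, and $V_1^\star(\vtheta^k,s_1)$ to obtain
\begin{align*}
K\cdot\dist(\vW^K,\W^\star)\;=\;&\underbrace{\max_{\|\vtheta\|\le 1}\sum_k\!\bigl[\vtheta\cdot\hat{\vV}^k-f^*(\vtheta)\bigr]-\sum_k\bigl[\vtheta^k\cdot\hat{\vV}^k-f^*(\vtheta^k)\bigr]}_{\text{(A) OCO regret}}\\
&+\underbrace{\sum_k\bigl[\vtheta^k\cdot\hat{\vV}^k-V_1^{\mu^k,\nu^k}(\vtheta^k,s_1)\bigr]}_{\text{(B) martingale deviation}}\\
&+\underbrace{\sum_k\bigl[V_1^{\mu^k,\nu^k}(\vtheta^k,s_1)-V_1^\star(\vtheta^k,s_1)\bigr]}_{\text{(C) MG regret}}+\sum_k\bigl[V_1^\star(\vtheta^k,s_1)-f^*(\vtheta^k)\bigr].
\end{align*}
The final sum is at most $K\delta$ by Assumption~\ref{ass:delta-Approachability}, so everything reduces to bounding (A), (B), (C).

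For term (A), the \ODU{} rule is exactly online subgradient descent on the unit Euclidean ball applied to the sequence of convex functions $g^k(\vtheta):=f^*(\vtheta)-\vtheta\cdot\hat{\vV}^k$. Its subgradients $\partial f^*(\vtheta^k)-\hat{\vV}^k$ have norm bounded by $\cO(\sqrt{d}H)$ since $\hat{\vV}^k\in[0,H]^d$ and $f^*$ is $\sqrt{d}H^2$-Lipschitz (as noted after~(\ref{equ:duality})), so with the prescribed step size $\eta^k=1/\sqrt{dH^2k}$ the standard OSD regret bound yields $(A)\le\cO(\sqrt{dH^2K})$. For term (B), $\vtheta^k\cdot\hat{\vV}^k$ is bounded by $\sqrt{d}H$ and $\E[\vtheta^k\cdot\hat{\vV}^k\mid\cF^{k-1}]=V_1^{\mu^k,\nu^k}(\vtheta^k,s_1)$ by the unbiasedness observation following Algorithm~\ref{algorithm:Blackwell-VI}, so Azuma--Hoeffding gives $(B)\le\cO(\sqrt{dH^2K\iota})$ with probability $1-p/2$.

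The bulk of the work is term (C), the cumulative regret of \VIH{} on the sequence of scalarized games with rewards $\vtheta^k\cdot\vr_h\in[-\sqrt{d},\sqrt{d}]$ and value range $\sqrt{d}H$. I would reuse verbatim the UCB-style analysis already carried out for Theorem~\ref{thm:basic-approachable}: the Hoeffding bonus $\beta=c\sqrt{\min\{d,S\}H^2d\iota/t}$ yields an optimistic $Q$ function with $V_1(\vtheta^k,s_1)\le V_1^\star(\vtheta^k,s_1)$ on the favorable event, while the standard telescoping/pigeonhole argument over $(s_h^k,a_h^k,b_h^k)$ tuples bounds the accumulated on-policy overestimation by $\cO(\sqrt{\min\{d,S\}dH^4SABK\iota})$. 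The key point is that the OCO reduction made this step completely modular: the same planning regret bound plugs in regardless of whether $\vtheta^k$ came from \PDU{} or \ODU{}, which is precisely why this proof is simpler than that of Theorem~\ref{thm:basic-approachable}.

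Combining (A)+(B)+(C)+$K\delta$ and dividing by $K$, a union bound over the two high-probability events produces the stated bound $\delta+\cO(\sqrt{\min\{d,S\}dH^4SAB\iota/K})$. The main obstacle, as usual in model-based MG analyses, is the MG-regret step (C), but this is not new: it is inherited essentially unchanged from the earlier theorems. The new ingredient here---the OCO regret bound of term (A)---is by contrast short and clean, which is what allows the overall argument to be shorter than the one for Theorems~\ref{thm:basic-approachable} and~\ref{thm:basic-delta-approachable}, where the analysis had to directly track the dynamics of the projection-based dual variable.
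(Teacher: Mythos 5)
Your proposal is correct and follows essentially the same route as the paper's proof: the Fenchel-dual linearization of the distance, the online subgradient regret bound for \ODU{}, the $\delta$-approachability inequality, and then optimism plus the regret decomposition and bonus-sum lemmas from Section~\ref{sec:basic} to control the scalarized MG regret, with Azuma--Hoeffding for the martingale term. The only nit is the Lipschitz constant of $f^*$, which is $\sqrt{dH^2}=\sqrt{d}H$ rather than $\sqrt{d}H^2$, but this is immaterial since your stated $\cO(\sqrt{d}H)$ bound on the subgradient norms is what the OCO regret bound actually uses.
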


\section{Application to CMDPs: Near Optimal Rate}
\label{sec:MDP_upper}
In this section, we apply our algorithmic framework to MDPs, which can be considered as a special case of MGs where the adversary cannot change the game. The stationary environment enables us to use the Bernstein-type concentration and achieve sharper dependence on the horizon $H$. The corresponding planning algorithm \VIB{} is formalized in Algorithm~\ref{alg:VI-Bernstein}. In Line 6 we use  the empirical variance operator defined by $\widehat{\V}^k_h[V](s,a) :=\Var_{s'\sim \Phat^k_h(\cdot|s,a)}V(s')$ for any function $V \in [-\sqrt{d}H,\sqrt{d}H]^{S}$. Notice that this approach does \emph{not} work for MGs, because we need to estimate the variance of the value function $V^{\mu ,\upsilon}$, a task that is impossible when the adversary's policy $\upsilon$ is unknown.

 \begin{algorithm}[h]
   \caption{\VIB{}}
   \label{alg:VI-Bernstein}
\begin{algorithmic}[1]
   
   \For{step $h=H,H-1,\dots,1$} 
   \For{$(s, a)\in\cS\times\cA$}
   \State $t \setto N_{h}(s, a)$.
   \If{$t>0$}
   \State $r_h(s,a)=\vtheta \cdot \vr_h(s,a)$;
   \State { $\beta \setto c\big(\sqrt{\hat{\V}_h \low{V}_{h+1}(s,a)\min\{d,S\}\iota/t} + \hat{\P}_{h}(\up{V}_{h+1}-\low{V}_{h+1})(s,a)/H + \min\{d,S\}\sqrt{d}H^2\iota/t\big)$. }
   \State { $\low{Q}_{h}(s, a)\setto \max\{(r_h +
   \widehat{\P}_{h} \up{V}_{h+1})(s, a) - \beta, -\sqrt{d}H\}$.} 
   \State {$\up{Q}_{h}(s, a)\setto \min\{(r_h +
   \widehat{\P}_{h} \low{V}_{h+1})(s, a) + \beta, \sqrt{d}H\}$. }
   \EndIf
   \EndFor
   \For{$s \in \cS$}
   \State $\pi_h(s) \setto \argmin (\low{Q}_h(s, \cdot))$. 
   \State $\low{V}_h(s) \leftarrow \low{Q}_h(s, \pi_h(s)),  \up{V}_h(s) \leftarrow \up{Q}_h(s, \pi_h(s))$.
   \EndFor
   \EndFor
\end{algorithmic}
\end{algorithm}

The sharper theoretical guarantee is as follows:

\begin{theorem}
   \label{thm:bernstein}
   If we use \VIB{} (Algorithm~\ref{alg:VI-Bernstein}) for \plan{} and \eqref{equ:PDU} or \eqref{equ:ODU} for \DU{} in \BVI{}, and if $\W^{\star}$ is $\delta$-approachable, then with probability $1-p$,
   $$
   \dist( \vW^K,  \W^*) \le \delta+ \cO\bigl( \sqrt{\min\{d,S\}dH^3SA\iota/K}\bigr),
 $$
 where $\iota =\log \left( SAKH/p \right)$.
\end{theorem}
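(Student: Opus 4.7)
The proof follows the same three-step template used for Theorems~\ref{thm:basic-delta-approachable} and~\ref{thm:OCO-delta-approachable}: (i) validate the optimistic/pessimistic bounds produced by \VIB{}; (ii) telescope the per-episode gap $\up{V}^k_1(s_1)-\low{V}^k_1(s_1)$ to obtain a scalar regret bound on $\sum_k \vtheta^k\cdot\hat{\vV}^k - V^\star_1(\vtheta^k,s_1)$; and (iii) reduce $\dist(\vW^K,\W^\star)$ to this scalar regret via either \eqref{equ:PDU} or \eqref{equ:ODU}. Only step (ii) changes substantively relative to the \VIH{} analysis: replacing Hoeffding by Bernstein and exploiting the law of total variance improves the dependence from $H^4$ to $H^3$, while the absence of an adversary ($B$ drops out) comes for free since \VIB{} plans against a single kernel rather than a minimax inner game.

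For step (i), I would do backward induction on $h$ to show that on a high-probability event, $\low{Q}^k_h(s,a)\le Q^\star_h(\vtheta^k,s,a)\le \up{Q}^k_h(s,a)$ for every $(k,h,s,a)$. The underlying concentration is the empirical Bernstein bound $|(\hat{\P}^k_h-\P_h)V|(s,a)\le c\bigl(\sqrt{\hat{\V}^k_h[V](s,a)\min\{d,S\}\iota/t}+\sqrt{d}H\min\{d,S\}\iota/t\bigr)$ holding uniformly over $V\in[-\sqrt{d}H,\sqrt{d}H]^S$ via an $\eps$-net argument; the $\min\{d,S\}$ factor is the minimum of a net-union bound over the dual unit ball $\B^d$ (yielding $d$) and an $\ell_1$-deviation bound on the empirical kernel (yielding $S$). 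Because Bernstein controls $\V_h[V^\star]$ while the algorithm plugs in $\hat{\V}^k_h[\low{V}^k_{h+1}]$, the correction term $\hat{\P}_h(\up{V}_{h+1}-\low{V}_{h+1})/H$ inside $\beta$ is precisely what bridges the two and preserves monotonicity of the inductive step, in the style of~\citep{azar2017minimax}.

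For step (ii), set $\Delta^k_h(s):=\up{V}^k_h(s)-\low{V}^k_h(s)$. Unrolling Bellman along the realized trajectory yields $\Delta^k_1(s_1)\le C\sum_h \beta^k_h(s^k_h,a^k_h) + M^k$, absorbing the $(1+1/H)^h\le e$ self-referential factor into $C$, where $M^k$ is a bounded martingale. Summing over $k$ and Cauchy--Schwarz on the leading term gives
\begin{align*}
\sum_{k,h}\sqrt{\hat{\V}^k_h[\low{V}^k_{h+1}]\min\{d,S\}\iota/N^k_h}\le \sqrt{\min\{d,S\}\iota\cdot\textstyle\sum_{k,h}\hat{\V}^k_h[\low{V}^k_{h+1}]}\cdot\sqrt{\textstyle\sum_{k,h}1/N^k_h}.
\end{align*}
The law of total variance yields $\sum_h\V_h[V^\star_{h+1}](s^k_h,a^k_h)\le dH^2$ per episode (the $d$ from $|V^\star|\le\sqrt{d}H$), so $\sum_{k,h}\hat{\V}^k_h\le \tlO(dH^2 K)$; combined with the pigeonhole bound $\sum_{k,h}1/N^k_h\le\tlO(SAH)$, the dominant term is $\tlO(\sqrt{\min\{d,S\}dH^3SAK})$. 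The second-order Bernstein terms and the bias correction $\hat{\P}\Delta_{h+1}/H$ accumulate only as lower-order $\tlO(\min\{d,S\}\sqrt{d}H^2SA)$. Dividing by $K$ yields the advertised rate.

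Step (iii) is identical to the reductions in the proofs of Theorems~\ref{thm:basic-delta-approachable} and~\ref{thm:OCO-delta-approachable}, since those arguments only use the scalar regret produced in step (ii): \eqref{equ:PDU} telescopes the Blackwell potential $\|\vW^k-\Pi_{\W^\star}(\vW^k)\|^2$ and invokes $\max_{\vx\in\W^\star}\vtheta^k\cdot\vx+\delta\ge V^\star_1(\vtheta^k,s_1)$, while \eqref{equ:ODU} invokes the $\cO(\sqrt{dH^2K})$ online subgradient regret on the $\sqrt{d}H$-Lipschitz Fenchel dual $f^\star$; both are dominated by step (ii). The main obstacle is step (ii): carefully propagating the Bernstein bonus backward while handling the $\hat{\P}_h(\up{V}_{h+1}-\low{V}_{h+1})/H$ term so that $\sum_k\Delta^k_1(s_1)$ closes on itself via a geometric-series/fixed-point argument, all while carrying the vector-valued $\min\{d,S\}$ factor through every concentration step.
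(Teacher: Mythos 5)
Your proposal follows essentially the same route as the paper: Bernstein-type optimism with the $\hat{\P}_h(\up{V}_{h+1}-\low{V}_{h+1})/H$ correction term, a recursion on the confidence width $\up{V}^k_h-\low{V}^k_h$ combined with Cauchy--Schwarz, the pigeonhole bound, and the law of total variance to obtain the $\sqrt{\min\{d,S\}dH^3SAK}$ rate, followed by the unchanged \PDU{}/\ODU{} reductions. The one small imprecision is that the law of total variance must be applied to $\V_h[V^{\pi^k}_{h+1}]$ (the variance of the \emph{executed} policy's value along its own trajectory) rather than to $\V_h[V^{\star}_{h+1}]$; the paper's Lemma~\ref{lem:bound_variance} supplies exactly this conversion from $\hat{\V}^k_h[\low{V}^k_{h+1}]$ to $\V_h[V^{\pi^k}_{h+1}]$, after which your argument goes through as written.
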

When $d \le S$ (as is in most cases), our result is minimax optimal up to log-factors in  $S, A, H, K$ according to the lower bound $\Omega \left( \sqrt{H^3SA\iota/K}\right)$ proven in \citep{domingues2020episodic}. The tightness of our result in $d$ remains open. In particular, we can get a naive $\Omega \left( \sqrt{dH^3SA\iota/K}\right)$  lower bound by duplicating the negative  MDP example from  \cite{domingues2020episodic} $d$ times in $d$ dimensions, and the distance naturally scales up by $d$. With such a lower bound, there is still a $\sqrt{d}$ gap open. More details on the difficulty of providing a tigher lower bound are discussed in Section~\ref{sec:conclusion}. 

The upper bound in Theorem~\ref{thm:bernstein} allows us to find a policy that approaches the target set $\W^\star$ efficiently. Next, we generalize the result to the constrained MDP setting where we want to simultaneously minimize a cost function.

\subsection{Optimizing a Cost Function Simultaneously}
\label{sec:satisfiable}

In this section, we show how to extend our algorithm to the constrained MDP setup \citep{efroni2020exploration, ding2020provably, qiu2020upper, brantley2020constrained}, in which one wants to simultaneously minimize a cost function $g : \mathbb{R}^d \to [0,1]$ defined on the return vector space. The goal is two-fold: (i) satisfy constraints defined by the target set; and (ii) minimize the cumulative cost. Note that our setup \emph{subsumes} the canonical cost function in which the cost function is defined on the state-action pair (e.g., \citep{efroni2020exploration}). Particularly, we can add an extra coordinate in the return vector space to denote the cost for each state-action pair, and pick $g$ to solely extract that cost coordinate.  A more detailed comparison against constrained MDP setups from previous works can be found in Appendix~\ref{sec:cmdp-compare}.

For our analysis, we assume that the cost function $g(\cdot)$ is $1$-Lipschitz and convex. Following~\citep{efroni2020exploration,ding2020provably, qiu2020upper, brantley2020constrained}, we also assume $\W^{\star}$ is satisfiable and that we want to compete with a policy $\mu^{\star}$ such that $\vV^{\mu^{\star}}_1(s_1) \in \W^{\star}$. One might hope to bound the regret $\sum_{k=1}^K g(\hat{\vV}^{k})-K g(\vV^{\mu^{\star}}_1(s_1))$. But this goal is hard. Its counterpart is unknown even in the bandit setup  \citet{agrawal2014bandits}. Instead, we aim to upper bound both the regret
$[g(\vW^K)- g(\vV^{\mu^{\star}}_1(s_1))]$ and the constraint violation $\dist(\vW^K, \W^{\star}).$  

\paragraph{Constraint geometry.} Toward achieving our aim, we need to impose some geometric requirements on the constraints that will help us quantify algorithmic complexity in a non-asymptotic manner. Previous works that use a primal-dual approach (e.g., \citep{efroni2020exploration,qiu2020upper,ding2020provably}) assume knowledge of explicit structure of the constraint set, concretely by requiring $\W^{\star} = \{ x \ \|  \forall i, g_i(x) \le 0\}$. Subsequently, they control complexity of the constraint set by assuming Lipschitzness of the $g_i$ and a strong Slater condition, i.e., there is a strictly feasible interior point $x_0$ such that $g_i(x_0) \le -\epsilon$ for a universal constant $\epsilon > 0$. In contrast, we do not impose  explicit structure on $\W^\star$. Instead, we assume that we can solve linear or quadratic optimization over $\W^\star \subset \R^d$. A naive way to cast our setup into the previous form would be use the inequality $g_0(\cdot) := \dist(\cdot, \W^{\star}) \le 0$. But since $g_0$ is a distance function, we cannot satisfy the strict interiority condition needed by the previous setup. Consequently, we need to limit the complexity of our constraint set through a more refined alternative. 

To this end, we propose a geometric condition. In particular, we assume that the target set $\W^{\star}$ intersects with the set of achievable value vectors $\cV = \{{\vV}^\pi_1(s_1) |\ \text{any policy $\pi$}\}$ \emph{nonsingularly}---Figure~\ref{fig:intersection} illustrates this concept. Formally, denote the set of achievable returns within the target set as $\cW = \cV \cap \W^{\star}$ and $\partial \cW = \partial \cV \cap \partial\W^{\star}$ as the intersection of the boundaries of $\W^{\star}$ and the achievable value vector set $\cV$. Then, Assumption~\ref{assump:angle} describes nonsingular intersection.

\begin{assumption}\label{assump:angle}
  If $\partial \cW$ is not empty, then for each vector $\vW \in \partial \cW$, denote the maximum angle $\alpha \in [0, \pi]$ between the support vectors $\vec{a}$ of $\W^{\star}$ at $\vW$ and the support vectors $\vec{b}$ of $\cV$ at $\vW$ as
  \begin{align*}
    \alpha(\vW) := \min\{\angle(\vec{a}, \vec{b})\ |\ \vec{a}, \vec{b}\ \text{are support} \text{vectors of sets}\ \W^{\star}\  \text{and}\ \cV\ \text{at}\ \vW\}.
  \end{align*}
  We assume there exists a constant $\alpha_{\max} \in [\pi/2, \pi)$ such that $\max_{w \in \partial\cW} \alpha(w) < \alpha_{\max}$. With this upper bound on $\alpha$, we denote $\gamma_{\min}= \sin(\pi  - \alpha_{\max}) > 0$.
\end{assumption}

\begin{figure}[h]
	\centering
	\includegraphics[width=0.6\textwidth]{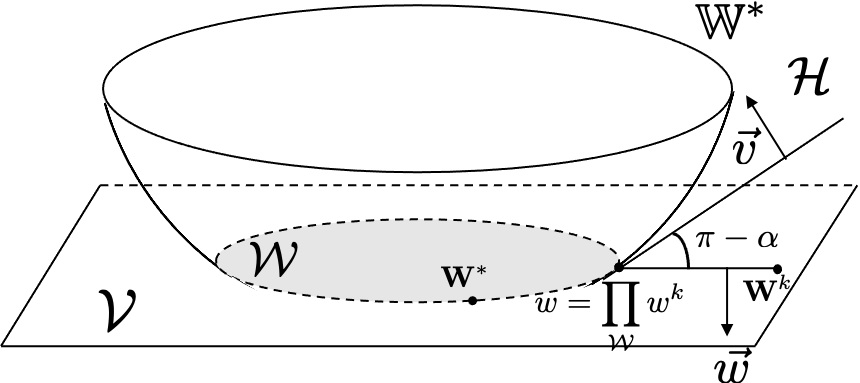}
	\caption{The target set intersects with the achievable return vectors nonsingularly. The angle $\alpha(\prod_\cW \vW^k)$ is upper bounded. }\label{fig:intersection}
\end{figure}

Assumption~\ref{assump:angle} excludes the case where the sets $\cV $ and $\W^{\star}$ intersect tangentially (i.e., share the same supporting hyperplane) resulting in $\alpha = \pi$. The necessity of such a geometric assumption is discussed in Appendix~\ref{sec:intersect-proof}.  At a high level, Assumption~\ref{assump:angle} is a geometric analog of the previously noted strict interiority condition that excludes a singular intersection of the constraint functions $g_i$. Our assumption provides a way to lower-bound the distance to the target set $\W^{\star}$ by the distance to the actual constraint set $\cW=\cV\cap \W^\star$, and thus prevent an algorithm from trading off too much constraint violation in exchange for a lower cost value $g(\vV^k)$. 

To minimize cost and avoid constraint violation simultaneously we need a ``double'' version of dual variable update. This idea is formalized in \DODU{} below:
\begin{align}	
  \bm{\varphi}^{k+1} & = \Pi _{\B^d}\bigl\{ \bm{\varphi }^k+\eta^k \bigl( \mathbf{\hat{V}}^k-\underset{\vx \in W^{\star}}{\mathrm{arg}\max}\ \bm{\varphi }^k \cdot \vx \bigr) \bigr\}, \nonumber\\
  \bm{\phi}^{k+1}   &=  \Pi _{\B^d}\bigl\{ \bm{\phi  }^{k} +\eta^k \bigl( \mathbf{\hat{V}}^k -\partial g^{*}( \bm{\phi}^k) \bigr) \bigr\}, \nonumber\\
  \vtheta^{k+1} &= \rho \bm{\varphi  }^{k+1} + \bm{\phi} \tag{\DODU{}}^{k+1}
\end{align}
where $\Pi_{\B^d}$ denotes projection onto the $d$-dimensional unit Euclidean ball and $\partial g^*\bigl(\bm{\varphi}^k \bigr)$ is a subgradient vector of $g^*$ at $\bm{\varphi}^k$ (not a set).

\begin{table}[t!]
    \centering
    \resizebox{0.95\textwidth}{!}{%
        \begin{tabular}{|c|c|c|c|c|}
      \hline
      \textbf{Algorithms} & \textbf{Regret} & \textbf{Constraint Violation} & \makecell{\textbf{Nonlinear} \\ \textbf{Cost and} \\ \textbf{Constraints}} & \makecell{\textbf{Computionally} \\ \textbf{Efficient}} \\ \hline
      \makecell{OptCMDP-bonus \\ \cite{efroni2020exploration}}& $ \tlO\left(\sqrt{H^4S^2AK}\right)$  & $ \tlO\left(\sqrt{dH^4S^2AK}\right)$ &  &  \checkmark\\ \hline
      \cite{brantley2020constrained}& $ \tlO\left(\sqrt{H^3S^2AK}\right)$ & $ \tlO\left(\sqrt{d^3H^3S^2AK}\right)$ & \checkmark &  \\ \hline
      \makecell{OptPD-CMDP \\ \cite{efroni2020exploration}}& $ \tlO\left(\sqrt{(S^2A + d^2)H^4K}\right)$ & $ \tlO\left(\sqrt{(S^2Ad^2 + d^3)H^4K}\right)$ &  & \checkmark \\ \hline
      \makecell{OPDOP \\  \cite{ding2020provably}} &$ \tlO\left(\sqrt{H^5S^4A^2K}\right)$  & $ \tlO\left(\sqrt{H^5S^4A^2K}\right)$ &  & \checkmark \\ \hline
      \makecell{UCPD \\  \cite{qiu2020upper}} &$ \tlO\left(\sqrt{H^5S^2AK}\right)$  & $ \tlO\left(\sqrt{H^5S^2AK}\right)$ &  & \checkmark \\ \hline
      \textbf{This Paper} \cellcolor{gray!25}& $ \tlO\left( \sqrt{\min\{d,S\}dH^3SAK}\right)$ \cellcolor{gray!25}  & $ \tlO\left( \sqrt{\min\{d,S\}dH^3SAK}\right)$ \cellcolor{gray!25} & \checkmark \cellcolor{gray!25} & \checkmark \cellcolor{gray!25}\\ \hline
    \end{tabular}}
    \caption{Comparison with constrained MDP literature.} \label{tab:comparison}
    \end{table}

Here comes our theoretical guarantee for both constraint violation and regret.

\begin{theorem}
  \label{thm:satisfiable}
  Following \BVI{} with \VIB{} (Algorithm~\ref{alg:VI-Bernstein}) for \plan{} and \DODU{} for \DU{}, if $\W^{\star}$ is approachable and $\mu^{\star}$ is a policy s.t. $\vV^{\mu^{\star}}_1(s_1) \in \W^{\star}$, with probability $1-p$ we can bound the constraint violation and the regret respectively as follows:
  \begin{align*}
  \dist( \vW^K,W^{\star}) & \le \cO\bigl(  \sqrt{\min\{d,S\}dH^3SA\iota/K}\bigr),\\
  g(\vW^K)- g(\vV^{\mu^{\star}}_1(s_1)) & \le \cO\bigl( \rho \sqrt{\min\{d,S\}dH^3SA\iota/K}\bigr),
  \end{align*}
  where $\iota =\log \left( SAKH/p \right) $,  $\rho = 2/\gamma_{\min}$.
\end{theorem}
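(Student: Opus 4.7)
The plan is to combine three ingredients: (i) the scalar-valued MDP regret bound of \VIB{} from Theorem~\ref{thm:bernstein}, applied along the adaptively chosen direction $\vtheta^k=\rho\bm{\varphi}^k+\bm{\phi}^k$; (ii) an online convex optimization (OCO) regret bound for the $\bm{\varphi}$-update which, via Fenchel duality for $\dist(\cdot,\W^\star)$, lower bounds $\sum_k[\bm{\varphi}^k\cdot\hat{\vV}^k-\max_{\vx\in\W^\star}\bm{\varphi}^k\cdot\vx]$ by $K\dist(\vW^K,\W^\star)-R_{\bm{\varphi}}$ (exactly the argument behind Theorem~\ref{thm:OCO-delta-approachable}); and (iii) a parallel OCO bound for the $\bm{\phi}$-update which, via Fenchel duality for the convex cost $g$, lower bounds $\sum_k[\bm{\phi}^k\cdot\hat{\vV}^k-g^*(\bm{\phi}^k)]$ by $Kg(\vW^K)-R_{\bm{\phi}}$. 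Both $R_{\bm{\varphi}}$ and $R_{\bm{\phi}}$ are standard $\cO(\sqrt{dH^2K})$ online-gradient regrets over the unit Euclidean ball and end up being lower order.

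The next step is to stitch the three pieces together. Feasibility of $\mu^\star$ (i.e., $\vV^{\mu^\star}_1(s_1)\in\W^\star$) gives $\max_{\vx\in\W^\star}\bm{\varphi}^k\cdot\vx\ge\bm{\varphi}^k\cdot\vV^{\mu^\star}_1(s_1)$, and Fenchel--Young gives $g^*(\bm{\phi}^k)\ge\bm{\phi}^k\cdot\vV^{\mu^\star}_1(s_1)-g(\vV^{\mu^\star}_1(s_1))$. Plugging these into (ii) and (iii), multiplying the $\bm{\varphi}$-bound by $\rho$, and adding---using $\rho\bm{\varphi}^k+\bm{\phi}^k=\vtheta^k$ to collapse the dual directions---yields
\begin{align*}
\rho K\dist(\vW^K,\W^\star)+K\bigl[g(\vW^K)-g(\vV^{\mu^\star}_1(s_1))\bigr]\le \sum_{k=1}^K\vtheta^k\cdot\bigl(\hat{\vV}^k-\vV^{\mu^\star}_1(s_1)\bigr)+\rho R_{\bm{\varphi}}+R_{\bm{\phi}}.
\end{align*}
Ingredient (i), combined with an Azuma concentration of $\hat{\vV}^k$ around $\vV^{\mu^k}_1(s_1)$, controls the residual sum by the \VIB{} regret $R_{\mathrm{VIB}}=\tlO((\rho+1)\sqrt{\min\{d,S\}dH^3SAK\iota})$ (the $\rho+1$ factor coming from the reward scale $\|\vtheta^k\|\le\rho+1$). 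This produces the master inequality
\begin{align*}
\rho K\dist(\vW^K,\W^\star)+K\bigl[g(\vW^K)-g(\vV^{\mu^\star}_1(s_1))\bigr]\le R_{\mathrm{VIB}}+\rho R_{\bm{\varphi}}+R_{\bm{\phi}}.
\end{align*}

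Both conclusions would then drop out. The regret bound is immediate because $\dist(\vW^K,\W^\star)\ge0$, giving $g(\vW^K)-g(\vV^{\mu^\star}_1(s_1))=\cO(\rho\sqrt{\min\{d,S\}dH^3SA\iota/K})$. For the constraint-violation bound, I would choose $\mu^\star$ in the proof to be the cost-optimal feasible policy (the final bound does not depend on $\mu^\star$, so this choice is free), let $\vW^K_{\cW}=\Pi_{\cW}(\vW^K)$, and use $1$-Lipschitzness of $g$ together with optimality of $\mu^\star$ in $\cW$ to write $g(\vV^{\mu^\star}_1(s_1))-g(\vW^K)\le g(\vW^K_{\cW})-g(\vW^K)\le\dist(\vW^K,\cW)$. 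Assumption~\ref{assump:angle} then converts this into $\dist(\vW^K,\cW)\le(1/\gamma_{\min})\dist(\vW^K,\W^\star)=(\rho/2)\dist(\vW^K,\W^\star)$, and substituting into the master inequality makes the $K(\rho/2)\dist$ contribution on the right cancel half of the $\rho K\dist$ on the left, yielding $(\rho/2)K\dist(\vW^K,\W^\star)\le R_{\mathrm{VIB}}+\rho R_{\bm{\varphi}}+R_{\bm{\phi}}$ and the claimed $\tlO(\sqrt{\min\{d,S\}dH^3SA\iota/K})$ rate (absorbing $(\rho+1)/\rho=\cO(1)$ since $\rho\ge 2$).

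The main obstacle is the geometric inequality $\dist(\vW^K,\cW)\le(1/\gamma_{\min})\dist(\vW^K,\W^\star)$, which is where Assumption~\ref{assump:angle} earns its keep. The intuition is that once $\vW^K$ is projected onto $\W^\star$, a further projection onto $\cV$ costs an extra length at most $1/\sin(\pi-\alpha_{\max})$ times what was already spent, because the supporting hyperplanes of $\W^\star$ and $\cV$ meet at an angle bounded away from tangency; a tangential intersection would allow this ratio to blow up, which is exactly why the nonsingular-intersection condition is needed. A secondary subtlety is that $\vW^K$ is an empirical average rather than a value vector and may lie outside $\cV$, so the projection argument is applied only after comparing $\vW^K$ to $\tfrac{1}{K}\sum_k\vV^{\mu^k}_1(s_1)\in\cV$ via Azuma; the resulting slack is lower order and absorbed. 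Everything else---the OCO step-size tuning for \ODU-style subgradient descent, propagating the $\rho+1$ reward scale through the \VIB{} analysis, and the high-probability union bound---is routine.
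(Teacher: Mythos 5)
Your proposal follows essentially the same route as the paper's proof: the same master inequality bounding $K[g(\vW^K)-g(\vV^{\mu^{\star}}_1(s_1))+\rho\,\dist(\vW^K,\W^{\star})]$ via the two OCO regrets, Fenchel--Young for $g$, feasibility of $\mu^{\star}$, optimism, and the Bernstein bonus-sum lemma, followed by the same geometric lemma under Assumption~\ref{assump:angle} to peel off the constraint violation with $\rho=2/\gamma_{\min}$. If anything you are slightly more careful than the paper on two points it glosses over --- instantiating $\mu^{\star}$ as the cost-optimal feasible policy so the geometric lemma actually lower-bounds $g(\vW^K)-g(\vV^{\mu^{\star}}_1(s_1))$, and the fact that the empirical average $\vW^K$ need not lie in $\cV$ --- so no changes are needed.
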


Known results on constrained MDP problems do not share a common setup and hence make a precise comparison tricky. In short, our result aims to provide a \emph{computationally efficient} algorithm for non-linear constraints (target set) and a convex cost function (see Table~\ref{tab:comparison}).  Please see Appendix ~\ref{sec:comparsion} for a more detailed discussion of the subtleties among different constrained MDP setups, and some minor modifications needed to unify the exposition. With the existing results, our result is significant in the following aspects:
\begin{itemize}
  \setlength{\itemsep}{0em}
  \item First, our algorithm is the most general in terms of being able to handle non-linearity in the cost and constraints. The constrained MDP setting we study in Section~\ref{sec:satisfiable} is a direct generalization of~\citep{agrawal2014bandits}, and is closest to~\citep{brantley2020constrained}. While our constraint assumption is equivalent to the one in~\citep{brantley2020constrained}, our cost functions are more general. The domain of \citeauthor{brantley2020constrained}'s cost function is scalars, while that of ours is vectors.
  \item Furthermore, our proposed algorithm is \emph{computationally efficient} because we do not require solving a large-scale convex optimization sub-problem with the number of variables and constraints scaling as $\cO(SAH)$ per iteration (see Table~\ref{tab:comparison}). Indeed, our algorithms only comprise planning and model update procedures with a total of $\cO(S^2AH)$ basic algebraic updates in each episode, along with a dual space optimization procedure whose computational complexity is free of $S$, $A$ and $H$. 
  \item  Our bounds on regret and constraint violation are also the sharpest with respect to their dependence on the parameters $S$, $A$, and $K$.
\end{itemize}

\section{Conclusion and Future Work}\label{sec:conclusion}
In this paper, we formulate online learning in vector-valued Markov games via the lens of approaching a fixed convex target set within which the vector-valued objective should lie. We provide efficient model-based algorithms as instances of a generic meta-algorithm. Two key ideas contribute to our algorithmic design: (i) reduction of the vector-valued Markov game to a scalar-valued one, where the scalarization is iteratively updated; and (ii) exploration of the environment strategically. For vector-valued MDPs, our algorithms, after some modifications, achieve a tight rate in approaching the target set (in terms of $S, A, H, K$), while simultaneously minimizing a convex cost function. Moreover, when the given target set is non-approachable, our algorithms automatically adapt to the degree of non-approachability.

Several problems are left open. Currently, there is still a $\sqrt{d}$ gap ($d$ is the dimensionality of the vector-valued cost) between our upper bound and the lower bound. How to close this gap to achieve the minimax rate remains unknown. The challenge in providing a tighter lower bound is that estimating a discrete distribution under the $L^2$ distance does not get harder as the dimensionality increases. Since we use the Euclidean distance to measure the performance of our algorithms, we cannot get stronger dependence on $d$. Lower bounds such as the one in \citep{jin2020reward} use a multiple hypothesis testing approach successfully because they work with an $L^1$ loss, whereas we study the standard Euclidean loss.  A second question is that our result in Section~\ref{sec:satisfiable} has somewhat worse dependence on $d$ and $\rho$ compared to previous results. We leave improving the dimension dependency as a future direction.  

Another future direction that is worth pursuing is that of redefining the notion of regret and error. Our work measures approachability error using the Euclidean distance. In practice, this choice may not be the only useful measure. Can we develop provably efficient algorithms under other geometries and measures of approachability? Answering this question might help exploit the geometry of the target set better, and potentially lead to tighter complexity analyses.

\bibliographystyle{plainnat}
\bibliography{ref}

\clearpage
\appendix

\section{Proofs for Section~\ref{sec:basic} }
\label{sec:basic_proof}

In this section, we give detailed proofs needed in Section~\ref{sec:basic}. 

Beginning with a recapitulation of the notaions, We denote $V^k$, $Q^k$, $\pi^k$, $\mu^k$, $\nu^k$ and $\theta^k$ for values, policies and dual vectors at the \emph{beginning} of the $k$-th episode, In particular,
$Q^k_h(s, a, b)  = \theta^k \cdot \  Q^{\mu_k, \nu_k}_h(s, a, b).$

$\hat{\vV}^k$ is the cumulative reward in the $k$-th episode . In particular, $N_h^k(s,a,b)$ is the number we have visited the state-action tuple $(s,a,b)$ at the $h$-th step before the $k$-th episode. $N_h^k(s,a,b,s')$ is defined by the same token. Using this notation, we can further define the empirical transition and exploration bonus by
 $
 \widehat{\P}^k_h(s'|s, a, b):= N^k_h(s, a, b, s')/N^k_h(s, a, b)
 $ and $
	\beta_h^k(s,a,b) := C\sqrt{\min\{d,S\}d\iota H^2/N_h^k(s,a,b)} 
$.

We first give a uniform convergence guarantee, which will also be used later. The first simple lemma is from \cite{liu2020sharp}.

\begin{lemma} \label{lem:lipschitz-matrix}
   Let $\X,\Y,\mat{Z}\in\mathbb{R}^{A\times B}$ and $\Delta_d$ be the $d$-dimensional simplex.
   Suppose $\left|\X-\Y \right|\le \mathbf{Z}$, where the inequality is entry-wise. Then
      \begin{equation}
         \left|\max_{\mu\in\triangle_A}\min_{\nu\in\triangle_B}
         \mu\trans \X \nu 
         -\max_{\mu\in\triangle_A}\min_{\nu\in\triangle_B}
         \mu\trans \Y \nu \right|
         \le  \max_{i,j} \mathbf{Z}_{ij}.
      \end{equation}
   \end{lemma}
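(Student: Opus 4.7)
The plan is to reduce the max-min inequality to a uniform pointwise bound on the bilinear forms, using nothing more than the fact that probability distributions are nonnegative and sum to one. Concretely, I will first show that for every $\mu\in\Delta_A$ and every $\nu\in\Delta_B$,
\begin{equation*}
  \bigl|\mu\trans(\X-\Y)\nu\bigr| \;\le\; \sum_{i,j}\mu_i\,|\X_{ij}-\Y_{ij}|\,\nu_j \;\le\; \sum_{i,j}\mu_i\,\mathbf{Z}_{ij}\,\nu_j \;\le\; \max_{i,j}\mathbf{Z}_{ij},
\end{equation*}
where the first bound is the triangle inequality applied to a convex combination, the second uses the entrywise hypothesis $|\X-\Y|\le\mathbf{Z}$, and the last uses $\sum_i\mu_i=\sum_j\nu_j=1$.

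Next I would lift this pointwise bound through the two optimization layers using the elementary observation that if $|f(\mu,\nu)-g(\mu,\nu)|\le C$ uniformly on a product domain, then $|\min_\nu f(\mu,\nu)-\min_\nu g(\mu,\nu)|\le C$ for every fixed $\mu$, and then $|\max_\mu\min_\nu f-\max_\mu\min_\nu g|\le C$. Applied with $f(\mu,\nu)=\mu\trans\X\nu$, $g(\mu,\nu)=\mu\trans\Y\nu$, and $C=\max_{i,j}\mathbf{Z}_{ij}$, this yields the claim.

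There is no real obstacle here: the lemma is essentially a Lipschitz-in-the-payoff property of the minimax value of a matrix game, and the proof is a one-line application of convexity of the simplex plus uniform transfer of bounds through $\max$ and $\min$. The only point worth stating carefully in a formal write-up is the monotone lifting through $\min_\nu$ then $\max_\mu$, which I would phrase as two symmetric inequalities (upper and lower) and then take absolute values.
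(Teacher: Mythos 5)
Your proof is correct and complete. The paper itself does not prove this lemma---it simply imports it from \citet{liu2020sharp} with no argument given---so there is nothing to compare against, but your two-step argument (a uniform pointwise bound $\left|\mu\trans(\X-\Y)\nu\right|\le\max_{i,j}\mathbf{Z}_{ij}$ over the product of simplices, followed by monotone transfer of the bound through $\min_\nu$ and then $\max_\mu$) is exactly the standard proof of this Lipschitz property of the minimax value, and every step checks out; in particular the final inequality in your first display is just a weighted average with nonnegative weights summing to one being dominated by the maximum. The only thing to make explicit in a formal write-up is, as you note, the two one-sided inequalities in the lifting step, e.g.\ $\min_\nu f\le\min_\nu(g+C)=\min_\nu g+C$ and its counterpart with $f$ and $g$ exchanged.
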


We also need to following lemma to characterize the dependence of $V^{\star}(\vtheta^k,\cdot)$ on $\vtheta^k$ to apply the covering argument.

\begin{lemma} [Lipschitz property of $V^{\star}$]
   \label{lem:lipschitz-markov}
   For any $s \in \cS$,
   $$
   \left| V^{\star}_h(\vtheta,s)-V^{\star}_h(\vtheta',s) \right|\le \sqrt{d} (H-h+1)\left\| \vtheta-\vtheta' \right\| _2
$$
\end{lemma}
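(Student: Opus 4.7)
The plan is backward induction on $h$, going from $h=H+1$ down to $h=1$. The base case is trivial since $V^\star_{H+1}(\vtheta,s)=0$ by convention, independent of $\vtheta$.

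For the inductive step, I would first bound the difference in $Q$-values. Using the Bellman equation at Nash equilibrium,
\begin{align*}
Q^\star_h(\vtheta,s,a,b) = \vr_h(s,a,b)\cdot \vtheta + \P_h[V^\star_{h+1}(\vtheta,\cdot)](s,a,b).
\end{align*}
Subtracting the analogous expression with $\vtheta'$ and applying the triangle inequality,
\begin{align*}
|Q^\star_h(\vtheta,s,a,b)-Q^\star_h(\vtheta',s,a,b)| \le |\vr_h(s,a,b)\cdot(\vtheta-\vtheta')| + |\P_h[V^\star_{h+1}(\vtheta,\cdot)-V^\star_{h+1}(\vtheta',\cdot)](s,a,b)|.
\end{align*}
Since every coordinate of $\vr_h(s,a,b)$ lies in $[0,1]$, we have $\|\vr_h(s,a,b)\|_2\le \sqrt{d}$, so Cauchy--Schwarz gives $|\vr_h(s,a,b)\cdot(\vtheta-\vtheta')|\le \sqrt{d}\|\vtheta-\vtheta'\|_2$. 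By the inductive hypothesis at step $h+1$, the second term is at most $\sqrt{d}(H-h)\|\vtheta-\vtheta'\|_2$ (since $\P_h$ is an average over $s'$ of values that are each close). Summing yields
\begin{align*}
\max_{a,b}|Q^\star_h(\vtheta,s,a,b)-Q^\star_h(\vtheta',s,a,b)| \le \sqrt{d}(H-h+1)\|\vtheta-\vtheta'\|_2.
\end{align*}

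Next, I would transfer this entrywise $Q$-value bound into a bound on the minimax values $V^\star_h$. By definition, $V^\star_h(\vtheta,s)$ is the minimax value of the matrix game whose payoff matrix is $Q^\star_h(\vtheta,s,\cdot,\cdot)$, and similarly for $\vtheta'$. Applying Lemma~\ref{lem:lipschitz-matrix} with $\X=Q^\star_h(\vtheta,s,\cdot,\cdot)$, $\Y=Q^\star_h(\vtheta',s,\cdot,\cdot)$, and $\mathbf{Z}$ set to the componentwise absolute difference, I obtain
\begin{align*}
|V^\star_h(\vtheta,s)-V^\star_h(\vtheta',s)| \le \max_{a,b}|Q^\star_h(\vtheta,s,a,b)-Q^\star_h(\vtheta',s,a,b)| \le \sqrt{d}(H-h+1)\|\vtheta-\vtheta'\|_2,
\end{align*}
closing the induction. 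There is no real obstacle; the only subtle point is invoking Lemma~\ref{lem:lipschitz-matrix} to pass from entrywise payoff closeness to minimax-value closeness without having to reason directly about how the Nash policies change with $\vtheta$.
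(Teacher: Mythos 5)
Your proof is correct and follows essentially the same route as the paper: backward induction via the Bellman equation, with Cauchy--Schwarz giving the $\sqrt{d}\|\vtheta-\vtheta'\|_2$ increment from the reward term and Lemma~\ref{lem:lipschitz-matrix} transferring the entrywise $Q$-value bound to the minimax values. The paper states this argument in one line; you have simply filled in the same details.
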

\begin{proof}
   By Cauchy-Schwarz, $| V^{\star}_h(\vtheta,s)| \le \sqrt{d} (H-h+1)$. The rest of proof follows by induction via Bellman equation and Lemma~\ref{lem:lipschitz-matrix}.
\end{proof}

Equipped with this Lipschitz property, we are ready to prove a uniform concentration result. Notice $\B^d$ is the $d$-dimensional unit Euclidean ball centered at $0$.

\begin{lemma}[Uniform Concentration of $V^{\star}(\vtheta,\cdot)$]
   \label{lem:uniform_V_star}
   Consider value function class
   \begin{equation*}
     \cV_{h+1} =
     \set{V:\cS\to\R~\mid~V(\cdot)=V_{h+1}^{\star}\left( \vtheta,\cdot \right) ~\textrm{for
         all}~ \vtheta\in \B^m}.
   \end{equation*}
   There exists an absolute constant $c$, with probability at least $1-p$, for all $(s, a, b,
   k, h)$ and all $V\in \cV_{h+1}$ we have:
   \begin{equation*}
     \abs{[(\hat{\P}^k_h - \P_h)V](s, a, b)} \le
     c \sqrt{\frac{\min\{d,S\} dH^2\iota }{N_h^k(s,a)}} . 
   \end{equation*}
   where $\iota = \log(mSABKH /p)$ is a logarithmic factor.
 \end{lemma}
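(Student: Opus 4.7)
The plan is to prove this by combining a single-point concentration inequality with a covering argument, using two different covers to extract the $\min\{d,S\}$ factor.

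First, I would fix $\vtheta\in\B^m$ and $(s,a,b,h,k)$ and establish pointwise concentration. Since the next-state samples used to build $\hat{\P}^k_h(\cdot\mid s,a,b)$ are i.i.d.\ draws from $\P_h(\cdot\mid s,a,b)$ (conditionally on visiting $(s,a,b)$ at step $h$), and by Cauchy--Schwarz $\|V^{\star}_{h+1}(\vtheta,\cdot)\|_\infty\le \sqrt{d}H$, an Azuma--Hoeffding / Bernstein-style inequality combined with a standard stopping-time argument (or a union bound over $N^k_h(s,a,b)\in[K]$) yields
\[
|[(\hat{\P}^k_h-\P_h)V^{\star}_{h+1}(\vtheta,\cdot)](s,a,b)|\le c\sqrt{\frac{dH^2\,\iota_0}{N^k_h(s,a,b)}},
\]
with probability $1-p_0$. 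A union bound over $(s,a,b,h,k)$ inflates $\iota_0$ by $\log(SABHK)$, which is absorbed into $\iota$.

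Next I would upgrade this to be uniform over $\vtheta\in\B^m$ by a covering argument, and crucially use two different covers to get the $\min\{d,S\}$ factor. In Case A ($d\le S$), build a Euclidean $\epsilon$-net $\cN_\epsilon\subset\B^d$ with $|\cN_\epsilon|\le(3/\epsilon)^d$; a union bound over $\cN_\epsilon$ costs $d\log(3/\epsilon)$ in $\iota$. For arbitrary $\vtheta$, pick the nearest $\vtheta_0\in\cN_\epsilon$; Lemma~\ref{lem:lipschitz-markov} gives $\|V^{\star}_{h+1}(\vtheta,\cdot)-V^{\star}_{h+1}(\vtheta_0,\cdot)\|_\infty\le\sqrt{d}H\epsilon$, so the ``approximation error'' $|(\hat{\P}^k_h-\P_h)(V^{\star}(\vtheta)-V^{\star}(\vtheta_0))|\le 2\sqrt{d}H\epsilon$. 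Setting $\epsilon\asymp 1/\sqrt{KN^k_h(s,a,b)}$ makes this term lower order, and the union-bound contribution produces the extra $\sqrt{d}$ factor, yielding the bound $c\sqrt{dH^2\cdot d\,\iota/N^k_h(s,a,b)}$. In Case B ($S<d$), instead cover the image $V^{\star}_{h+1}(\vtheta,\cdot)\in[-\sqrt{d}H,\sqrt{d}H]^S$ directly in $L^\infty$; the resulting $\epsilon$-net has size $(3\sqrt{d}H/\epsilon)^S$, so the union-bound contribution gives the extra $\sqrt{S}$ factor, yielding $c\sqrt{dH^2\cdot S\,\iota/N^k_h(s,a,b)}$. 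Taking the better of the two bounds gives $\sqrt{\min\{d,S\}\cdot d}$.

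The main obstacle is handling the randomness in $N^k_h(s,a,b)$ (which is a stopping time depending on prior trajectories), so that the pointwise concentration actually holds with the claimed $1/\sqrt{N^k_h(s,a,b)}$ scaling simultaneously across episodes; the standard fix is to peel over dyadic ranges of $N^k_h(s,a,b)$ or to invoke an anytime martingale inequality, either of which only costs an extra $\log K$ factor absorbed into $\iota$. The only other subtle point is verifying that the Lipschitz-in-$\vtheta$ approximation in Case A can be interchanged with the empirical measure $\hat{\P}^k_h$ uniformly, but since $\hat{\P}^k_h-\P_h$ is a signed measure of total variation at most $2$, this is immediate from $\|\cdot\|_\infty$-closeness of the two value functions.
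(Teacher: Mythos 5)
Your proposal is correct and follows essentially the same route as the paper: a pointwise Hoeffding bound at scale $\sqrt{d}H$, upgraded to uniformity over $\vtheta$ via a Euclidean $\epsilon$-net on $\B^d$ combined with the Lipschitz property of $V^{\star}$ in $\vtheta$ (giving the $d^2$ factor), and separately the standard $L^\infty$ cover of the $S$-dimensional value-function space (giving the $dS$ factor, which the paper obtains by citing Lemma 12 of \citet{bai2020provable}), then taking the minimum. Your explicit treatment of the random visit counts $N^k_h$ is a point the paper glosses over but handles implicitly in the same standard way.
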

 \begin{proof}
Let $\mathcal{D}_\eps$ be an $\eps$-covering of $\B^d$ in the $\cL^2$ norm, i.e., for any $\vtheta \in \B^d$ there exists
$\hat{\vtheta}\in \mathcal{D}_\eps$ such that 
$\left\| \vtheta-\hat{\vtheta} \right\| _2 \le \eps$. For each $\hat{\vtheta}\in \mathcal{D}_\eps$, we can define the corresponding value function $V_{h+1}^{\star}\left( \hat{\vtheta},\cdot \right)$. In this way, by Lemma~\ref{lem:lipschitz-markov}, we can generate a set $\cV_\eps$ which is an $H\eps$-covering of $\cV_{h+1}$ in infinity norm, i.e., for any $V \in \cV_{h+1}$ there exists
$\hat{V}\in \cV_\eps$ such that for any $s \in \cS$,
$|V(s)-\hat{V}(s)| \le H\eps$ .

Since $|\mathcal{D}_\eps|\le (1/\eps)^d$, we also have $|\cV_\eps|\le (1/\eps)^d$. Since
$| V^{\star}_h(\vtheta,s)| \le \sqrt{d} (H-h+1)$, by Hoeffding inequality and taking union bound, with probability at least $1-p$,   \begin{align*}
     \abs{\sup_{\hat{V}\in\cV_\eps} [(\hat{\P}^k_h - \P_h)V](s, a, b)} \le \cO \left(  \sqrt{\frac{d^2H^2\iota' }{N_h^k(s,a)}} \right).
\end{align*}
where $\iota' = \iota +\log 1/\eps$.

At the same time, for any $V\in\cV_{h+1}$,
there exists $\hat{V}\in\cV_\eps$ such that $\sup_s |V(s) -
\hat{V}(s)|\le \sqrt{d}H\eps$. Therefore, 
$$
\abs{[(\hat{\P}^k_h - \P_h)V](s, a, b)} \le \abs{[(\hat{\P}^k_h - \P_h)\hat{V}](s, a, b)} +\sqrt{d}H \eps.
$$

Taking $\eps = d\iota/N_h^k(s,a)$ proves 
\begin{equation*}
   \abs{[(\hat{\P}^k_h - \P_h)V](s, a, b)} \le
   c \sqrt{\frac{d^2H^2\iota }{N_h^k(s,a)}} . 
 \end{equation*}
 
 Similarly we also have (for example see Lemma 12 in \cite{bai2020provable})
 \begin{equation*}
   \abs{[(\hat{\P}^k_h - \P_h)V](s, a, b)} \le
   c \sqrt{\frac{ dSH^2\iota }{N_h^k(s,a)}}, 
 \end{equation*} 
 \end{proof}
which completes the proof.

Using the concentration result, we can prove the "lower confidence bounds" are indeed lower bounds with high probability. To do this, we need to introduce a little more notation.

Similar to $V^{\star}_h$, we can also define $Q^{\star}$. By Bellman equation we have
 $$Q^{\star}_h(\vtheta,s,a,b)=[\vtheta \cdot \vr_h+ \P_hV^{\star}_{h+1}(\vtheta,\cdot)](s,a,b).$$ 

\begin{lemma}[Upper confidence bound]
   \label{lem:ULCB}
	With probability $1-p$, for all $h,s,a,b$ and $k\in[K]$, we have
   \begin{equation}
      Q^{k}_h(s,a,b) \le Q^{\star}_h(\vtheta^k,s,a,b),  \,\,\,\, 
		V^{k}_h(s) \le V^{\star}_h(\vtheta^k,s).
	\end{equation}
\end{lemma}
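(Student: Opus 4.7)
The plan is to prove both inequalities jointly by backward induction on $h$, from $h = H+1$ down to $h = 1$, conditioning throughout on the $1-p$ event supplied by Lemma~\ref{lem:uniform_V_star}. The base case $h = H+1$ is immediate since $V^k_{H+1} \equiv 0 \equiv V^\star_{H+1}(\vtheta^k, \cdot)$, and correspondingly $Q^k_{H+1} \equiv Q^\star_{H+1}$. For the inductive step I would first pass from the $V$-bound at step $h+1$ to the $Q$-bound at step $h$, and then from the $Q$-bound at step $h$ to the $V$-bound at step $h$.

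For the $V$-to-$Q$ pass, fix $(h,s,a,b)$ and split on which branch of the $\max$ in Line~7 of Algorithm~\ref{alg:VI-Hoeffding} is active. If the clipping branch is active, so $Q^k_h(s,a,b) = -\sqrt{d}H$, the bound is automatic: since $\|\vtheta^k\|_2 \le 1$ and $\vr_h \in [0,1]^d$, we have $\vtheta^k \cdot \vr_h \ge -\sqrt{d}$ pointwise, hence $V^\star_{h+1}(\vtheta^k, \cdot) \ge -\sqrt{d}H$ and $Q^\star_h(\vtheta^k, s, a, b) \ge -\sqrt{d}H$. Otherwise, I would decompose
\[
Q^\star_h(\vtheta^k, s, a, b) - Q^k_h(s, a, b) = \bigl[(\P_h - \hat{\P}^k_h) V^\star_{h+1}(\vtheta^k, \cdot)\bigr](s,a,b) + \hat{\P}^k_h\bigl[V^\star_{h+1}(\vtheta^k, \cdot) - V^k_{h+1}\bigr](s,a,b) + \beta^k_h(s,a,b).
\]
The middle term is nonnegative by the inductive hypothesis (and the fact that $\hat{\P}^k_h$ is a probability distribution), and the first term is bounded in absolute value by $\beta^k_h(s,a,b)$ using Lemma~\ref{lem:uniform_V_star} applied to $V^\star_{h+1}(\vtheta^k, \cdot) \in \cV_{h+1}$. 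The right-hand side is therefore nonnegative, provided the constant $c$ inside $\beta^k_h$ is chosen to dominate the constant in Lemma~\ref{lem:uniform_V_star}.

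For the $Q$-to-$V$ pass, observe that at every $s$ the iterate $V^k_h(s) = \D_{\pi^k_h}[Q^k_h](s)$ equals the minimax value of the matrix $Q^k_h(s, \cdot, \cdot)$ because $\pi^k_h(s)$ is set to $\NASH(Q^k_h(s, \cdot, \cdot))$; analogously, $V^\star_h(\vtheta^k, s)$ is the minimax value of $Q^\star_h(\vtheta^k, s, \cdot, \cdot)$. Applying Lemma~\ref{lem:lipschitz-matrix} with $\mathbf{Z} = Q^\star_h(\vtheta^k, s, \cdot, \cdot) - Q^k_h(s, \cdot, \cdot) \ge 0$ (from the preceding paragraph) immediately yields $V^k_h(s) \le V^\star_h(\vtheta^k, s)$, closing the induction.

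The one subtle point is that $\vtheta^k$ is an $\cF_h^{k-1}$-measurable, data-dependent random direction, so a pointwise Hoeffding bound on $(\hat{\P}^k_h - \P_h) V^\star_{h+1}(\vtheta^k, \cdot)$ would not suffice. It is precisely the uniform-over-$\vtheta$ concentration of Lemma~\ref{lem:uniform_V_star}, obtained via the Lipschitz estimate of Lemma~\ref{lem:lipschitz-markov} and an $\epsilon$-net argument on $\B^d$, that lets the inductive decomposition go through; absorbing this into the construction of the bonus $\beta^k_h$ is the main technical content, and everything else reduces to bookkeeping.
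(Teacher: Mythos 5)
Your proof is correct and follows essentially the same route as the paper's: backward induction alternating a $Q$-step (the decomposition into $(\P_h-\hat{\P}^k_h)V^\star_{h+1}(\vtheta^k,\cdot)$ plus $\hat{\P}^k_h[V^\star_{h+1}-V^k_{h+1}]$, controlled by Lemma~\ref{lem:uniform_V_star} and the inductive hypothesis respectively) with a $V$-step using monotonicity of the matrix-game value. Your treatment is in fact slightly more careful than the paper's on two points it glosses over---the clipping branch $Q^k_h=-\sqrt{d}H$ and the data-dependence of $\vtheta^k$---though note that Lemma~\ref{lem:lipschitz-matrix} as stated gives a two-sided Lipschitz bound rather than the one-sided monotonicity you need; the elementary fact that the Nash value is monotone in the entrywise order (which is what the paper's chain of inequalities through $\D_{\mu^k\times\upsilon}$ amounts to) is what actually closes the $Q$-to-$V$ pass.
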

\begin{proof}
   Again, the proof is by backward induction. Suppose the bounds hold for the Q-values in the $(h+1)$-th step, we now establish the bounds for the values in the $(h+1)$-th step and Q-values in the $h$-step. Consider a fixed state $s$,
	  
    \begin{equation}
        \begin{aligned}
            V^{k}_{h+1}(s)&=
            \D_{\pi^k_h} Q^{k}_{h+1}(s)\\
            &= \min_{\upsilon} \D_{\mu^k_{h+1} \times \upsilon}   Q^{k}_{h+1}(s)\\
            &\le \min_{\upsilon} \D_{\mu^k_{h+1} \times \upsilon}   Q^{\star}_h(\vtheta^k,\cdot,\cdot,\cdot)(s)\\
             &\le V^{\star}_h(\vtheta^k,s).
        \end{aligned}
        \end{equation}

	Now consider a fixed triple $(s,a,b)$ at $h$-th step. We have    
    \begin{equation}\label{eq:Q-decomposition}
        \begin{aligned}
			Q^{k}_h(s,a,b) - Q^{\star}_h(\vtheta^k,s,a,b) 
			= &(\widehat{\P}_h^kV^{k}_{h+1} 
            - \P_h V^{\star}_{h+1}(\vtheta^k,\cdot)  - \beta_h^k)(s,a,b)\\
            \overset{\left( i \right)}{\le} & [(\widehat{\P}_h^k- \P_h) V^{\star}_{h+1}(\vtheta^k,\cdot)] (s,a,b) - \beta_h^k(s,a)\\
            \overset{\left( ii \right)}{\le}& 0.
        \end{aligned}
        \end{equation}
   where $(i)$ is by induction hypothesis and $(ii)$ is by Lemma~\ref{lem:uniform_V_star} and the definition of $\beta$.
\end{proof}

A handy decomposition will help us simplify the target we want to bound in Theorem~\ref{thm:basic-approachable} and Theorem~\ref{thm:basic-delta-approachable}. To simplify the notation, when there are no confusion, we use the shorthand $V^{\mu ^k,\nu ^k}$ and $Q^{\mu ^k,\nu ^k}$ for $\vtheta^k \cdot \vV^{\mu ^k,\nu ^k}$ and $\vtheta^k \cdot \vQ^{\mu ^k,\nu ^k}$.

\begin{lemma}[Regret decomposition]
   \label{lem:regret-decomposition}
   The "regret" $[V_{1}^{\mu ^k,\nu ^k}-V_{1}^{k}](s_{1}^{k})$ can be decompsed into
   $$
[V_{1}^{\mu ^k,\nu ^k}-V_{1}^{k}](s_{1}^{k})\le \sum_{h=1}^H{\left( \beta _{h}^{k}+\xi _{h}^{k}+\zeta _{h}^{k} \right)}
$$
where 
\begin{align*}
   \xi _{h}^{k}:=&\left(\D_{\mu ^k\times \nu ^k}Q_{h}^{\mu ^k,\nu ^k}- \D_{\mu ^k\times \nu ^k}Q_{h}^{k} \right) (s_{h}^{k})-\left( Q_{h}^{\mu ^k,\nu ^k}-Q_{h}^{k} \right) \left( s_{h}^{k},a_{h}^{k},b_{h}^{k} \right) \in \left[ -4\sqrt{d}H,4\sqrt{d}H \right], 
   \\
   \zeta _{h}^{k}:=&\P_h\left( V_{h+1}^{\mu ^k,\nu ^k}-V_{h+1}^{k} \right) \left( s_{h}^{k},a_{h}^{k},b_{h}^{k} \right) -\left( V_{h+1}^{\mu ^k,\nu ^k}-V_{h+1}^{k} \right) \left( s_{h}^{k} \right) \in \left[ -4\sqrt{d}H,4\sqrt{d}H \right]  
\end{align*}
are martingale difference sequences adapted to $\cF_h^k$.
\end{lemma}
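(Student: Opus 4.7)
The plan is to establish the inequality by first deriving a one-step (per-$h$) recursive bound and then telescoping it from $h=1$ to $h=H$, using the terminal condition $V_{H+1}^{\mu^k,\nu^k}=V_{H+1}^k=0$. The per-step inequality I aim for is
\begin{align*}
(V_h^{\mu^k,\nu^k}-V_h^k)(s_h^k) \le \beta_h^k(s_h^k,a_h^k,b_h^k) + \xi_h^k + \zeta_h^k + (V_{h+1}^{\mu^k,\nu^k}-V_{h+1}^k)(s_{h+1}^k),
\end{align*}
from which the lemma follows by summing over $h$.

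To derive the per-step bound, I first note that by construction $\pi_h^k$ is a Nash policy of the matrix game with payoff $Q_h^k(s,\cdot,\cdot)$ in which the agent is the min-player; hence for the marginal $\mu_h^k$ of $\pi_h^k$ and the adversary's actual policy $\nu_h^k$,
\begin{align*}
V_h^k(s_h^k) \;=\; \max_{\nu}\D_{\mu_h^k\times\nu}Q_h^k(s_h^k) \;\ge\; \D_{\mu_h^k\times\nu_h^k}Q_h^k(s_h^k).
\end{align*}
Combining this with the Bellman equation $V_h^{\mu^k,\nu^k}(s)=\D_{\mu_h^k\times\nu_h^k}Q_h^{\mu^k,\nu^k}(s)$ and plugging in the definition of $\xi_h^k$ to trade the expected gap for its sampled value yields
\begin{align*}
(V_h^{\mu^k,\nu^k}-V_h^k)(s_h^k) \le (Q_h^{\mu^k,\nu^k}-Q_h^k)(s_h^k,a_h^k,b_h^k) + \xi_h^k.
\end{align*}
Next I insert the Bellman equation $Q_h^{\mu^k,\nu^k}=r_h+\P_h V_{h+1}^{\mu^k,\nu^k}$ and the algorithmic lower bound $Q_h^k\ge r_h+\hat{\P}_h^k V_{h+1}^k-\beta_h^k$ (the truncation to $-\sqrt{d}H$ only preserves this inequality), decompose
\begin{align*}
\P_h V_{h+1}^{\mu^k,\nu^k}-\hat{\P}_h^kV_{h+1}^k \;=\; \P_h\bigl(V_{h+1}^{\mu^k,\nu^k}-V_{h+1}^k\bigr) + (\P_h-\hat{\P}_h^k)V_{h+1}^k,
\end{align*}
absorb the second term into $\beta_h^k$ via Lemma~\ref{lem:uniform_V_star} (possibly enlarging the constant $c$), and identify the first term with $\zeta_h^k+(V_{h+1}^{\mu^k,\nu^k}-V_{h+1}^k)(s_{h+1}^k)$ through the definition of $\zeta_h^k$. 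This completes the per-step inequality.

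It remains to justify the martingale-difference and boundedness claims. Both $\xi_h^k$ and $\zeta_h^k$ have zero conditional mean by construction: for $\xi_h^k$, $(a_h^k,b_h^k)$ is sampled from $\mu_h^k\times\nu_h^k$ given $\cF_h^k$ (restricted to information through step $h$'s state), so $\E[\xi_h^k|\cF_h^k]=0$; for $\zeta_h^k$, $s_{h+1}^k\sim\P_h(\cdot\mid s_h^k,a_h^k,b_h^k)$ makes the conditional mean vanish. The uniform boundedness by $4\sqrt{d}H$ follows from Cauchy--Schwarz applied to $\vtheta^k$ (of norm $\le 1$) and the fact that both $\vr_h$ and $\vQ^{\mu^k,\nu^k}$ lie in $[0,H]^d$, while $Q_h^k$ is clipped to $[-\sqrt{d}H,\sqrt{d}H]$; each of the two summands in $\xi_h^k$ (and in $\zeta_h^k$) is therefore in $[-2\sqrt{d}H,2\sqrt{d}H]$. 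The one subtle point I expect to be the main obstacle is handling the term $(\P_h-\hat{\P}_h^k)V_{h+1}^k$: the value function $V_{h+1}^k$ is data-dependent, so Lemma~\ref{lem:uniform_V_star}'s covering argument needs to be applied at the right granularity. This is why $\beta_h^k$ carries the $\min\{d,S\}$ factor---one obtains the $d$ option by covering the class $\{V_{h+1}^{\star}(\vtheta,\cdot):\vtheta\in\B^d\}$ and invoking optimism ($V_{h+1}^k\le V_{h+1}^{\star}(\vtheta^k,\cdot)$) together with the algorithm's monotone updates, and the $S$ option by a crude $\eps$-net over $[-\sqrt{d}H,\sqrt{d}H]^{S}$. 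Either suffices to swap $\hat{\P}_h^k$ for $\P_h$ inside the bound at the cost of at most a constant inflation of $\beta_h^k$.
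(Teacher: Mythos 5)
Your proposal follows essentially the same route as the paper's proof: a one-step recursion obtained from the Nash (minimax) property of the planning policy at step $(i)$, insertion of the Bellman equations and the algorithmic bonus, identification of the two martingale difference terms $\xi_h^k$ and $\zeta_h^k$, and telescoping with $V_{H+1}=0$. The only difference is that you explicitly isolate and discuss the correction term $(\P_h-\hat{\P}_h^k)V_{h+1}^k$ and the covering subtlety for the data-dependent $V_{h+1}^k$, which the paper's proof absorbs silently into $\beta_h^k$ by writing that step as an equality; your treatment is, if anything, the more careful one.
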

\begin{proof}
   We have
\begin{align*}
   [V_{h}^{\mu ^k,\nu ^k}-V_{h}^{k}](s_{h}^{k})
   =&\left( \D_{\mu ^k\times \nu ^k}Q_{h}^{\mu ^k,\nu ^k}-\D_{\pi ^k}Q_{h}^{k} \right) (s_{h}^{k})
   \\
   \overset{\left( i \right)}{\le}&\left( \D_{\mu ^k\times \nu ^k}Q_{h}^{\mu ^k,\nu ^k}-\D_{\mu ^k\times \nu ^k}Q_{h}^{k} \right) (s_{h}^{k})
   \\
   =&\left( Q_{h}^{\mu ^k,\nu ^k}-Q_{h}^{k} \right) \left( s_{h}^{k},a_{h}^{k},b_{h}^{k} \right) +\xi _{h}^{k}
   \\
   =&\P_h\left( V_{h+1}^{\mu ^k,\nu ^k}-V_{h+1}^{k} \right) \left( s_{h}^{k},a_{h}^{k},b_{h}^{k} \right) +\beta _{h}^{k}+\xi _{h}^{k}
   \\
   =&\left( V_{h+1}^{\mu ^k,\nu ^k} -V_{h+1}^{k}\right) \left( s_{h+1}^{k} \right) +\beta _{h}^{k}+\xi _{h}^{k}+\zeta _{h}^{k}.
\end{align*}
where $(i)$ is by the definition of Nash equilibirum.

Repete the recursion we have 
$$
[V_{1}^{\mu ^k,\nu ^k}-V_{1}^{k}](s_{1}^{k})\le \sum_{h=1}^H{\left( \beta _{h}^{k}+\xi _{h}^{k}+\zeta _{h}^{k} \right)}.
$$
\end{proof}

The sum of the exploration bonus can be bounded easily.

\begin{lemma}[Sum of bonus]
   \label{lem:sum-of-bonus}
   $$
   \sum_{k=1}^K{\sum_{h=1}^H{\beta _{h}^{k}}}\le O\left( \sqrt{\min\{d,S\}dH^4SABK\iota} \right) 
$$
\end{lemma}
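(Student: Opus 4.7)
The plan is a standard visitation/pigeonhole argument. First I would extract the constants from the bonus: since $\beta_h^k(s,a,b) = C\sqrt{\min\{d,S\}dH^2\iota/N_h^k(s,a,b)}$, the sum factors as
$$
\sum_{k=1}^{K}\sum_{h=1}^{H}\beta_h^k(s_h^k,a_h^k,b_h^k)
\;=\; C\sqrt{\min\{d,S\}dH^2\iota}\cdot\sum_{h=1}^{H}\sum_{k=1}^{K}\frac{1}{\sqrt{N_h^k(s_h^k,a_h^k,b_h^k)}}.
$$
Thus everything reduces to bounding the double sum of inverse square roots of visit counts. (As is standard, I treat the first visit to a triple, for which $N_h^k=0$, separately; that contributes at most $HSAB$ to the total, which is lower-order.)

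Next, for each fixed step $h$, I would swap the order of summation by grouping episodes according to the triple $(s,a,b)$ visited at step $h$. If $(s,a,b)$ is visited at step $h$ in episodes $k_1<k_2<\dots<k_{M}$, then $N_h^{k_j}(s,a,b)=j-1$, so the contribution of this triple is
$$
\sum_{j=1}^{M}\frac{1}{\sqrt{j-1\vee 1}}\;\le\; 1+2\sqrt{M}\;\le\;3\sqrt{M},
$$
where $M=N_h^{K+1}(s,a,b)$. Summing over triples and applying Cauchy--Schwarz, together with the constraint $\sum_{s,a,b}N_h^{K+1}(s,a,b)=K$, I get
$$
\sum_{s,a,b}\sqrt{N_h^{K+1}(s,a,b)}\;\le\;\sqrt{SAB\cdot\sum_{s,a,b}N_h^{K+1}(s,a,b)}\;=\;\sqrt{SAB\,K}.
$$
This yields $\sum_{k=1}^{K}1/\sqrt{N_h^k(s_h^k,a_h^k,b_h^k)}\le O(\sqrt{SAB\,K})$ for every fixed $h$.

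Finally, summing the previous bound over the $H$ steps contributes an extra factor of $H$, giving
$$
\sum_{h=1}^{H}\sum_{k=1}^{K}\frac{1}{\sqrt{N_h^k(s_h^k,a_h^k,b_h^k)}}\;\le\;O\!\left(H\sqrt{SAB\,K}\right).
$$
Multiplying by the prefactor $C\sqrt{\min\{d,S\}dH^2\iota}$ produces exactly $O(\sqrt{\min\{d,S\}dH^4SAB K\iota})$, which is the claim. There is no real obstacle here; the only subtlety is bookkeeping around the $N_h^k=0$ case, which is resolved by the $\vee 1$ convention and contributes only a lower-order additive term that is absorbed into the $O(\cdot)$.
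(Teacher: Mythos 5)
Your proof is correct and follows essentially the same route as the paper's: rewrite the bonus sum as a sum of inverse square roots of visit counts, regroup by the triple $(s,a,b)$ visited at each step $h$ (pigeonhole), bound $\sum_{t\le M} t^{-1/2}$ by $O(\sqrt{M})$, and finish with Cauchy--Schwarz using $\sum_{s,a,b} N_h^{K}(s,a,b)=K$ before summing over the $H$ steps. Your explicit handling of the $N_h^k=0$ case is a small bookkeeping refinement the paper leaves implicit; nothing substantive differs.
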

\begin{proof}
   By definition of $\beta$ and pigeonhole principle,
   \begin{align*}
      \sum_{k=1}^K{\sum_{h=1}^H{\beta _{h}^{k}}}\le& \sum_{k=1}^K{\sum_{h=1}^H{O\left( \sqrt{\frac{\min\{d,S\}dH^2\iota}{N_{h}^{k}\left( s_{h}^{k},a_{h}^{k},b_{h}^{k} \right)}} \right)}}
\\
\le& O\left( \sum_{h=1}^H{\sum_{s,a,b}^{}{\sum_{t=1}^{N_{h}^{K}\left( s,a,b \right)}{\sqrt{\frac{\min\{d,S\}dH^2\iota}{t}}}}} \right) 
\\
\le& O\left( \sum_{h=1}^H{\sum_{s,a,b}^{}{\sqrt{\min\{d,S\}dH^2\iota N_{h}^{K}\left( s,a,b \right)}}} \right) 
\\
\le& O\left( \sqrt{\min\{d,S\}dH^4SABK\iota} \right) .
   \end{align*}
\end{proof}

Now we are ready to prove Theorem~\ref{thm:basic-approachable} and Theorem~\ref{thm:basic-delta-approachable}.

\begin{proof}[Proof of Theorem~\ref{thm:basic-approachable}]
   The squared distance can be demcoposed by 
   \begin{align*}
  \dist\left( \vW^k,W^{\star} \right) ^2=&\lVert \vW^k-\Pi _{W^*}\left( \vW^k \right) \rVert _{2}^{2}
\\
\overset{\left( i \right)}{\le} & \lVert \vW^k-\Pi _{W^{\star}}\left( \vW^{k-1} \right) \rVert _{2}^{2}
\\
=&\lVert \frac{k-1}{k}\vW^{k-1}+\frac{1}{k}\mathbf{\hat{V}}^k-\Pi _{W^{\star}}\left( \vW^{k-1} \right) \rVert _{2}^{2}
\\
=&\left( \frac{k-1}{k} \right) ^2\dist\left( \vW^{k-1},W^{\star} \right) ^2+\frac{1}{k^2}\underset{\left( A \right)}{\underbrace{\left\| \mathbf{\hat{V}}^k-\Pi _{W^{\star}}\left( \vW^{k-1} \right) \right\| _{2}^{2}}}\\
&+\frac{2\left( k-1 \right)}{k^2}\underset{\left( B \right)}{\underbrace{\left( \vW^{k-1}-\Pi _{W^{\star}}\left( \vW^{k-1} \right) \right) \cdot \left( \mathbf{\hat{V}}^k-\Pi _{W^{\star}}\left( \vW^{k-1} \right) \right) }} 
\end{align*}
where $(i)$ is by the definition of (Euclidean) projection.

By boundedness of distance, $(A)=\lVert \mathbf{\hat{V}}^k-\Pi _{W^\star}\left( \vW^{k-1} \right) \rVert _{2}^{2}\le dH^2$.

To bound $(B)$, we notice if $\vW^{k-1} \in W^{\star}$, $\dist\left( \vW^{k-1},W^{\star} \right)$ and 

$$
(B)= 0 = \dist\left( \vW^{k-1},W^{\star} \right) \vtheta^k\cdot \left(\mathbf{\hat{V}}^k-\Pi _{W^{\star}}\left( \vW^{k-1} \right)  \right) 
$$
for any $\vtheta^k$. Otherwise, 

\begin{align*}
(B)= &\left( \vW^{k-1}-\Pi _{W^{\star}}\left( \vW^{k-1} \right) \right) \cdot \left( \mathbf{\hat{V}}^k-\Pi _{W^{\star}}\left( \vW^{k-1} \right) \right) 
\\
=&\dist\left( \vW^{k-1},W^{\star} \right) \vtheta^k\cdot \left( \mathbf{\hat{V}}^k-\Pi _{W^{\star}}\left( \vW^{k-1} \right)  \right) 
\\
\overset{\left( i \right)}{\le}&\dist\left( \vW^{k-1},W^{\star} \right) \left( \vtheta^k\cdot \mathbf{\hat{V}}^k-V_{1}^{\star}\left( \vtheta^k,s_{1}^{k} \right) \right) 
\\
\overset{\left( ii \right)}{\le}&\dist\left( \vW^{k-1},W^{\star} \right) \left(\vtheta^k\cdot \mathbf{\hat{V}}^k- V_{1}^{k}\left( s_{1}^{k} \right) \right) 
\\
=&\dist\left( \vW^{k-1},W^{\star} \right) \left[ \left( V_{1}^{\mu ^k,\nu ^k}-V_{1}^{k} \right) \left( s_{1}^{k} \right) +\left( \vtheta^k\cdot \mathbf{\hat{V}}^k -V_{1}^{\mu ^k,\nu ^k}\left( s_{1}^{k} \right)\right)  \right] 
\\
\overset{\left( iii \right)}{\le}&\dist\left( \vW^{k-1},W^{\star} \right) \left[ \sum_{h=1}^H{\left( \beta _{h}^{k}+\xi _{h}^{k}+\zeta _{h}^{k} \right)}+\left( \vtheta^k\cdot \mathbf{\hat{V}}^k- V_{1}^{\mu ^k,\nu ^k}\left( s_{1}^{k} \right)\right)  \right]   
\end{align*}
where $(i)$ is by Approachability (Assumption~\ref{ass:Approachability}), $(ii)$ is by optimism (Lemma~\ref{lem:ULCB}) and $(iii)$ is by regret decomposition (Lemma~\ref{lem:regret-decomposition}). 

Putting everything together and repete the recursion we have 

$$
K\dist\left( \vW^K,W^* \right) ^2\le dH^2+2\sum_{k=1}^K{\frac{k-1}{K}}\dist\left( \vW^{k-1},W^{\star} \right) \left[ \sum_{h=1}^H{\left( \beta _{h}^{k}+\xi _{h}^{k}+\zeta _{h}^{k} \right)}+\left(\vtheta^k\cdot \mathbf{\hat{V}}^k- V_{1}^{\mu ^k,\nu ^k}\left( s_{1}^{k} \right)  \right) \right] 
$$

Now we can begin to prove the theorem by induction. Suppose $$
\dist \left( \vW^k,W^{\star} \right) \le c_0 \sqrt{\min\{d,S\}dH^4SAB\iota/k}.
$$ for $\forall k \le K-1$, let's prove the claim holds for $k=K$. We first consider the optimistic bonus.

\begin{align*}
   \sum_{k=1}^K{\sum_{h=1}^H{\frac{k-1}{K}\dist\left( \vW^{k-1},W^{\star} \right) \beta _{h}^{k}}}\le& c_0\sqrt{\min\{d,S\}dH^4SAB\iota}\sum_{k=1}^K{\sum_{h=1}^H{\frac{\sqrt{k}}{K}\beta _{h}^{k}}}
\\
\le& c_0\sqrt{\min\{d,S\}dH^4SAB\iota /K}\sum_{k=1}^K{\sum_{h=1}^H{\beta _{h}^{k}}}
\\
\overset{\left( i \right)}{\le}&c_0c_1\min\{d,S\}dH^4SAB\iota 
\end{align*}
where $(i)$ is by Lemma~\ref{lem:sum-of-bonus} and $c_1$ is the constant coefficient there.

The remaining terms are martingale difference sequence, so we only need to bound the variance. 

\begin{align*}
   \sum_{k=1}^K{\sum_{h=1}^H{\frac{k-1}{K}\dist\left( \vW^{k-1},W^{\star} \right) \xi _{h}^{k}}}\overset{\left( i \right)}{\le}&c_2\sqrt{\sum_{k=1}^K{\left( \frac{k-1}{K} \right) ^2}\dist\left( \vW^{k-1},W^{\star} \right) ^2dH^3\iota}
\\
\le& c_2c_0\sqrt{\min\{d,S\}d^2H^4SAB\iota ^2}\sqrt{\sum_{k=1}^K{\frac{kH^3}{K^2}}}
\\
\le& c_0c_1\sqrt{\min\{d,S\}d^2H^7SAB\iota ^2}
\end{align*}
where $(i)$ is by Azuma-Hoeffding. Similarly, $\sum_{k=1}^K{\sum_{h=1}^H{\frac{k-1}{K}\dist\left( \vW^{k-1},W^{\star} \right) \zeta _{h}^{k}}}\le c_2c_0\sqrt{\min\{d,S\}d^2H^7SAB\iota ^2}$.

The last term can be handled similarly but we need to be more carefully because different coordinates of $\mathbf{\hat{V}}^k$ are correlated.  

\begin{align*}
   \sum_{k=1}^K{\frac{k-1}{K}\dist\left( \vW^{k-1},W^{\star} \right) \left(\vtheta^k\cdot \mathbf{\hat{V}}^k- V_{1}^{\mu ^k,\nu ^k}\left( s_{1}^{k} \right)  \right)} \le& c_2\sum_{j=1}^m{\sqrt{\sum_{k=1}^K{\left( \frac{k-1}{K} \right) ^2}\dist\left( \vW^{k-1},W^{\star} \right) ^2\left( \vtheta^k_j \right) ^2H^2\iota}}
\\
\le& c_2c_0\sqrt{\min\{d,S\}dH^4SAB\iota ^2}\sum_{j=1}^m{\sqrt{\sum_{k=1}^K{\left( \vtheta^k_j \right) ^2\frac{kH^2}{K^2}}}}
\\
\overset{\left( i \right)}{\le}& c_2c_0\sqrt{\min\{d,S\}dH^4SAB\iota ^2}\sqrt{m\sum_{j=1}^m{\sum_{k=1}^K{\left( \vtheta^k_j \right) ^2\frac{kH^2}{K^2}}}}
\\
=&c_2c_0\sqrt{\min\{d,S\}d^2H^4SAB\iota ^2}\sqrt{\sum_{k=1}^K{\frac{kH^2}{K^2}}}
\\
\le& c_2c_0\sqrt{\min\{d,S\}d^2H^6SAB\iota ^2}
\end{align*}
where $(i)$ is by Cauchy-Schwarz.

After taking a union bound w.r.t. $[K]$, to prove the claim for $k=K$, we only need to guarantee 
$$
dH^2+8c_0\max \left\{ c_1,c_2 \right\} \min\{d,S\}dH^4SAB\iota \le c_0^2\min\{d,S\}dH^4SAB\iota 
$$
which is satisfied as long as $c_0\ge \max \left\{ 16\max \left\{ c_1,c_2 \right\} ,\sqrt{\frac{2}{SABH^2\iota}} \right\}$.
\end{proof}

We can prve Theorem~\ref{thm:basic-delta-approachable} similarly.

\begin{proof}[Proof of Theorem~\ref{thm:basic-delta-approachable}]

As in the proof of Theorem~\ref{thm:basic-approachable} we have 
$$
K\dist\left( \vW^K,W^* \right) ^2\le dH^2+2\sum_{k=1}^K{\frac{k-1}{K}}\dist\left( \vW^{k-1},W^{\star} \right) \left[\delta+ \sum_{h=1}^H{\left( \beta _{h}^{k}+\xi _{h}^{k}+\zeta _{h}^{k} \right)}+\left(\vtheta^k\cdot \mathbf{\hat{V}}^k- V_{1}^{\mu ^k,\nu ^k}\left( s_{1}^{k} \right)  \right) \right] 
$$

Again we prove the theorem by induction. Suppose $$
\dist \left( \vW^k,W^{\star} \right) \le \delta+ c_0 \sqrt{\min\{d,S\}dH^4SAB\iota/k}
$$ for $\forall k \le K-1$, let's prove the claim holds for $k=K$. Now we have a new term to bound, which is
$$
2\delta \sum_{k=1}^K{\frac{k-1}{K}}\left[ \delta +c_0\sqrt{\min \{d,S\}dH^4SAB\iota /k}+\sum_{h=1}^H{\left( \beta _{h}^{k}+\xi _{h}^{k}+\zeta _{h}^{k} \right)}+\left( \boldsymbol{\theta }^k\cdot \mathbf{\hat{V}}^k-V_{1}^{\mu ^k,\nu ^k}\left( s_{1}^{k} \right) \right) \right] =\left( A \right) +\left( B \right) +\left( C \right) 
$$
where 
$$
\left( A \right) =2\delta ^2\sum_{k=1}^K{\frac{k-1}{K}}\le \left( K-1 \right) \delta ^2,
$$
$$
\left( B \right) =2\delta \sum_{k=1}^K{\frac{k-1}{K}}c_0\sqrt{\min \{d,S\}dH^4SAB\iota /k}\le \frac{4}{3}c_0\delta \sqrt{\min \{d,S\}dH^4SAB\iota K},
$$
and by Lemma~\ref{lem:sum-of-bonus} and Azuma-Hoeffding inequality
$$
\left( C \right) =2\delta \sum_{k=1}^K{\frac{k-1}{K}}\left[ \sum_{h=1}^H{\left( \beta _{h}^{k}+\xi _{h}^{k}+\zeta _{h}^{k} \right)}+\left( \boldsymbol{\theta }^k\cdot \mathbf{\hat{V}}^k-V_{1}^{\mu ^k,\nu ^k}\left( s_{1}^{k} \right) \right) \right] \le c_1c_0\delta \sqrt{\min \{d,S\}dH^4SAB\iota K}.
$$

To prove the induction hypothesis, we only need to guarantee
\begin{align*}
   &H^2+8c_0\max \left\{ c_1,c_2 \right\} \min \{d,S\}dH^4SAB\iota +\left( K-1 \right) \delta ^2+c_1\delta \sqrt{\min \{d,S\}dH^4SAB\iota K}
\\
\le& K\left[ \delta +c_0\sqrt{\min \{d,S\}dH^4SAB\iota /K} \right] ^2.
\end{align*}

Comparing the coefficients, we can see this is satisfied by setting  $c_0\ge \max \left\{ c_1,\sqrt{\frac{2}{SABH^2\iota}} \right\}$ .

\end{proof}

\section{Proof for Section~\ref{sec:OCO} }

\begin{proof}[Proof of Theorem~\ref{thm:OCO-delta-approachable}]

The guarantee of online sub-gradient descent yields with high probability    
    
\begin{align}
\label{equ:sgd-regret}
    \underset{\left\| \vtheta \right\| \le 1}{\max}\left\{ \vtheta\cdot \sum_{k=1}^K{\mathbf{\hat{V}}^k}- \sum_{k=1}^K\underset{\vx\in \W^{\star}}{\max}\vtheta\cdot \vx \right\} \le \sum_{k=1}^K{\left\{ \vtheta^k\cdot \mathbf{\hat{V}}^k-\sum_{k=1}^K\underset{\vx\in \W^{\star}}{\max}\vtheta^k\cdot \vx \right\}}+\cO\left( \sqrt{dH^2K} \right) .
\end{align}

Since $f\left( \vx \right) =\dist\left( \vx,\W^{\star} \right) $ is a closed, 1-Lipschitz convex function, the dual representation implies 

\begin{align*}
&K\dist\left( \vW^k,W^{\star} \right)
\\  
=&\underset{\left\| \vtheta \right\| \le 1}{\max}\left\{ \vtheta\cdot \sum_{k=1}^K{\mathbf{\hat{V}}^k}- \sum_{k=1}^K\underset{\vx\in W^{\star}}{\max}\vtheta\cdot \vx \right\} 
\\
\le & \sum_{k=1}^K{\left\{ \vtheta^k\cdot \mathbf{\hat{V}}^k-\sum_{k=1}^K\underset{\vx\in W^{\star}}{\max}\vtheta^k\cdot \vx \right\}}+\cO\left( \sqrt{dH^2K} \right) 
\\
\overset{\left( i \right)}{\le}&\sum_{k=1}^K{\left\{ \vtheta^k\cdot \mathbf{\hat{V}}^k-V_{1}^{\star}\left( \vtheta^k,s_1 \right) +\delta \right\}}+\cO\left( \sqrt{dH^2K} \right) 
\\
\overset{\left( ii \right)}{\le}&K\delta +\sum_{k=1}^K{\vtheta^k\cdot \left( \mathbf{\hat{V}}^k-\vV_1^{\pi ^k,\mu ^k}\left( s_1 \right) \right)}+\sum_{k=1}^K{\left\{ \vtheta^k\cdot \vV_1^{\pi ^k,\mu ^k}\left( s_1 \right) -V_{1}^{k}\left( s_1 \right) \right\}}+\cO\left( \sqrt{dH^2K} \right) 
\\
\overset{\left( iii \right)}{\le}&K\delta +\sum_{k=1}^K{\vtheta^k\cdot \left( \mathbf{\hat{V}}^k-\vV_1^{\pi ^k,\mu ^k}\left( s_1 \right) \right)}+\sum_{k=1}^K{\sum_{h=1}^H{\left( \beta _{h}^{k}+\xi _{h}^{k}+\zeta _{h}^{k} \right)}}+\cO\left( \sqrt{dH^2K} \right) 
\\
\overset{\left( iv \right)}{\le}&K\delta +\cO\left( \sqrt{\min\{d,S\}dH^4SABK\iota} \right) 
\end{align*}
where $(i)$ is by $\delta$-approachability, $(ii)$ is by Lemma~\ref{lem:ULCB}, $(iii)$ is by Lemma~\ref{lem:regret-decomposition} and $(iv)$ is by Lemma~\ref{lem:sum-of-bonus} and Azuma-Hoeffding inequality. The claim is proved by taking the union bound with the event that \eqref{equ:sgd-regret} holds.
\end{proof}

\section{Comparison with CMDP literature.} \label{sec:cmdp-compare}
\label{sec:comparsion}

We compare our results with existing works on provably efficient algorithms for CMDP \cite{efroni2020exploration,ding2020provably,brantley2020constrained} in Table~\ref{tab:comparison}. Since the setting is a little bit different in these works, we try to unify the results as below:

\begin{list}{–}{\leftmargin=1.5em}
	\setlength\itemsep{0em}
	\item When measuring constraint violation, \citet{efroni2020exploration, ding2020provably, brantley2020constrained} all consider $\cL^{\infty}$ norm. To compare with our result we have transformed the result to $\cL^2$ norm.
	\item Comparing with the othe algorithm in Table~\ref{tab:comparison}, OptCMDP-bonus actually uses a even stronger notion of regret, by summing up only the non-negative part of the constrain violation in each coordinate. \citet{efroni2020exploration} also propose two more algorithm, OptCMDP and OptDual-CMDP, whose theoretical guarantee is similar to the ones we present and are thus ommited. 
	\item OptPrimalDual-CMDP and OPDOP need an upper bound $\rho$ of the dual variable by assuming the target set is strictly achievable. We use the geometrical assumption (Assumpion~\ref{assump:angle}) instead because the constraints measured by distance cannot be ``strictly'' satisfied (the distance function cannot be below zero). The dependence on $d$ and $\rho$ is not written explicitly in~\citep{ding2020provably}.
	\item \citet{ding2020provably} also considers linear approximation setting and we translate their result to tabular setting. \citet{brantley2020constrained} also considers knapsack setting.
	\item \citet{qiu2020upper} consider MDPs with adversarial reward functions and linear constraints. They assume $\cS_{i} \cap \cS_j = \emptyset$ for $i\neq j$. Therefore, if $|\cS_2| = \cdots = |\cS_H| = S$ then $|\cS| = (H - 1) S + 2 = \cO(HS)$, according to which we translate their regret bound to $\tlO(\sqrt{H^5 S^2 A K})$ in our setting.
	\item \cite{qiu2020upper} and \cite{ding2020provably} also considers adversarial reward but requires full information feedback then. Notice to handle adversarial transition kernels, we still need to game-theoretical formulation in the previous sections.
\end{list}

\section{Proofs for Section~\ref{sec:MDP_upper}}

Besides the notations we introduce at the beginning of Appendix~\ref{sec:basic_proof}, we also set the empirical and population variance operator by
$$
\widehat{\V}^k_h[V](s,a) :=\Var_{s'\sim \Phat^k_h(\cdot|s,a)}V(s'), \,\,\,\,\, \V_h[V](s,a) :=\Var_{s'\sim \P_h(\cdot|s,a)}V(s')
$$
for any function $V \in [-\sqrt{d}H,\sqrt{d}H]^{S}$.

As a result, the bonus terms can be written as
\begin{equation}
\beta := C\big(\sqrt{\frac{\hat{\V}_h \low{V}_{h+1}(s,a)\min\{d,S\}\iota}{N_h^k(s,a)}} +\frac{1}{H} \hat{\P}_{h}(\up{V}_{h+1}-\low{V}_{h+1})(s,a) +\frac{ \min\{d,S\}\sqrt{d}H^2\iota}{N_h^k(s,a)}\big)
\end{equation}
for some absolute constant $C>0$, which is \emph{different} from the one we used in Appendix~\ref{sec:basic_proof}. Another major difference from that Appendix~\ref{sec:basic_proof} is that now we are considering MDP instead of MG.

We still begin with optimism, which is a upper and lower version of Lemma~\ref{lem:ULCB}:
\begin{lemma}\label{lem:optimism_Bernstein}
	With probability $1-p$, for all $h,s,a$ and $k\in[K]$, we have
	\begin{equation}
	\up{Q}^{k}_h(s,a) \ge Q^{\star}_h(s,a) \ge  \low{Q}^{k}_h(s,a), \,\,\,\,\,\, \up{V}^{k}_h(s) \ge V^{\star}_h(s) \ge \low{V}^{k}_h(s).
	\end{equation}
	
\end{lemma}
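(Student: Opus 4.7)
The plan is to prove both $\up{Q}^k_h(s,a) \ge Q^\star_h(\vtheta^k,s,a) \ge \low{Q}^k_h(s,a)$ (and the corresponding $V$-bounds) by backward induction on $h$, in parallel, for all episodes $k$ simultaneously, after first establishing a good event on which all the concentration inequalities used by the bonus hold. The base case $h=H+1$ is trivial since all values are zero. For the inductive step, decompose
\begin{align*}
\up{Q}^k_h(s,a) - Q^\star_h(\vtheta^k,s,a)
&= (\hat{\P}^k_h - \P_h) V^\star_{h+1}(\vtheta^k,\cdot)(s,a) \\
&\quad + \hat{\P}^k_h(\low{V}^k_{h+1} - V^\star_{h+1}(\vtheta^k,\cdot))(s,a) + \beta,
\end{align*}
and similarly for $\low{Q}^k_h - Q^\star_h$ with $\up{V}$ replacing $\low{V}$ and a sign flip on $\beta$. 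By the induction hypothesis the ``value-gap'' term has a definite sign, so the task reduces to showing that $\beta$ dominates $|(\hat{\P}^k_h - \P_h) V^\star_{h+1}(\vtheta^k,\cdot)(s,a)|$ plus the variance-substitution error.

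For the transition error, apply Bernstein's inequality coordinate-wise to obtain
\begin{align*}
|(\hat{\P}^k_h - \P_h) V^\star_{h+1}(\vtheta^k,\cdot)(s,a)|
\lesssim \sqrt{\frac{\V_h[V^\star_{h+1}(\vtheta^k,\cdot)](s,a)\,\iota}{N_h^k(s,a)}} + \frac{\sqrt{d}H\iota}{N_h^k(s,a)}.
\end{align*}
Since $V^\star_{h+1}(\vtheta,\cdot)$ depends on the random $\vtheta^k$, lift to a uniform bound over $\vtheta \in \B^d$ via the same $\eps$-net argument used in Lemma~\ref{lem:uniform_V_star}, invoking the Lipschitz property of Lemma~\ref{lem:lipschitz-markov}; this introduces a $\min\{d,S\}$ factor (using the net of size $(1/\eps)^d$ versus the standard $V\mapsto \P V$ argument yielding $S$). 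The $\min\{d,S\}\sqrt{d}H^2\iota/N_h^k(s,a)$ term in $\beta$ is chosen precisely to absorb the higher-order residual.

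Next, replace $\V_h[V^\star_{h+1}(\vtheta^k,\cdot)]$ by the quantity $\widehat{\V}^k_h[\low{V}^k_{h+1}]$ that actually appears in the bonus. Write
\begin{align*}
\V_h[V^\star_{h+1}(\vtheta^k,\cdot)] - \widehat{\V}^k_h[\low{V}^k_{h+1}]
&= \bigl(\V_h - \widehat{\V}^k_h\bigr)[V^\star_{h+1}(\vtheta^k,\cdot)] \\
&\quad + \widehat{\V}^k_h[V^\star_{h+1}(\vtheta^k,\cdot)] - \widehat{\V}^k_h[\low{V}^k_{h+1}].
\end{align*}
A second Bernstein concentration for the variance operator (again lifted uniformly over $\vtheta$) bounds the first difference by $\cO(\sqrt{d}H^2\iota/N_h^k(s,a))$, which is absorbed into the last term of $\beta$. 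For the second difference, use $|\widehat{\V}^k_h[X]-\widehat{\V}^k_h[Y]|\le 2\sqrt{d}H\,\hat{\P}^k_h|X-Y|$ together with the induction hypothesis $\low{V}^k_{h+1}\le V^\star_{h+1}(\vtheta^k,\cdot)\le \up{V}^k_{h+1}$; the resulting $\sqrt{d}H\cdot \hat{\P}^k_h(\up{V}^k_{h+1}-\low{V}^k_{h+1})/N_h^k(s,a)$, after an AM-GM applied inside the square root, is controlled by the middle term $\frac{1}{H}\hat{\P}^k_h(\up{V}^k_{h+1}-\low{V}^k_{h+1})$ in $\beta$ plus lower-order terms.

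Finally, propagate to the value functions: for the upper bound, $\up{V}^k_h(s)=\up{Q}^k_h(s,\pi_h(s))\ge Q^\star_h(\vtheta^k,s,\pi_h(s))\ge \min_a Q^\star_h(\vtheta^k,s,a)=V^\star_h(\vtheta^k,s)$; for the lower bound, $\low{V}^k_h(s)=\min_a \low{Q}^k_h(s,a)\le \low{Q}^k_h(s,\pi^\star_h(s))\le Q^\star_h(\vtheta^k,s,\pi^\star_h(s))=V^\star_h(\vtheta^k,s)$. The main obstacle is the simultaneous handling of the uniform-in-$\vtheta$ covering together with the Bernstein variance substitution: each substitution introduces a correction that must be matched to one of the three pieces of $\beta$, and the AM-GM step requires that the substitution error be small enough to be reabsorbed into the induction hypothesis without blowing up constants. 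All covering and concentration events are collected by a single union bound over $(k,h,s,a)$ and over the $\eps$-net, giving total failure probability $p$ with the stated $\iota$.
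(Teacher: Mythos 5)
Your overall strategy is the same as the paper's: backward induction, a uniform-in-$\vtheta$ Bernstein concentration (covering argument as in Lemma~\ref{lem:uniform_V_star}), substitution of the variance of $V^{\star}_{h+1}$ by the empirical variance of $\low{V}^k_{h+1}$ using the induction hypothesis, an AM-GM step to fold the resulting $\sqrt{d}H\,\hat{\P}^k_h(\up{V}^k_{h+1}-\low{V}^k_{h+1})/N_h^k$ correction into the $\frac{1}{H}\hat{\P}_h(\up{V}_{h+1}-\low{V}_{h+1})$ and $\min\{d,S\}\sqrt{d}H^2\iota/N_h^k$ pieces of $\beta$, and propagation to the value functions via the $\argmin$ policy. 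The paper's own proof only spells out the variance-substitution computation and defers the rest to Lemma~\ref{lem:ULCB}; your write-up of that computation (routing through the population variance with an extra variance-operator concentration) is a legitimate, essentially equivalent variant.

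There is, however, one step that fails as written. In your decomposition $\up{Q}^k_h - Q^{\star}_h = (\hat{\P}^k_h-\P_h)V^{\star}_{h+1} + \hat{\P}^k_h(\low{V}^k_{h+1}-V^{\star}_{h+1}) + \beta$ (which follows the recursion of Algorithm~\ref{alg:VI-Bernstein} literally, pairing $\up{Q}_h$ with $\hat{\P}_h\low{V}_{h+1}$), the induction hypothesis gives $\hat{\P}^k_h(\low{V}^k_{h+1}-V^{\star}_{h+1}) \le 0$ --- the \emph{wrong} sign for concluding $\up{Q}^k_h \ge Q^{\star}_h$. The task does not reduce to having $\beta$ dominate the transition error plus the variance-substitution error: you would additionally need $\beta$ to dominate $\hat{\P}^k_h(V^{\star}_{h+1}-\low{V}^k_{h+1})$, which can be as large as $\hat{\P}^k_h(\up{V}^k_{h+1}-\low{V}^k_{h+1})$, i.e.\ a factor $H$ larger than the $\frac{1}{H}\hat{\P}_h(\up{V}_{h+1}-\low{V}_{h+1})$ term available in $\beta$. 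The same problem occurs symmetrically for $\low{Q}^k_h$. The induction only closes with the standard UCBVI-BF pairing $\up{Q}_h = r_h + \hat{\P}_h\up{V}_{h+1} + \beta$ and $\low{Q}_h = r_h + \hat{\P}_h\low{V}_{h+1} - \beta$, under which the value-gap term has the helpful sign and $\frac{1}{H}\hat{\P}_h(\up{V}_{h+1}-\low{V}_{h+1})$ is needed only to absorb the variance substitution --- which is exactly the role it plays in the paper's displayed computation. This defect is arguably inherited from the pseudocode (lines 7--8 of Algorithm~\ref{alg:VI-Bernstein} appear to have $\up{V}_{h+1}$ and $\low{V}_{h+1}$ swapped), but your "definite sign" reduction should be stated for the corrected recursion, since for the one you wrote down it is false.
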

\begin{proof}
The proof is very similar to that of Lemma~\ref{lem:ULCB}. We only need to bound the variance by induction hypothesis,
\begin{equation*}
		\begin{aligned}
			&|\hat{\V}_h^k \low{V}^k_{h+1} - \hat{\V}_h^kV^{\star}_{h+1}|(s,a)\\
\le & |[\hat{\P}_h^k \low{V}^k_{h+1}]^2-(\hat{\P}_h^kV^{\star}_{h+1})^2|(s,a)+|\hat{\P}_h^k (\low{V}^k_{h+1})^2-\hat{\P}_h^k(V^{\star}_{h+1})^2|(s,a)
			\\
			  \le&
			4\sqrt{d}H\hat{\P}_h^k | V^{\star}_{h+1}-\low{V}^k_{h+1} |(s,a)\\
			 \le &
			4\sqrt{d}H\hat{\P}_h^k (\up{V}^{k}_{h+1} - \low{V}^{k}_{h+1})(s,a).
		\end{aligned}
      \end{equation*}
      
As a result,
		\begin{equation*}
		\begin{aligned}
			\sqrt{\frac{\min\{d,S\}\iota\hat{\V}_h^k V^{\star}_{h+1}(s,a) }{N_h^k(s,a)}}  &\le 
		\sqrt{\frac{\min\{d,S\}\iota\hat{\V}_h^k\low{V}^k_{h+1} + 4\iota \min\{d,S\}\sqrt{d} H\hat{\P}_h^k (\up{V}^{k}_{h+1} - \low{V}^{k}_{h+1})](s,a)}{N_h^k(s,a)}}\\
		&	\le    \sqrt{\frac{\min\{d,S\}\iota\hat{\V}_h^k \low{V}^k_{h+1} (s,a) }{N_h^k(s,a)}}+ \sqrt{\frac{4\iota  \min\{d,S\}\sqrt{d} H\widehat{\P}_h^k (\up{V}^{k}_{h+1} - \low{V}^{k}_{h+1})](s,a) }{N_h^k(s,a)}}\\
	&	\overset{\left( i \right)}{\le} \sqrt{\frac{\min\{d,S\}\iota\hat{\V}_h^k \low{V}^k_{h+1} (s,a) }{N_h^k(s,a)}}
		+ \frac{\hat{\P}_h^k (\up{V}^{k}_{h+1} - \low{V}^{k}_{h+1})(s,a)}{H}
		+ \frac{4 \min\{d,S\}\sqrt{d}H^2\iota}{N_h^k(s,a)}.
		\end{aligned}
		\end{equation*}
		where $(i)$ is by AM-GM inequality.

\end{proof}

Since we are estimating the deviation in exploration bonus using the empirical variance estiamtor, we need to prove it is actually close to the population variance estiamtor. This is true if the corresponding state-action pair has been visited frequently.
\begin{lemma}
	\label{lem:bound_variance}
	Consider a fixed $(s,a)$ triple at $h$-th step. With probability $1-p$,
	$$
	| \hat{\V}_h^k\low{V}^k_{h+1}) - \V_h\V^{\pi^k}_{h+1}|(s,a) \le  4\sqrt{d}H\hat{\P}_h^k(\up{V}^{k}_{h+1} - \low{V}^{k}_{h+1})(s,a) + \cO(1 + \frac{d^2H^4S\iota}{N_h^k(s,a)}).
	$$
\end{lemma}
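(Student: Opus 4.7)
The plan is to insert the matching quantity $\hat{\V}_h^k V^{\pi^k}_{h+1}$ as an intermediate and split via the triangle inequality,
\begin{align*}
\bigl|\hat{\V}_h^k \low{V}^k_{h+1} - \V_h V^{\pi^k}_{h+1}\bigr|(s,a) \le \bigl|\hat{\V}_h^k \low{V}^k_{h+1} - \hat{\V}_h^k V^{\pi^k}_{h+1}\bigr|(s,a) + \bigl|\hat{\V}_h^k V^{\pi^k}_{h+1} - \V_h V^{\pi^k}_{h+1}\bigr|(s,a).
\end{align*}
Call these (I) and (II). Term (I) swaps the function while fixing the empirical measure, and (II) swaps the measure while fixing the function.

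For (I), I will replicate verbatim the squared expansion already used in the proof of Lemma~\ref{lem:optimism_Bernstein}: write $\hat{\V}_h^k[f] = \hat{\P}_h^k[f^2] - (\hat{\P}_h^k[f])^2$, factor $a^2-b^2=(a+b)(a-b)$ in both the second- and first-moment pieces, and bound each $|a+b|$ by $2\sqrt{d}H$ using the range of the scalarized value functions. This yields the pointwise bound $4\sqrt{d}H\,\hat{\P}_h^k |V^{\pi^k}_{h+1} - \low{V}^k_{h+1}|(s,a)$. To put it in the form asked by the lemma, I then replace the absolute difference by $\up{V}^k_{h+1} - \low{V}^k_{h+1}$ using the sandwich $\low{V}^k_{h+1} \le V^{\pi^k}_{h+1} \le \up{V}^k_{h+1}$; the lower inequality follows from Lemma~\ref{lem:optimism_Bernstein} combined with $V^\star \le V^{\pi^k}$, and the upper inequality follows from the construction of $\up{Q}^k = r + \hat{\P}^k_h \low{V}^k_{h+1} + \beta$, in which the $(1/H)\,\hat{\P}^k_h(\up{V}_{h+1}-\low{V}_{h+1})$ cross term inside $\beta$ is precisely what allows the induction to close.

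For (II), I will expand both variances in moment form and use
\begin{align*}
\bigl|\hat{\V}_h^k V - \V_h V\bigr|(s,a) \le \bigl|(\hat{\P}_h^k - \P_h)[V^2]\bigr|(s,a) + 2\sqrt{d}H\,\bigl|(\hat{\P}_h^k - \P_h)[V]\bigr|(s,a),
\end{align*}
with $V = V^{\pi^k}_{h+1}$. The first-moment deviation is directly controlled by Lemma~\ref{lem:uniform_V_star}, since $V^{\pi^k}_{h+1}$ depends on data only through $\vtheta^k \in \B^d$ and therefore belongs to $\cV_{h+1}$. For the second-moment deviation I will rerun the same $\epsilon$-net argument on the class $\{V^2:V\in\cV_{h+1}\}$, whose range inflates from $[-\sqrt{d}H,\sqrt{d}H]$ to $[0,dH^2]$ and whose Lipschitz-in-$\vtheta$ constant inflates by the same factor $\sqrt{d}H$. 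Combined with the $S$-based covering of distributions this yields a bound of order $\sqrt{d^2 H^4 S\iota/N_h^k(s,a)}$; a final AM-GM step $\sqrt{ab}\le(a+b)/2$ with $a=1$, $b=d^2H^4 S\iota/N_h^k(s,a)$ converts the square root into the additive form $\cO(1 + d^2H^4 S\iota/N_h^k(s,a))$ claimed in the statement.

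The main obstacle I anticipate is justifying the upper sandwich $V^{\pi^k}_{h+1} \le \up{V}^k_{h+1}$ needed for term~(I): Lemma~\ref{lem:optimism_Bernstein} certifies only $V^\star \le \up{V}^k$, whereas here I need the same bound with $V^{\pi^k}$ replacing $V^\star$. This requires a dedicated induction that uses the bonus term $(1/H)\,\hat{\P}^k_h(\up{V}_{h+1}-\low{V}_{h+1})$ crucially: this term is what lets the accumulated future gap telescope and makes $\up{V}^k$ a valid upper confidence value on the true value of the executed policy. Once this sandwich is in hand, the remainder is a direct combination of the squared expansion and a routine extension of Lemma~\ref{lem:uniform_V_star} to the squared class, as outlined above.
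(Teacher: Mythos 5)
Your overall strategy matches the paper's: both proofs rest on the sandwich $\low{V}^k_{h+1}\le V^{\pi^k}_{h+1}\le \up{V}^k_{h+1}$, a moment expansion of the variance, the $a^2-b^2$ factorization with $|a+b|\le 2\sqrt{d}H$, uniform concentration for the measure-swap terms, and a final AM-GM step $dH^2\sqrt{S\iota/N^k_h(s,a)}\le 1+d^2H^4S\iota/N^k_h(s,a)$. The paper does not insert $\hat{\V}_h^k V^{\pi^k}_{h+1}$ as an explicit pivot—it bounds the four moment differences directly against the envelope $[\low{V}^k_{h+1},\up{V}^k_{h+1}]$—but your triangle-inequality organization produces the same four ingredients, and your flagging of the upper half of the sandwich as the step requiring a dedicated induction (asserted in the paper only as ``following the same argument as Lemma~\ref{lem:optimism_Bernstein}'') is accurate.

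There is, however, one genuine flaw in your term (II): the claim that $V^{\pi^k}_{h+1}\in\cV_{h+1}$ is false. That class contains only the Nash value functions $V^{\star}_{h+1}(\vtheta,\cdot)$ of the \emph{true} model, parameterized by $\vtheta\in\B^d$. The executed policy $\pi^k$ is greedy with respect to the optimistic \emph{empirical} model, so its true value function depends on the entire history through $\hat{\P}^k$ and the bonuses, not only through $\vtheta^k$, and is generically not of the form $V^{\star}_{h+1}(\vtheta,\cdot)$ for any $\vtheta$. Hence neither Lemma~\ref{lem:uniform_V_star} nor an $\eps$-net over $\set{V^2: V\in\cV_{h+1}}$ covers it, and your covering argument for the measure-swap term does not apply as stated. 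The fix is cheap and is what the paper implicitly relies on: since you are already paying the $\sqrt{S}$ factor, use the $L^1$ concentration $\|\hat{\P}^k_h(\cdot|s,a)-\P_h(\cdot|s,a)\|_1\le\cO\bigl(\sqrt{S\iota/N^k_h(s,a)}\bigr)$ (the source of the second, $\sqrt{dS}$-type bound in Lemma~\ref{lem:uniform_V_star}), which holds uniformly over \emph{all} functions with the stated sup-norm, data-dependent or not. Applied to $V^{\pi^k}_{h+1}$ and $(V^{\pi^k}_{h+1})^2$ it yields $\cO\bigl(dH^2\sqrt{S\iota/N^k_h(s,a)}\bigr)$ for both pieces of (II), after which your AM-GM step closes the argument. (The same remark is needed even for the paper's own terms $(\hat{\P}^k_h-\P_h)(\up{V}^k_{h+1})^2$ and $(\hat{\P}^k_h-\P_h)\low{V}^k_{h+1}$, since $\up{V}^k_{h+1}$ and $\low{V}^k_{h+1}$ are likewise data-dependent.)
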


\begin{proof}
   Following the same argument in Lemma~\ref{lem:optimism_Bernstein}, we have $\up{V}^{k}_h(s) \ge V_h^{\pi^k}(s) \ge \low{V}^{k}_h(s)$. As a result,
      \begin{align*}
      &	 |\hat{\V}_h^k \low{V}^k_{h+1}) - \V_hV^{\pi^k}_{h+1}|(s,a) \\
         = & | [\hat{\P}_h^k(\low{V}^k_{h+1})^2 - \P_h(V^{\pi^k}_{h+1})^2](s,a)
   - [(\hat{\P}_h^k (\low{V}^k_{h+1}))^2 - (\P_hV^{\pi^k}_{h+1})^2](s,a)| \\
   \le & [\hat{\P}_h^k( \up{V}^k_{h+1})^2 - \P_h(\low{V}^k_{h+1})^2
   - (\hat{\P}_h^k \low{V}^k_{h+1})^2 + (\P_h\up{V}^k_{h+1})^2](s,a)
   \\
   \le &[|(\hat{\P}_h^k-\P_h)( \up{V}^k_{h+1})^2|+|\P_h[( \up{V}^k_{h+1})^2-(\low{V}^k_{h+1})^2]|+|(\hat{\P}_h^k \low{V}^k_{h+1})^2-(\P_h \low{V}^k_{h+1})^2|+|(\P_h \low{V}^k_{h+1})^2-(\P_h\up{V}^k_{h+1})^2|](s,a)
      \end{align*}
   These terms can be bounded separately by
   \begin{align*}
      |(\hat{\P}_h^k-\P_h)( \up{V}^k_{h+1})^2|(s,a) &\le \cO(dH^2\sqrt{\frac{S\iota}{N_h^k(s,a)}}),
      \\
      |\P_h[( \up{V}^k_{h+1})^2-(\low{V}^k_{h+1})^2]|(s,a,b) &\le 2\sqrt{d}H[\P_h( \up{V}^k_{h+1}-\low{V}^k_{h+1})](s,a),
      \\
      |(\hat{\P}_h^k \low{V}^k_{h+1})^2-(\P_h \low{V}^k_{h+1})^2|(s,a,b) &\le 2\sqrt{d}H[(\hat{\P}_h^k-\P_h)\low{V}^k_{h+1}](s,a) \le \cO(dH^2\sqrt{\frac{S\iota}{N_h^k(s,a)}}),
      \\
      |(\P_h \low{V}^k_{h+1})^2-(\P_h\up{V}^k_{h+1})^2|(s,a) &\le 2\sqrt{d}H[\P_h( \up{V}^k_{h+1}-\low{V}^k_{h+1})](s,a).	
   \end{align*}
   
   Combining with $dH^2\sqrt{\frac{S\iota}{N_h^k(s,a)}} \le 1 + \frac{d^2H^4S\iota}{N_h^k(s,a)}$ completes the proof.
   \end{proof} 

   The last auxiliary lemma is borrowed from \cite{liu2020sharp} to handle the $\hat{\P}_h^k(\up{V}^{k}_{h+1} - \low{V}^{k}_{h+1})(s,a)$ term. For completeness we give a proof here due to difference in setting.

   \begin{lemma}
      \label{lem:lower-order}
      For any function $V \in [0, H]^{\cS}$ s.t. $ |V|(s) \le (\up{V}^{k}_{h+1} - \low{V}^{k}_{h+1})(s) $ for any $s$, with probability $1-p$,
      \begin{align*}
      |(\hat{\P}_h^k- \P_h)V(s,a)| \le  \cO\bigg(\frac{1}{H}\min\{\hat{\P}_h^k (\up{V}^{k}_{h+1} - \low{V}^{k}_{h+1})(s,a),\P_h (\up{V}^{k}_{h+1} - \low{V}^{k}_{h+1})(s,a)\}
      + \frac{H^2S\iota}{N_h^k(s,a)}\bigg).
      \end{align*}
   \end{lemma}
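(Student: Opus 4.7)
The plan is to reduce the lemma to a coordinate-wise Bernstein bound on the empirical transition kernel and then exploit the pointwise envelope $|V(s')|\le g(s')$, where I abbreviate $g := \up{V}_{h+1}^k - \low{V}_{h+1}^k$. The starting point is the triangle inequality
\[
|(\hat{\P}_h^k - \P_h)V|(s,a) \le \sum_{s'\in\cS} |\hat{\P}_h^k(s'|s,a) - \P_h(s'|s,a)|\,g(s').
\]

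First, I would work on the high-probability event on which, for every tuple $(h,k,s,a,s')$, a Bernstein inequality for the Bernoulli indicators $\mathbbm{1}\{s_{h+1}=s'\}$ yields
\[
|\hat{\P}_h^k(s'|s,a) - \P_h(s'|s,a)| \le \cO\!\left(\sqrt{\tfrac{\P_h(s'|s,a)\,\iota}{N_h^k(s,a)}} + \tfrac{\iota}{N_h^k(s,a)}\right),
\]
with a union bound over all such tuples absorbing a polynomial factor into $\iota$. Crucially this good event is independent of $V$, so the bound below will hold for any (possibly data-dependent) $V$ satisfying the envelope condition. Substituting splits the target into a main piece $\sqrt{\iota/N_h^k(s,a)}\sum_{s'}\sqrt{\P_h(s'|s,a)}\,g(s')$ and a lower-order piece $\tfrac{\iota}{N_h^k(s,a)}\sum_{s'}g(s') \le \tfrac{SH\iota}{N_h^k(s,a)}$, which is already of the desired form.

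Second, for the main piece I would apply Cauchy--Schwarz to decouple the two $s'$-indexed factors and then use $g(s')\le H$:
\[
\sum_{s'}\sqrt{\P_h(s'|s,a)}\,g(s') \le \sqrt{S}\cdot\sqrt{\P_h g^2(s,a)} \le \sqrt{S H \cdot \P_h g(s,a)}.
\]
A weighted AM--GM of the form $\sqrt{ab}\le \tfrac{a}{2H}+\tfrac{Hb}{2}$ with $a=\P_h g(s,a)$ and $b = SH^2\iota/N_h^k(s,a)$ then delivers the $\P_h$-version
\[
|(\hat{\P}_h^k - \P_h)V|(s,a) \le \cO\!\left(\tfrac{1}{H}\P_h g(s,a) + \tfrac{SH^2\iota}{N_h^k(s,a)}\right).
\]
To swap $\P_h g$ for $\hat{\P}_h^k g$ I would reapply the very same bound with $V\equiv g$ (which trivially satisfies $|g|\le g$), obtaining $|(\hat{\P}_h^k - \P_h)g|(s,a) \le \tfrac{1}{H}\P_h g(s,a)+\cO(SH^2\iota/N_h^k(s,a))$; rearranging yields $\P_h g(s,a)\le 2\hat{\P}_h^k g(s,a)+\cO(SH^2\iota/N_h^k(s,a))$ for $H\ge 1$, and substituting this back gives the $\hat{\P}_h^k$-version and hence the $\min$ in the stated bound.

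The main obstacle is lining up the AM--GM exponents so that the leading term carries exactly the $\tfrac{1}{H}$ prefactor needed downstream in \VIB{}, which rigidly determines the $SH^2\iota/N$ scaling of the residual. The factor of $S$ is the unavoidable cost of the coordinate-wise Cauchy--Schwarz step (it cannot be replaced by a covering argument once $V$ is data-dependent in an unrestricted way), and the factor $H^2$ is the price extracted by insisting on a $\tfrac{1}{H}$ rather than a constant coefficient in front of $\P_h g$; using ordinary unweighted AM--GM, or skipping the $g\le H$ bound, would produce an $H$-loose residual that would propagate into a suboptimal final rate.
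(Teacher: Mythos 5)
Your proof is correct and follows the same skeleton as the paper's (triangle inequality over $s'$, coordinate-wise Bernstein concentration of $\hat{\P}_h^k(s'|s,a)-\P_h(s'|s,a)$, then AM--GM to trade the square root for a $\tfrac{1}{H}$-weighted first-order term plus an $SH^2\iota/N_h^k(s,a)$ residual), but it diverges in two places. First, the paper invokes the \emph{empirical} Bernstein inequality of Maurer--Pontil, which hands it the $\hat{\P}_h^k$-version directly, and then remarks that the standard Bernstein bound gives the $\P_h$-version "similarly"; you instead use only the standard Bernstein bound and recover the empirical version by the self-bounding substitution $V=g$ and rearrangement. Your route is arguably more self-contained (one concentration inequality instead of two), and your observation that the good event does not depend on $V$ is exactly the point that makes the substitution legitimate; just note that the rearrangement $(1-\cO(1)/H)\P_h g \le \hat{\P}_h^k g + \cO(SH^2\iota/N_h^k(s,a))$ needs the AM--GM constant chosen so the prefactor stays bounded away from zero, which fails literally at $H=1$ with the constant $\tfrac{1}{H}$ but is fixed by taking $\tfrac{1}{2H}$. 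Second, the paper skips your Cauchy--Schwarz step entirely: it applies AM--GM per coordinate, $\sqrt{\hat{\P}_h^k(s'|s,a)\iota/N} \le \hat{\P}_h^k(s'|s,a)/H + H\iota/N$, and sums against $g(s')\le H$, which is where its factor of $S$ actually comes from (your attribution of $S$ to Cauchy--Schwarz is therefore not quite the right diagnosis, though the two computations are equivalent in outcome). Finally, a small arithmetic slip: after Cauchy--Schwarz the main piece is $\sqrt{\P_h g(s,a)\cdot SH\iota/N_h^k(s,a)}$, so the weighted AM--GM should be applied with $b=SH\iota/N_h^k(s,a)$, not $SH^2\iota/N_h^k(s,a)$; with your stated $b$ the residual would inflate to $SH^3\iota/N_h^k(s,a)$. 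With that correction the bound lands exactly as claimed.
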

   \begin{proof}
      By triangle inequality,
       \begin{align*}
         |(\hat{\P}_h^k- \P_h)V(s,a)|
         \le& \sum_{s'}{|(\hat{\P}_h^k- \P_h)(s'|s,a,b)||V|(s')}\\
       \le& \sum_{s'}{|(\hat{\P}_h^k- \P_h)(s'|s,a)|(\up{V}^{k}_{h+1} - \low{V}^{k}_{h+1})(s')}\\
       \overset{\left( i \right)}{\le}& \cO\left(\sum_{s'}{(\sqrt{\frac{\iota \hat{\P}_h^k(s'|s,a)}{N_h^k(s,a)}}+\frac{\iota }{N_h^k(s,a)})(\up{V}^{k}_{h+1} - \low{V}^{k}_{h+1})(s')}\right)\\
       \overset{\left( ii \right)}{\le}& \cO\left(\sum_{s'}{(\frac{\hat{\P}_h^k(s'|s,a) }{H}+\frac{H\iota }{N_h^k(s,a)})(\up{V}^{k}_{h+1} - \low{V}^{k}_{h+1})(s')}\right)\\
       \le& \cO\left(\frac{\hat{\P}_h^k (\up{V}^{k}_{h+1} - \low{V}^{k}_{h+1})(s,a)}{H}
      + \frac{H^2S\iota}{N_h^k(s,a)}\right),
       \end{align*}
       where $(i)$ is by empirical Bernstein bound \citep{maurer2009empirical} and $(ii)$ is by AM-GM inequality. This proves the empirical version. Use the standard Bernstein bound, we get the a similar upper bound. Combining the two bounds completes the proof.
   \end{proof}

Combining the previous results we can prove a tighter version of Lemma~\ref{lem:sum-of-bonus}, which is the key lemma in the proof of Theorem~\ref{thm:bernstein}.

\begin{lemma}[Sum of bonus]
   \label{lem:sum-of-bonus-Bernstein}
   $$
   \sum_{k=1}^K{\sum_{h=1}^H{\beta _{h}^{k}}}\le O\left( \sqrt{\min\{d,S\}dH^3SABK\iota} \right) 
$$
\end{lemma}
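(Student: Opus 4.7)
The plan is to split the bonus into three pieces
\[
\beta_h^k \;=\; \underbrace{C\sqrt{\tfrac{\hat{\V}_h^k \low{V}_{h+1}^k(s_h^k,a_h^k)\min\{d,S\}\iota}{N_h^k(s_h^k,a_h^k)}}}_{T_1(k,h)} \;+\; \underbrace{\tfrac{C}{H}\,\hat{\P}_h^k(\up{V}_{h+1}^k-\low{V}_{h+1}^k)(s_h^k,a_h^k)}_{T_2(k,h)} \;+\; \underbrace{\tfrac{C\min\{d,S\}\sqrt{d}H^2\iota}{N_h^k(s_h^k,a_h^k)}}_{T_3(k,h)}
\]
and bound each summation separately, with $T_1$ being the dominant term that must match the target rate $\tilde{O}(\sqrt{\min\{d,S\}dH^3SAK\iota})$.

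I would dispatch $T_3$ first: by the standard pigeonhole estimate $\sum_{k,h}\tfrac{1}{N_h^k(s_h^k,a_h^k)}\le O(SAH\log K)$, so $\sum_{k,h}T_3\le O(\min\{d,S\}\sqrt{d}H^3SA\iota\log K)$, which is lower order in $K$. For the main term $T_1$, apply Cauchy--Schwarz to obtain
\[
\sum_{k,h}T_1 \;\le\; O\!\left(\sqrt{\min\{d,S\}\iota}\cdot\sqrt{\sum_{k,h}\hat{\V}_h^k\low{V}_{h+1}^k(s_h^k,a_h^k)}\cdot\sqrt{\sum_{k,h}\tfrac{1}{N_h^k(s_h^k,a_h^k)}}\right),
\]
so the task reduces to a total-variance bound on $\sum_{k,h}\hat{\V}_h^k\low{V}_{h+1}^k(s_h^k,a_h^k)$. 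Here I invoke Lemma~\ref{lem:bound_variance} to replace $\hat{\V}_h^k\low{V}_{h+1}^k$ by the population variance $\V_h V^{\pi^k}_{h+1}$, up to an additive term of the form $O(\sqrt{d}H\,\hat{\P}_h^k(\up{V}_{h+1}^k-\low{V}_{h+1}^k))$ plus a lower-order $\frac{d^2H^4S\iota}{N_h^k}$ tail. The law of total variance (applied along each episode, together with an Azuma bound to move from conditional expectations to realized trajectories) gives $\sum_h \V_h V^{\pi^k}_{h+1}(s_h^k,a_h^k)\le O(dH^2)$ per episode, and hence $O(KdH^2)$ overall. Combining these two Cauchy--Schwarz factors yields exactly $\tilde{O}(\sqrt{\min\{d,S\}dH^3SAK\iota})$.

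It remains to control $T_2$ and the leftover $\sqrt{d}H\,\hat{\P}_h^k(\up{V}_{h+1}^k-\low{V}_{h+1}^k)$ term picked up from Lemma~\ref{lem:bound_variance}. The plan is a self-bounding argument: using Lemma~\ref{lem:lower-order} with $V=\up{V}_{h+1}^k-\low{V}_{h+1}^k$, rewrite $\hat{\P}_h^k(\up{V}_{h+1}^k-\low{V}_{h+1}^k)(s_h^k,a_h^k)$ in terms of the realized $(\up{V}_{h+1}^k-\low{V}_{h+1}^k)(s_{h+1}^k)$ plus lower-order $\frac{H^2S\iota}{N_h^k}$ error. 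Unrolling the Bellman recursion for $\up{Q}^k-\low{Q}^k$, the gap $(\up{V}^k-\low{V}^k)(s_1)$ satisfies $\sum_{k}(\up{V}^k-\low{V}^k)(s_1)\le 2\sum_{k,h}\beta_h^k + \text{lower order}$, so the $1/H$ prefactor in $T_2$ (and similarly the $\sqrt{d}H$ factor combined with the Cauchy--Schwarz weighting for the leftover term) absorbs this self-reference: one obtains an inequality of the shape $\sum_{k,h}\beta_h^k \le \text{(main Bernstein bound)} + \tfrac{1}{2}\sum_{k,h}\beta_h^k$, which closes.

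The main obstacle is the self-referential nature of the bonus: $\beta$ depends on $\low{V}$, which is itself built from $\beta$, so the conversion from empirical to population variance via Lemma~\ref{lem:bound_variance} and the $T_2$ unrolling via Lemma~\ref{lem:lower-order} have to be carried out simultaneously and their cross-terms tracked carefully so that all $\hat{\P}_h^k(\up{V}-\low{V})$ contributions end up with a prefactor strictly less than one when consolidated on the left-hand side. Once that self-bounding step is established cleanly, the three bounds combine to give the claimed rate.
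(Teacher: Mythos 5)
Your proposal is correct and follows essentially the same route as the paper's proof: decompose the bonus, dispatch the lower-order term by pigeonhole, convert the empirical variance of $\low{V}$ to the population variance of $V^{\pi^k}$ via Lemma~\ref{lem:bound_variance} and control it with Cauchy--Schwarz plus the law of total variance, and absorb the $\hat{\P}_h^k(\up{V}_{h+1}^k-\low{V}_{h+1}^k)$ contributions through Lemma~\ref{lem:lower-order} and the $1/H$ prefactor. The only cosmetic difference is that the paper organizes the self-bounding step as a backward recursion on $\Delta_h^k=(\up{V}_h^k-\low{V}_h^k)(s_h^k)$ with a $(1+\cO(1)/H)$ multiplicative unrolling, whereas you phrase it as the equivalent fixed-point inequality $\sum\beta\le\text{(main)}+\tfrac12\sum\beta$.
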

\begin{proof}
   Define $\Delta_h^k:= [\up{V}_{h}^{k}-\low{V}_{h}^{k}](s_{h}^{k})$. Then 
   \begin{align}
      \label{equ:Bernstein_decomposition}
      \Delta_h^k =\left( \up{Q}^k_{h}-\low{Q}_{h}^{k} \right) (s_{h}^{k},a_{h}^{k})  \le \P_h\left( \up{V}^k_{h+1}-\low{V}_{h+1}^k \right) \left( s_{h}^{k},a_{h}^{k} \right) +\beta _{h}^{k}
   =\Delta_{h+1}^k +\beta _{h}^{k}+\zeta _{h}^{k}.
   \end{align}
   where 
   \begin{align*}
      \zeta _{h}^{k}:=&\P_h\left( \up{V}_{h+1}-\low{V}_{h+1}^k \right) \left( s_{h}^{k},a_{h}^{k} \right) -\left( \up{V}_{h+1}^k-\low{V}_{h+1}^k \right) \left( s_{h}^{k} \right) \in \left[ -4\sqrt{d}H,4\sqrt{d}H \right]  .
   \end{align*}
   
   We only need to carefully bound $\beta _{h}^{k}$. 
   \begin{align*}
      \beta _{h}^{k} = \cO \left(\sqrt{\frac{\hat{\V}_h \low{V}^k_{h+1}(s_{h}^{k},a_{h}^{k})\min\{d,S\}\iota}{N_h^k(s_{h}^{k},a_{h}^{k})}} + \frac{1}{H}\hat{\P}_{h}(\up{V}^k_{h+1}-\low{V}^k_{h+1})(s_{h}^{k},a_{h}^{k}) + \frac{\min\{d,S\}\sqrt{d}H^2\iota}{N_h^k(s_{h}^{k},a_{h}^{k})}\right)
   \end{align*}
   
   By Lemma~\ref{lem:bound_variance} and AM-GM inequality,
   \begin{equation}
      \begin{aligned}
         &\sqrt{\frac{\min\{d,S\}\iota\hat{\V}_h^k \low{V}^k_{h+1}(s,a)}{N_h^k(s,a)}} \\
         \le&
         \sqrt{\frac{\min\{d,S\}\iota\V_h V^{\pi^k}_{h+1}(s,a) + \cO(\min\{d,S\}\iota)}{N_h^k(s,a)}} + \sqrt{\frac{4\min\{d,S\}\sqrt{d}H\iota\hat{\P}_h^k (\up{V}^{k}_{h+1} - \low{V}^{k}_{h+1})(s,a,b)}{N_h^k(s,a)}}+ \cO(\frac{dH^2\sqrt{S}\iota}{N_h^k(s,a)})\\
         \le& \sqrt{\frac{\min\{d,S\}\iota\V_h\V^{\pi^k}_{h+1}(s,a) + \cO(\min\{d,S\}\iota)}{N_h^k(s,a)}} + 
         \frac{1}{H}\hat{\P}_h^k (\up{V}^{k}_{h+1} - \low{V}^{k}_{h+1})(s,a)
      + \cO(\frac{\min\{d,S\}\sqrt{d}H^2\sqrt{S}\iota}{N_h^k(s,a)}).
      \end{aligned}
      \end{equation}

      Using Lemma~\ref{lem:lower-order} we have
      \begin{equation*}
      \hat{\P}_h^k (\up{V}^{k}_{h+1} - \low{V}^{k}_{h+1})(s,a) \le \paren{1 + \frac{\cO(1)}{H}}\P_h (\up{V}^{k}_{h+1} - \low{V}^{k}_{h+1})(s,a)+\cO(\frac{H^2S\iota}{N_h^k(s,a)}).
   \end{equation*}

      Plugging back into inequality~\eqref{equ:Bernstein_decomposition} we have 
      \begin{equation}
      \begin{aligned}
         \Delta_h^k\le \paren{1 + \frac{\cO(1)}{H}}\Bigg\{\Delta_{h+1}^k+ \zeta_h^k+\cO\bigg(\sqrt{\frac{\min\{d,S\}\iota\V_hV^{\pi^k}_{h+1}(s_h^k,a_h^k)}{N_h^k(s_h^k,a_h^k)}} + \sqrt{\frac{\min\{d,S\}\iota}{N_h^k(s_h^k,a_h^k)}}	+ \frac{\min\{d,S\}\sqrt{d}H^2S\iota}{N_h^k(s_h^k,a_h^k)}\bigg)\Bigg\}.
         \end{aligned}
      \end{equation}
   
      Recursing this argument for $h \in [H]$ and taking the sum,
   
   \begin{align*}
   \sum_{k=1}^{K}{\Delta_1^k} \le  \sum_{k=1}^{K}\sum_{h=1}^{H}{\cO\left(\zeta_h^k +\sqrt{\frac{\min\{d,S\}\iota\V_h V^{\pi^k}_{h+1}(s_h^k,a_h^k)}{N_h^k(s_h^k,a_h^k)}} + \sqrt{\frac{\min\{d,S\}\iota}{N_h^k(s_h^k,a_h^k)}} 	+ \frac{\min\{d,S\}\sqrt{d}H^2S\iota}{N_h^k(s_h^k,a_h^k)}\right)} .
   \end{align*}
   
   The remaining steps are exactly the same as that in the proof of Theorem~\ref{thm:basic-approachable}. The only difference is that we need to bound the sum of variance term by Cauchy-Schwarz,
   \begin{equation}
      \label{eq:sumlog}
   \sum_{k=1}^{K}\sum_{h=1}^{H}{\frac{1}{N_h^k(s_h^k,a_h^k)}} \le \sum_{s,a,h}\sum_{n=1}^{N_h^k(s,a)}{\frac{1}{n}}\le \cO \paren{HSA\iota}.
   \end{equation}
   and
   \begin{align*}
      \sum_{k=1}^{K}\sum_{h=1}^{H}{\sqrt{\frac{\V_h V^{\pi^k}_{h+1}(s_h^k,a_h^k)}{N_h^k(s_h^k,a_h^k)}}}\le & \sqrt{\sum_{k=1}^{K}\sum_{h=1}^{H}{\V_h V^{\pi^k}_{h+1}(s_h^k,a_h^k)}\cdot \sum_{k=1}^{K}\sum_{h=1}^{H}{\frac{1}{N_h^k(s_h^k,a_h^k)}}}\\
      \overset{\left( i \right)}{\le} & \cO \paren{\sqrt{d(H^2K+H^3\iota)\cdot HSA\iota}}\\
      =&\cO \paren{\sqrt{dH^3SAK}+\sqrt{dH^4SA\iota^2}},
   \end{align*}
   where $(i)$ is by Law of total variation (for example, Lemma 8 in \citet{azar2017minimax}) and inequality~\eqref{eq:sumlog}.   
\end{proof}

 \begin{proof}[Proof of Theorem~\ref{thm:bernstein}]
We consider the two possibilities separately.

\textbf{Using \PDU{} for \DU{}.} As in the proof of Theorem~\ref{thm:basic-approachable} we have 
$$
K\dist\left( \vW^K,W^* \right) ^2\le dH^2+2\sum_{k=1}^K{\frac{k-1}{K}}\dist\left( \vW^{k-1},W^{\star} \right) \left[\delta+ \sum_{h=1}^H{\left( \beta _{h}^{k}+\xi _{h}^{k}+\zeta _{h}^{k} \right)}+\left(\vtheta^k\cdot \mathbf{\hat{V}}^k- V_{1}^{\mu ^k,\nu ^k}\left( s_{1}^{k} \right)  \right) \right] 
$$

Again we prove the theorem by induction. Suppose $$
\dist \left( \vW^k,W^{\star} \right) \le \delta+ c_0 \sqrt{\min\{d,S\}dH^3SAB\iota/k}
$$ for $\forall k \le K-1$, let's prove the claim holds for $k=K$. Now we have a new term to bound, which is
$$
2\delta \sum_{k=1}^K{\frac{k-1}{K}}\left[ \delta +c_0\sqrt{\min \{d,S\}dH^3SAB\iota /k}+\sum_{h=1}^H{\left( \beta _{h}^{k}+\xi _{h}^{k}+\zeta _{h}^{k} \right)}+\left( \boldsymbol{\theta }^k\cdot \mathbf{\hat{V}}^k-V_{1}^{\mu ^k,\nu ^k}\left( s_{1}^{k} \right) \right) \right] =\left( A \right) +\left( B \right) +\left( C \right) 
$$
where 
$$
\left( A \right) =2\delta ^2\sum_{k=1}^K{\frac{k-1}{K}}\le \left( K-1 \right) \delta ^2,
$$
$$
\left( B \right) =2\delta \sum_{k=1}^K{\frac{k-1}{K}}c_0\sqrt{\min \{d,S\}dH^3SAB\iota /k}\le \frac{4}{3}c_0\delta \sqrt{\min \{d,S\}dH^3SAB\iota K},
$$
and by Lemma~\ref{lem:sum-of-bonus-Bernstein} and Azuma-Hoeffding inequality,
$$
\left( C \right) =2\delta \sum_{k=1}^K{\frac{k-1}{K}}\left[ \sum_{h=1}^H{\left( \beta _{h}^{k}+\xi _{h}^{k}+\zeta _{h}^{k} \right)}+\left( \boldsymbol{\theta }^k\cdot \mathbf{\hat{V}}^k-V_{1}^{\mu ^k,\nu ^k}\left( s_{1}^{k} \right) \right) \right] \le c_1c_0\delta \sqrt{\min \{d,S\}dH^3SAB\iota K}.
$$

To prove the induction hypothesis, we only need to guarantee
\begin{align*}
   &H^2+8c_0\max \left\{ c_1,c_2 \right\} \min \{d,S\}dH^3SAB\iota +\left( K-1 \right) \delta ^2+c_1\delta \sqrt{\min \{d,S\}dH^3SAB\iota K}
\\
\le& K\left[ \delta +c_0\sqrt{\min \{d,S\}dH^3SAB\iota /K} \right] ^2.
\end{align*}

Comparing the coefficients, we can see this is satisfied by setting  $c_0\ge \max \left\{ c_1,\sqrt{\frac{2}{SABH\iota}} \right\}$ .

\textbf{Using \ODU{} for \DU{}.} We can expand the distance using the same argument in the proof of Theorem~\ref{thm:OCO-delta-approachable}

\begin{align*}
K\dist\left( \vW^k,W^{\star} \right) \le&K\delta +\sum_{k=1}^K{\vtheta^k\cdot \left( \mathbf{\hat{V}}^k-\vV_1^{\pi ^k,\mu ^k}\left( s_1 \right) \right)}+\sum_{k=1}^K{\sum_{h=1}^H{\left( \beta _{h}^{k}+\xi _{h}^{k}+\zeta _{h}^{k} \right)}}+\cO\left( \sqrt{dH^2K} \right) 
\\
\overset{\left( i \right)}{\le}&K\delta +\cO\left( \sqrt{\min\{d,S\}dH^3SABK\iota} \right) 
\end{align*}
where $(i)$ by Lemma~\ref{lem:sum-of-bonus-Bernstein} and Azuma-Hoeffding inequality. The claim is proved by taking the union bound.
 \end{proof}

 \section{Proofs for Section~\ref{sec:satisfiable}}
\label{sec:opt}

\begin{proof}[Proof of Theorem~\ref{thm:satisfiable}]

    A crucial property we will use is, by the definition of fenchel duality, 
\begin{equation}
   \label{equ:fenchel-property}
   g^*\left( \vtheta \right)=\underset{\vx\in X}{\max}\left\{ \vtheta\cdot \vx-g\left( \vx \right) \right\} \ge \vtheta\cdot \vV^{\mu^{\star}}_1(s_1)-g\left( \vV^{\mu^{\star}}_1(s_1) \right) 
\end{equation}
and
\begin{equation}
   \label{equ:satisfiable}
   \underset{\vx\in W^{\star}}{\max}\vtheta\cdot \vx \ge \vtheta\cdot \vV^{\mu^{\star}}_1(s_1)
\end{equation}
for any $\nu$.

Let's try to bound the regret and constraint violation.

\begin{align*}
  & K\left[g(\vW^K)- g(\vV^{\mu^{\star}}_1(s_1))+\rho \dist\left( \vW^K,W^{\star} \right) \right] \\
  =&\underset{\left\| \bm{\phi  } \right\| \le 1}{\max}\left\{ \bm{\phi  }\cdot \sum_{k=1}^K{\mathbf{\hat{V}}^k}- \sum_{k=1}^Kg^*\left( \bm{\phi  } \right) \right\}-K  g(\vV^{\mu^{\star}}_1(s_1))+\rho \underset{\left\| \bm{\varphi  } \right\| \le 1}{\max}\left\{ \bm{\varphi  }\cdot \sum_{k=1}^K{\mathbf{\hat{V}}^k}- \sum_{k=1}^K \underset{\vx\in W^{\star}}{\max}\bm{\varphi}\cdot \vx \right\}  \\
  \le &\sum_{k=1}^K{\left\{ \bm{\phi}^k\cdot \mathbf{\hat{V}}^k-g^*\left( \bm{\phi^k  } \right) \right\}}- K  g(\vV^{\mu^{\star}}_1(s_1)) +\rho \sum_{k=1}^K{\left\{ \bm{\varphi}^k\cdot \mathbf{\hat{V}}^k-\underset{\vx\in W^{\star}}{\max}\bm{\varphi}^k\cdot \vx \right\}}+\cO\left( \rho \sqrt{dH^2K} \right) \\
  \overset{\left( i \right)}{\le}&\sum_{k=1}^K\left[ \vtheta^k \cdot (\mathbf{\hat{V}}^k -\vV^{\mu^{\star}}_1(s_1) )\right]+\cO\left( \rho \sqrt{dH^2K} \right) \\
  \overset{\left( ii \right)}{\le}&\sum_{k=1}^K\left[ \vtheta^k \cdot (\mathbf{\hat{V}}^k -\vV^{\mu^{k}}_1(s_1) )\right]+\sum_{k=1}^K\left[ \vV^{\mu^{k}}_1(s_1)- V^{k}_1(s_1) \right]+\cO\left( \rho \sqrt{dH^2K} \right) \\
  \overset{\left( iii \right)}{\le}&\cO\left( \rho \sqrt{\min\{d,S\}dH^3SAK\iota} \right), 
\end{align*}
where $(i)$ is by the update \DODU{} and inequality~\eqref{equ:fenchel-property}~\eqref{equ:satisfiable},  $(ii)$ is by optimism and $(iii)$ is by Lemma~\ref{lem:sum-of-bonus}.

\paragraph{Bound constraint violation in constrained MDP}

We need to define a few notations. Recall that  return vectors $\vr_h(s, a)$ live in a space $\mathbb{R}^d$, and  $\W^{\star} \subseteq \mathbb{R}^d$ denotes the set of desired expected future return. 

Note that $ \dist\left( \vW^K,W^{\star} \right)  \ge 0$; hence we have $K \left[g(\vW^K)-  g(\vV^{\mu^{\star}}_1(s_1)) \right] = \cO\left( \rho \sqrt{\min\{d,S\}dH^3SAK\iota} \right)$. To bound the constraint violation separately, we only need the lemma below.

\begin{figure}
	\centering     
	\subfigure[]{\label{fig:a}\includegraphics[width=60mm]{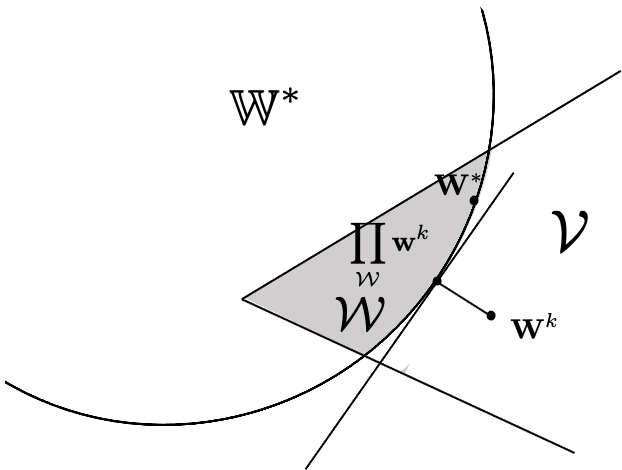}}
	\subfigure[]{\label{fig:b}\includegraphics[width=60mm]{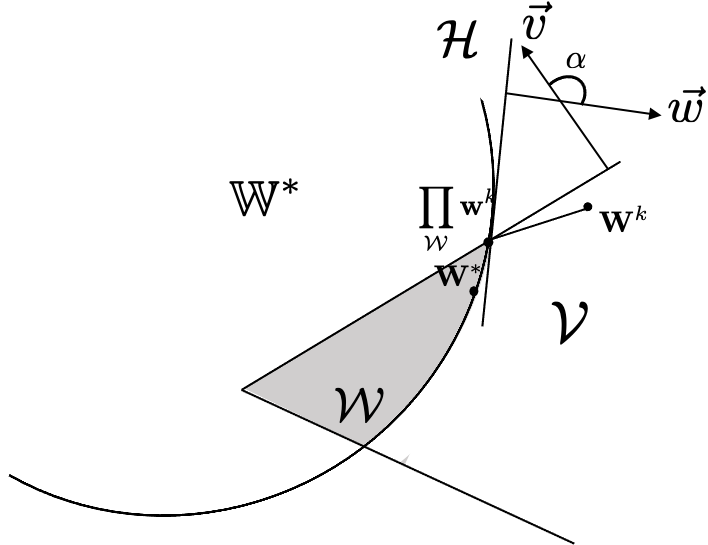}}
	\caption{	 The projected point is : (a)In the interior ; (b)on the boundary.}
    \label{fig:proof}
\end{figure}

\begin{lemma}
Let $\vW^{\star}$ denote a return vector in set $\cW$ that achieves the lowest cost,  i.e. $\forall \  \vW \in \cW, g(\vW) \ge g(\vW^{\star})$,   then under Assumption~\ref{assump:angle},
\begin{align*}
	\left[g(\vW^K)-   g( \vW^{\star} ) \right]   \ge  -  \dist\left( \vW^K, \W^{\star} \right)  /\gamma_{\min}.
\end{align*}
\end{lemma}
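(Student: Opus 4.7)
The plan is to translate the lemma into a purely geometric statement: exhibit a point $\vW^c \in \cW$ with $\|\vW^K - \vW^c\| \le \dist(\vW^K, \W^\star)/\gamma_{\min}$. Given such a $\vW^c$, the lemma follows immediately from $1$-Lipschitzness of $g$ together with the minimality of $\vW^\star$ on $\cW$, namely $g(\vW^K) \ge g(\vW^c) - \|\vW^K-\vW^c\| \ge g(\vW^\star) - \dist(\vW^K,\W^\star)/\gamma_{\min}$.

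Two easy subcases clear the way. First note $\vW^K = \frac{1}{K}\sum_k \hat{\vV}^k \in \cV$ by convexity of $\cV$ (the small sampling error can be absorbed as elsewhere in the paper). If $\vW^K \in \W^\star$ then $\vW^K \in \cW$ and the inequality is trivial. Otherwise let $\vW^p = \Pi_{\W^\star}(\vW^K)$; if $\vW^p \in \cV$, take $\vW^c := \vW^p$, and $\gamma_{\min}\le 1$ finishes the case (Figure~\ref{fig:proof}(a)).

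The essential case is $\vW^p \notin \cV$ (Figure~\ref{fig:proof}(b)), where I would set $\vW^c := \Pi_{\cW}(\vW^K)$. A local-projection argument shows $\vW^c$ cannot lie in the relative interior of $\W^\star$ (else $\cW = \cV$ locally near $\vW^c$, forcing $\vW^c = \vW^K$), nor in the relative interior of $\cV$ (else $\vW^c = \vW^p \notin \cV$); so $\vW^c \in \partial\cV \cap \partial\W^\star = \partial\cW$. First-order optimality for projection onto the intersection then gives the KKT decomposition $\vec{v} := \vW^K - \vW^c = \lambda \vec{a} + \mu \vec{b}$ with $\lambda, \mu \ge 0$ and $\vec{a}, \vec{b}$ unit outward normals to $\W^\star$ and $\cV$ at $\vW^c$. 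The two supporting halfspaces supply $\vec{a}\cdot \vec{v} \le \dist(\vW^K, \W^\star)$ (since $\W^\star$ sits in the halfspace $\{x : \vec{a}\cdot(x-\vW^c)\le 0\}$) and $\vec{b}\cdot\vec{v} \le 0$ (since $\vW^K \in \cV$). Writing $\alpha := \angle(\vec{a},\vec{b})$ and decomposing $\vec{v}$ in the $2$D plane spanned by $\vec{a},\vec{b}$ along $\vec{a}$ and its in-plane perpendicular yields $\|\vec{v}\|^2 = (\vec{a}\cdot\vec{v})^2 + \mu^2 \sin^2\alpha$; combining with the relation $\lambda \cos\alpha + \mu \le 0$ and maximizing gives $\|\vec{v}\| \le (\vec{a}\cdot\vec{v})/\sin\alpha$. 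Assumption~\ref{assump:angle} then converts $\sin\alpha \ge \sin\alpha_{\max} = \gamma_{\min}$ into the desired bound.

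The main hurdle is justifying the KKT decomposition $\vec{v} = \lambda\vec{a} + \mu\vec{b}$, which requires a constraint qualification at the boundary point; this is precisely what Assumption~\ref{assump:angle} is designed to provide by forbidding the tangential configuration $\alpha = \pi$. A minor consistency check is that the inequality $\lambda\cos\alpha + \mu \le 0$ together with $\lambda, \mu \ge 0$ automatically forces $\cos\alpha < 0$ (hence $\alpha > \pi/2$) whenever $\vec{v} \ne 0$, matching the built-in lower bound $\alpha_{\max} \ge \pi/2$ in the assumption.
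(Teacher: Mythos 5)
Your proposal is correct and follows essentially the same route as the paper: reduce to bounding $\|\vW^K-\Pi_{\cW}(\vW^K)\|$ by $\dist(\vW^K,\W^\star)/\gamma_{\min}$ via Lipschitzness of $g$, split into the case where the relevant projection already certifies $\dist(\vW^K,\cW)=\dist(\vW^K,\W^\star)$ and the boundary case where the supporting hyperplane of $\W^\star$ at $\Pi_{\cW}(\vW^K)$ plus the angle bound yields the factor $\sin(\pi-\alpha)\ge\gamma_{\min}$. Your explicit KKT decomposition $\vec{v}=\lambda\vec{a}+\mu\vec{b}$ is just a more careful rendering of the paper's statement that $\vec{v}$ lies in the cone of support vectors, and your remark that Assumption~\ref{assump:angle} supplies the needed constraint qualification is a point the paper leaves implicit.
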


\begin{proof}

  Note that if $\vW^K \in \cW$, then by optimality of $\vW^{\star}$,
  \begin{align*}
	\left[g(\vW^K)-   g( \vW^{\star} ) \right]   \ge   0 \ge -  \dist\left( \vW^K, \W^{\star} \right)  /\gamma_{\min}.
  \end{align*}

  We focus on the case when $\vW^K  \not\in \cW$.  By convexity of $\cW$, there exists a unique $\prod_\cW \vW^K =  \argmin_{\vW \in \cW} \dist(\vW, \vW^k)$.  Again we study two cases as illustrated in Figure~\ref{fig:proof}: whether or not the projected point $\prod_{\cW} \vW^K$ is in the interior of $\cV$. 
  
  \paragraph{Case 1:in the  interior.} Note that the projection can be described as an optimization operation: $\prod_\cW \vW^K  = \argmin_{w \in \cV, w \in \W^{\star}} \dist(\vW^K,  w)$. When the projected point is in the interior of $\cV$, we know that the constraint $w \in \cV$ is not active at the optimal solution. Hence by complementary slackness, $\dist\left( \vW^K, \cW \right) = \min_{\vW \in \cV, \vW  \in \W^{\star}} \dist(\vW^K,  \vW ) = \min_{\vW  \in \W^{\star}} \dist(\vW^K,  \vW ) = \dist\left( \vW^K, \W^{\star} \right) $. Then the inequality simply follows by    
  \begin{align*}
  	\left[g(\vW^K)-   g( \vW^{\star} ) \right]   =& g(\vW^K)-  g(\prod_\cW \vW^K)  + g(\prod_\cW \vW^K)  -  g( \vW^{\star} ) \\
  	\ge&  - \dist(\vW^K, \cW) +  0 \ge  - \dist(\vW^K, \W^{\star}) 
  \end{align*}

  \paragraph{Case 2:on the boundary.} In this case, the distance $\dist\left( \vW^K, \cW \right)$ may not equal $\dist\left( \vW^K, \W^{\star} \right) $.   Instead, we know by convexity and Assumption~\ref{assump:angle} that the support hyperplanes of  $\cV$ and $\W^{\star}$ at $\prod_\cW \vW^K $ intersects with an angle $\alpha(\prod_\cW \vW^K) < \pi $, where $\alpha$ is defined in Assumption~\ref{assump:angle}.   By optimality of the $\prod_\cW \vW^K$ in solving $ \min_{\vW \in \cV, \vW  \in \W^{\star}} \dist(\vW^K,  \vW ) $, the vector $\prod_\cW \vW^K  \to  \vW^K$ must lie in the cone formed by the support vectors. Further by $\vW^K \in \cV$, we have $\alpha(\prod_\cW \vW^K) \ge \pi/2 $.
  
  Then we know that
  \begin{align*}
    \left[g(\vW^K)-   g( \vW^{\star} ) \right]   =& g(\vW^K)-  g(\prod_\cW \vW^K)  + g(\prod_\cW \vW^K)  -  g( \vW^{\star} ) \\
    \ge&  - \dist(\vW^K, \prod_\cW \vW^K) +  0
  \end{align*}
  where the second line follows by the Lipschitzness of $g$. Denote $\mathcal{H} = \mathcal{H}(\prod_\cW \vW^K)$ as the hyperspace that is supported by  the support vector of $W^{\star}$ at $\prod_\cW \vW^K$.  Then by the fact that $\mathcal{W^{\star}} \subseteq \mathcal{H} $ and assumption \ref{assump:angle}, we get
  \begin{align}
    \dist(\vW^K, \W^{\star}) \ge \dist(\vW^K, \mathcal{H}) \ge \dist(\vW^K,  \prod_\cW \vW^K) \sin(\pi - \alpha(\prod_\cW \vW^K))
  \end{align}
  Rearrange and we get
  \begin{align*}
    \left[g(\vW^K)-   g( \vW^{\star} ) \right]   \ge  -  \dist\left( \vW^K,\W^{\star} \right)  /\gamma_{\min}.
  \end{align*}

\end{proof}

With the above lemma, we see that if $\rho = 2 / \gamma_{\min}$, then 

\begin{align*}
  \frac{1}{ \gamma_{\min}}  K \dist\left( \vW^K,W^{\star} \right)  &\le K\left[g(\vW^K)-  g(\vV^{\mu^{\star}}_1(s_1))+ \frac{2}{ \gamma_{\min}} \dist\left( \vW^K,\W^{\star} \right) \right] \\
  & = K\left[g(\vW^K)-  g(\vV^{\mu^{\star}}_1(s_1))+ \rho \dist\left( \vW^K,\W^{\star} \right) \right]
  \le \cO\left( \rho\sqrt{\min\{d,S\}dH^3SAK\iota} \right).
\end{align*}

Divide both side by $\rho/2$ and we get the desired result.

\end{proof}

\subsection{Necessity of nonsingular intersection}\label{sec:intersect-proof}

  \begin{figure}[t]
	\centering
	\includegraphics[width=0.5\textwidth]{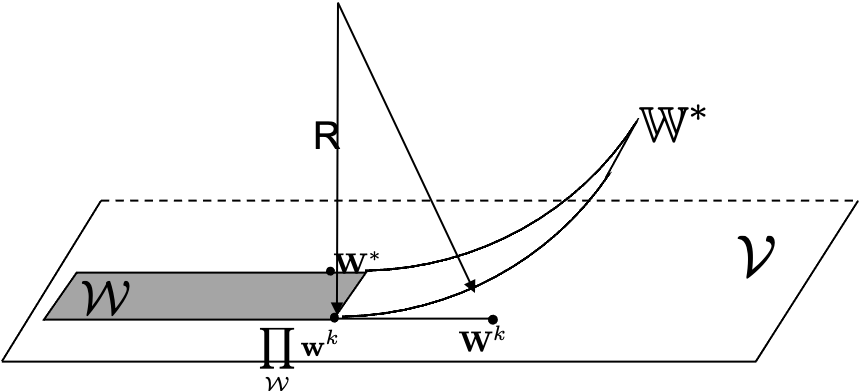}
	\caption{If the support hyperplains intersect with angle $0$ (exact cut), then the point $\vW^k$ can be arbitrarily close to the set $\W^\star$ while remaining away from $\cW$.}\label{fig:intersection2}
\end{figure}

In this section, we explain the high level intuition of why the intersection between the constrain set $\W^{\star}$ and the feasible return vectors $\cV$ needs to be nonsingular. The key problem arises from the fact that $\W^\star$ is defined in space $\mathbb{R}^d$ where the set of feasible return vectors $\cV$ may not be of full dimension. In such cases, for the actual achievable constrain set of interest $\cW = \W^{\star} \cap \cV$, there are too much freedom in selecting $\W$ as long as its elements on $\cV$ remains fixed. In particular,  an achievable return vector $\vW^K$ can be very far away from the constrained feasible set $\cW$, where as being arbitrarily close to the set $\W^\star$. The process is illustrated in Figure~\ref{fig:intersection2} by sending the radius $R$ to infinity. Note that $\cW$ remains unchanged in this process. Since the cost is measured by the distance to $\W^\star$ instead of to the actual set of interest $\cW$, the deviation from $\vW$ to $\cW$ cannot be reduced to 0 with any fixed algorithm given that point $\vW$ is already very close to the target set $\W^{\star}$.

Quantifying the level of non-singularity is necessary, whereas lower bounding the angle at intersection is one natural way of many to do so.

\end{document}